\documentclass{article}

\usepackage{microtype}
\usepackage{graphicx}
\usepackage{subcaption}
\usepackage{booktabs} 
\usepackage{pifont}
\usepackage{xcolor}
\usepackage{makecell}
\usepackage{dsfont}

\usepackage{hyperref}

\usepackage[arxiv]{2026arxiv}

\usepackage{amsmath}
\usepackage{amssymb}
\usepackage{mathtools}
\usepackage{amsthm}

\usepackage[capitalize,noabbrev]{cleveref}

\theoremstyle{plain}
\newtheorem{theorem}{Theorem}[section]

\newtheorem{lemma}[theorem]{Lemma}

\theoremstyle{definition}
\newtheorem{definition}[theorem]{Definition}
\newtheorem{assumption}[theorem]{Assumption}
\theoremstyle{remark}
\newtheorem{remark}[theorem]{Remark}

\newcommand{\cmark}{\textcolor{green!70!black}{\ding{51}}} 
\newcommand{\xmark}{\textcolor{red!70!black}{\ding{55}}}   

\usepackage[textsize=tiny]{todonotes}

\dashtitlerunning{Constrained Bilevel Reinforcement Learning}

\begin{document}

  \dashtitle{Sample Complexity Analysis for \\ Constrained Bilevel Reinforcement Learning}



  \dashsetsymbol{equal}{*}

  \begin{dashauthorlist}
    \dashauthor{Naman Saxena}{yyy}
    \dashauthor{Vaneet Aggarwal}{yyy}
  \end{dashauthorlist}

  \dashaffiliation{yyy}{Purdue University, West Lafayette, US}

  \dashcorrespondingauthor{Naman Saxena}{saxen147@purdue.edu}

  \dashkeywords{Bilevel RL, Constrained Optimization, Non-smooth Analysis, Sample Complexity}

  \vskip 0.3in



\printAffiliationsAndNotice{}  

\begin{abstract}
Several important problem settings within the literature of reinforcement learning (RL), such as meta-learning, hierarchical learning, and RL from human feedback (RL-HF), can be modelled as bilevel RL problems. A lot has been achieved in these domains empirically; however, the theoretical analysis of bilevel RL algorithms hasn't received a lot of attention. In this work, we analyse the sample complexity of a constrained bilevel RL algorithm, building on the progress in the unconstrained setting. We obtain an iteration complexity of $O(\epsilon^{-2})$ and sample complexity of $\tilde{O}(\epsilon^{-4})$ for our proposed algorithm, Constrained Bilevel Subgradient Optimization (CBSO). We use a penalty-based objective function to avoid the issue of primal-dual gap and hyper-gradient in the context of a constrained bilevel problem setting. The penalty-based formulation to handle constraints requires analysis of non-smooth optimization. We are the first ones to analyse the generally parameterized policy gradient-based RL algorithm with a non-smooth objective function using the Moreau envelope.   
\end{abstract}

\section{Introduction}
The rising popularity of reinforcement learning from human feedback (RLHF) \cite{chaudhari2025rlhf}, used extensively for training large language models (LLMs), has brought attention to the underlying framework of bilevel RL \cite{shen2025principled}. The use of bilevel RL is not limited to RLHF but could also be used to formulate other problems, such as hyper-parameter tuning \cite{franceschi2018bilevel}, meta-learning in RL \cite{finn2017model}, and hierarchical task decomposition in RL \cite{barto2003recent}. There are plenty of research works \cite{Zhang_Chen_Huang_Li_Yang_Zhang_Wang_2020} that explore the empirical application of the bilevel RL framework in various forms; however, \cite{gaur2025sample} was one of the first to investigate the unconstrained bilevel RL framework from a theoretical point of view and provide a sample complexity guarantee. The constrained bilevel RL framework is yet to be explored (see Table \ref{tab:comparsion1}) and would be useful in scenarios where we want to enforce specific fixed constraints to address safety-related and ethical issues while training LLM with the RLHF framework. Therefore, in this work, we provide a suitable formulation for the constrained bilevel RL problem with inequality constraints at the inner level, and the sample complexity guarantees for the resulting algorithm.

Our constrained bilevel RL formulation uses a penalty-based objective function like \cite{gaur2025sample}. However, the analysis presented in \cite{gaur2025sample} doesn't work for us because of the introduction of non-smoothness in the objective function. Further, \cite{gaur2025sample} uses results from \cite{kwon2024on} to quantify the error introduced by the penalty-based formulation. However, results from \cite{kwon2024on} don't apply to our constrained setting because of the non-existence of a closed-form hypergradient. Therefore, in the absence of such results, we first prove how a penalty-based formulation could still help us get approximate solutions with bounded constraint violation.   

\begin{table*}[ht]
    \centering
    \caption{ Comparison of Bilevel RL works}
    \begin{tabular}{|c|c|c|c|c|c|}
    \hline
     References & Inner Level & \makecell{Inner Level \\Constraints} & Hessian Free &\makecell{Iteration \\Complexity} & \makecell{Sample \\ Complexity} \\    
    \hline
     \cite{hong2023two} & Weakly Convex & \xmark & \xmark &$O(\epsilon^{-2.5})$ & \xmark\\    
     \cite{chakraborty2024parl} & Non-convex &\xmark & \xmark & $O(\epsilon^{-1})$ & \xmark\\    
     \cite{NEURIPS2024_e66309ea} & Non-convex &\xmark & \cmark &$O(\epsilon^{-2})$ & \xmark\\    
     \cite{yang2025bilevel}& Non-convex&\xmark & \cmark & $O(\epsilon^{-1.5})$ & $O(\epsilon^{-3.5})$\\    
     \cite{shen2025principled} & Non-convex &\xmark & \cmark&$O(\epsilon^{-1})$ &\xmark\\    
     \cite{gaur2025sample} & Non-convex &\xmark & \cmark &$O(\epsilon^{-1})$& $O(\epsilon^{-3})$\\    
     Ours & Non-convex &\cmark & \cmark & $O(\epsilon^{-2})$ & $O(\epsilon^{-4})$\\    
    \hline
    \end{tabular}
    \label{tab:comparsion1}
\end{table*}

Our work is the first to address the constrained bilevel RL problem and provide theoretical guarantees. Most of the other works \cite{jiang2024primal,tsaknakis2022implicit, kornowski2024first, khanduri2023linearly} in the literature address constrained bilevel optimization with a strongly convex or convex inner-level objective function.  \cite{liu2023value} removes the convexity assumption but doesn't provide any non-asymptotic analysis. Research works that assume convexity can't be used to address the generally parameterized RL setting, as the inner level is non-convex. Constrained non-convex optimization problems prevent us from using the primal-dual formulation \cite{tsaknakis2022implicit} because of the primal-dual gap, which doesn't exist with the assumption of convexity and linear constraints. Therefore, because of the primal-dual gap for the non-convex inner-level problem, we had to use a penalty-based method, which introduced non-smoothness in the objective function. Non-smoothness leads to a subgradient descent-based algorithm. We solve the issue of non-smooth convergence analysis by using the envelope technique.

Further, the structure of the bilevel RL problem requires global optimality analysis at the inner level, which can't be obtained in the absence of the Quadratic Growth (QG) condition if subgradient based algorithm are analyzed without any smoothing or envelope technique. Randomized smoothing can be used to analyze a non-smooth function. However, randomized smoothing doesn't ensure that the global optima of the original objective function are preserved. On the other hand, Moreau envelope of an objective function preserves the global optima (Lemma \ref{lm:moreau}). Therefore, we solve the global optimality analysis by using the Moreau envelope. By assuming Kurdyka-Lojasiewics (KL) condition for the inner level objective function, we obtained the PL condition for the Moreau envelope (Lemma \ref{lm:app-klpl}), which in turn helped us to get the QG condition. The use of the Moreau envelope doesn't automatically help us to ensure the descent condition for the update rule. We ensure descent condition using $\rho$-hypomonotonicity for the outer level (Lemma \ref{lm:app-outer}) and inner level problems (Lemma \ref{lm:app-inner}). Hence, through our analysis, we overcome several challenges presented by the constrained bilevel RL setting.

The following are the main contributions of our work:
\begin{enumerate}
    \item We propose an algorithm for the constrained bilevel RL setting called \textbf{C}onstrained \textbf{B}ilevel \textbf{S}ubgradient \textbf{O}ptimization. (CBSO).
    \item We provide the first-ever sample complexity guarantees for the constrained bilevel RL setting (see Table \ref{tab:comparsion1}). We obtain an iteration complexity of $O(\epsilon^{-2})$ and sample complexity of $O(\epsilon^{-4})$ for the CBSO algorithm.
    \item Our sample complexity analysis is one of the first to cover non-smooth optimization using subgradient descent in RL literature for policy gradient-based algorithms with general parameterization. 
    \item Our sample complexity result in the RL setting could also be specialized to the constrained bilevel optimization setting and is the first to remove convexity in the inner level objective function and constraints and obtain results for the non-convex case (see Table \ref{tab:comparsion2}).  
\end{enumerate}

\section{Related Works}
\subsection{Bilevel optimization}
Bilevel optimization has been extensively studied for hyperparameter optimization, meta-learning, and hierarchical decision-making. Early work, such as \cite{ghadimi2018approximation}, established sample-complexity guarantees for hypergradient-based methods under convex lower-level assumptions, while \cite{chen2021closing, chen2022single} developed general stochastic and single-timescale bilevel frameworks. Iteration complexity was further improved using momentum-based methods \cite{yang2021provably, khanduri2021near}, though these approaches rely on Hessian-based updates and are computationally expensive. To address this, several Hessian-free and fully first-order methods have been proposed \cite{li2022fully, sow2022convergence, kwon2023fully, liu2022bome}, along with penalty-based formulations that simplify algorithm design \cite{shen2023penalty, kwon2024on} and value-function reformulations \cite{liu2021value}. These advances lay the foundation for the algorithm development in bilevel RL.

\subsection{Bilevel RL}
Bilevel RL has emerged as a natural framework for hierarchical decision-making in meta-RL, incentive design, and RLHF. Early work established stochastic approximation and two-timescale bilevel frameworks for RL, with finite-time guarantees and applications to actor–critic methods \cite{hong2023two}. Penalty-based formulations tailored to RL and RLHF were later proposed in \cite{shen2025principled}, providing convergence guarantees under smoothness and regularity assumptions. Subsequent studies analyzed statistical and modeling challenges in bilevel RL. \cite{gaur2025sample} derived state-of-the-art sample-complexity bounds for unconstrained bilevel RL, while contextual and hypergradient-based extensions were explored in \cite{NEURIPS2024_e66309ea}. More recent works relax lower-level convexity assumptions \cite{yang2025bilevel, chakraborty2024parl}, yet existing analyses largely assume unconstrained inner problems and do not address constrained bilevel RL.

\subsection{Constrained bilevel optimization}
Another active line of research focuses on bilevel optimization with explicit lower-level constraints. Early approaches employ hypergradient methods \cite{tsaknakis2022implicit,xiao2023alternating}, assuming strong convexity in the inner-level problem. \cite{xu2023efficient} removed the strong convexity assumption and provided an algorithm based on hypergradient along with Clarke subdifferential. The works represent a significant contribution towards constrained bilevel optimization. However, they used computationally intensive Hessian-based hypergradient algorithms. 

Recently, some works sought to address the computational burden and studied hessian free algorithms. \cite{kornowski2024first} developed a first-order algorithm for the constrained bilevel optimization setting with linear inequality constraints and a strongly convex inner-level problem. A proximal Lagrangian-based approach inspired by Moreau envelop was given by \cite{yao2024constrained, yao2024overcoming} by removing the assumption of strong convexity for the inner-level problem.

Most of the work in the constrained bilevel optimization setting considers the inner-level problem and constraints to be convex (see Table \ref{tab:comparsion2}). \cite{liu2023value} removed the convexity assumption but doesn't provide a non-asymptotic convergence guarantee. Hence, there isn't sufficient algorithmic development in the literature to help in developing a generally parameterized policy gradient-based algorithm for bilevel RL, as bilevel RL requires a non-convex inner level problem. Our work is the first to provide an algorithm and sample complexity guarantee for the constrained bilevel RL problem (see Table \ref{tab:comparsion1}). Further, our guarantee also holds true for the constrained bilevel optimization setting (see Table \ref{tab:comparsion2}) by removing convexity assumptions with some mild unbiased estimator assumptions. In the next section, we will discuss our framework to address the constrained bilevel RL problem. 

\section{Problem Formulation}

\subsection{Constrained Markov Decision Process (CMDP)}
A constrained Markov Decision Process is characterized by the tuple $\{\mathcal{S}, \mathcal{A}, r, c, \pi, \gamma, \mathcal{P}\}$. Here, $\mathcal{S} \subseteq \mathbb{R}^{n}$ represents continuous state space, $\mathcal{A}\subseteq \mathbb{R}^{m}$ represents continuous actions space. Further, $r_{x}:\mathcal{S}\times\mathcal{A} \mapsto [-R, R]$ is the reward function parameterized by $x$, $c:\mathcal{S} \times \mathcal{A} \mapsto [-R_c, R_c]$ is the cost function, $\mathcal{P}(\cdot|s,a): \mathcal{S}\times\mathcal{A} \mapsto \mu_1(\cdot)$ is the transition probability function where $\mu_1(\cdot)$ is a probability measure over the space $\mathcal{S}$, $\pi_{y}(\cdot|s): \mathcal{S} \mapsto \mu_2(\cdot)$ is the stochastic policy parameterized by $y$ where $\mu_2(\cdot)$ is a probability measure over the space $\mathcal{A}$ and $\gamma$ is the discount factor. Under the CMDP setting, we try to solve for the problem in Eq. \eqref{eq:obj}.
\begin{equation}\label{eq:obj}
 \max_{y} \mathbb{E}_{d^{\pi}}[Q^{\pi_y}_{r_{x}}(s,a)] \quad \textbf{s.t.} \quad \mathbb{E}[Q^{\pi_y}_c(s,a)] \leq c_0 
\end{equation}
Here, $Q$ is the value function defined as $Q^{\pi_y}_g(s,a) = \mathbb{E}^{\pi_y}[\sum_{t=0}^{\infty}\gamma^{t}g(s_t,a_t)|s_0 = s, a_0 = a]$ and $d^{\pi_y}(s,a) = d^{\pi_y}_{s}(s)\pi_y(a|s)$, where $d^{\pi_y}_s$ is the state occupancy probability density. Eq. \eqref{eq:obj} is used to define the inner level problem of the constrained bilevel RL problem. 

\subsection{Constrained Bilevel Reinforcement Learning}
The constrained bilevel RL problem is described by Eq. \eqref{eq:biobj} where $f(x,y)$ is a non-convex outer level objective function and $g(x,y)$ is a non-convex inner level objective function. 
\begin{equation}\label{eq:biobj}
\begin{split}
&\min_{x} f(x, y^{*}(x)) \\ 
&\textbf{s.t}\;\; y^{*}(x) \in \Big(\arg\min_y g(x,y) \;\;\textbf{s.t}\;\; h(y) < c_0\Big)    
\end{split}
\end{equation}
For RL-HF, $f(x,y)$ could be defined as in Eq. \eqref{eq:rlhf}, where $d=\{s_t,a_t\}_{t=0}^{T-1}$ is state-action trajectory. $(d_0,l_0,d_1,l_1)$ represents pair of trajectory with $l_1=1,l_0=0$ denoting $d_1$ is preferred over $d_0$ and vice-versa and $P(d_a \succ d_b)$ is defined as Eq. \eqref{eq:pdadb} with $R_x(d)=\sum_{s_t,a_t\in d}r_x(s_t,a_t)$.
\begin{equation}\label{eq:rlhf}
\begin{split}
f(x,y) =& -\mathbb{E}_{d_0,d_1 \sim \pi_y}\Big[l_0\log P(d_0\succ d_1|r_x) \\
&+ l_1\log P(d_1\succ d_0|r_x)\Big]    
\end{split}
\end{equation}
\begin{equation}\label{eq:pdadb}
P(d_a\succ d_b) = \frac{\exp(R_x(d_a))}{\exp(R_x(d_a)) + \exp(R_x(d_b))}
\end{equation}
Here, $g(x,y)$ is defined as in Eq. \eqref{eq:gxy} where $h_{\pi_y,\pi_{ref}}(s_i, a_i) = \log\Big(\frac{\pi_y(a_i|s_i)}{\pi_{ref}(a_i | s_i)}\Big)$ is a term used for regularization and $\pi_{ref}$ is the reference policy.
\begin{equation}\label{eq:gxy}
g(x,y) =  -\mathbb{E}[Q^{\pi_y}_{r_{x}}(s,a)] - \beta\mathbb{E}[\sum^{\infty}_{t=0}\gamma^{t}h_{\pi_y,\pi_{ref}}(s_t,a_t)]    
\end{equation}
Further, $h(y)$ is a non-convex function and is parameterized by policy parameter $y$ as defined in Eq. \eqref{eq:hy}
\begin{equation}\label{eq:hy}
h(y) = \mathbb{E}[Q^{\pi_y}_c(s,a)]    
\end{equation}
The formulation in Eq. \eqref{eq:biobj} is simplified to yield a more feasible optimization problem. We will discuss the new formulation in the next section.

\section{Proposed Approach}
In this section, we will discuss how the original constrained bilevel problem could be reformulated to yield a practical algorithm. Our reformulation given as Eq. \eqref{eq:biobj2} is inspired by the bilevel optimization work \cite{kwon2024on}.
\begin{equation}\label{eq:biobj2}
\begin{split}
&\min_{x,y} f(x,y) \\
&\textbf{s.t}\;\Big(g(x,y) \;\textbf{s.t}\; h(y) < c_0\Big) \leq \Big(\min_z g(x,z) \;\textbf{s.t}\; h(z) < c_0\Big)        
\end{split}
\end{equation}
The constrained optimization problem in Eq. \eqref{eq:biobj2} is equivalent to the one in Eq. \eqref{eq:biobj}. However, we can't use the primal-dual approach to solve the problem because of the primal-dual gap. Therefore, we resort to penalty-based formulation in Eq. \eqref{eq:biobj3} to incorporate the nested constraints.
\begin{equation}\label{eq:biobj3}
\begin{split}
\min_{x,y} &f(x,y) + \frac{1}{\sigma_1}\Big(g(x,y) + \frac{h^{+}(y)}{\sigma_3}\Big)\\ 
&-\frac{1}{\sigma_1}\bigg(\min_z \Big(g(x,z) + \frac{h^{+}(z)}{\sigma_2}\Big)\bigg)   
\end{split}
\end{equation}
Here, $h^{+}(y) = \max(h(y)-c_0,0)$ and $\sigma_1,\sigma_2,\sigma_3 >0$ are scalar coefficients. Also, we set $\sigma_3 \neq \sigma_2$ in order to ensure arbitrarily small constraint violation according to Lemma \ref{lm:approx}. Moreover, in Lemma \ref{lm:approx}, we prove how the penalty-based formulation (Eq. \eqref{eq:biobj3}) approximately solves the problem in Eq. \eqref{eq:biobj2}. 

\begin{lemma}\label{lm:approx}
 Let $\bar{x}$ be the $\epsilon$-optimal solution of the penalty-based objective (Eq. \eqref{eq:app-biobj3}). Then, $\bar{x}$ is also an $\epsilon'$-optimal solution of the relaxed objective (Eq. \eqref{eq:biobj4}) with at most $\epsilon_\lambda$ violation of the constraints. Here, $C_f=\max_{x,y}|f(x,y)|$,  $C_g=\max_{x,y}|g(x,y)|$ and $C_h=(\max_{y}|h(y)|+|c_0|)$. Further, $\epsilon_{\lambda} = \max\{2C_g\sigma_2,\; 2C_f\sigma_1 + 2C_g(\sigma_2/\sigma_3),\; 2C_f\sigma_1\sigma_3 + 2C_g\sigma_3 \}$. Also, $\epsilon'=(1/\sigma_1)( \epsilon_\lambda(1+1/ \sigma_2))+\epsilon$.
\begin{equation}\label{eq:biobj4}
\begin{split}
\min_{x,y} f(x,y) \;\; \textbf{s.t}&\;\; (g(x,y) \;\;\textbf{s.t}\;\; h(y) < c_0 + \epsilon_\lambda) \\
&\leq (\min_z g(x,z) \;\;\textbf{s.t}\;\; h(z) < c_0 + \epsilon_\lambda) + \epsilon_\lambda    
\end{split}    
\end{equation}
\end{lemma}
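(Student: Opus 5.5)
We argue by comparing optimality conditions of the penalty objective with the value-function constraint of Eq. \eqref{eq:biobj4}, in three steps: bound the constraint violation at $\bar y$, bound the suboptimality gap of $\bar y$ in the inner problem, and transfer $\epsilon$-optimality to the relaxed problem. Let $\Phi_\sigma(x,y)=f(x,y)+\frac{1}{\sigma_1}\big(g(x,y)+\frac{h^+(y)}{\sigma_3}\big)-\frac{1}{\sigma_1}v_{\sigma_2}(x)$ with $v_{\sigma_2}(x)=\min_z\big(g(x,z)+h^+(z)/\sigma_2\big)$, and let $(\bar x,\bar y)$ be $\epsilon$-optimal for $\Phi_\sigma$. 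Two elementary facts will be used repeatedly. First, any minimizer $z^\star$ of $g(x,\cdot)+h^+/\sigma_2$ satisfies $h^+(z^\star)\le 2C_g\sigma_2$, since otherwise a feasible $z$ with $h^+(z)=0$ (assumed to exist) gives a smaller value. This yields the first term $2C_g\sigma_2$ of $\epsilon_\lambda$ and shows $v_{\sigma_2}(x)$ is attained at a point violating the constraint by at most $\epsilon_\lambda$. Second, $0\le$ the bracketed penalty gap, up to the $\sigma_3$ vs.\ $\sigma_2$ mismatch, so $\Phi_\sigma$ is bounded below by roughly $-C_f$.

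\emph{Step 1 (constraint violation at $\bar y$).} We compare $\Phi_\sigma(\bar x,\bar y)$ with $\Phi_\sigma(\bar x,z^\star(\bar x))$. $\epsilon$-optimality gives
\[
f(\bar x,\bar y)+\tfrac{1}{\sigma_1}\big(g(\bar x,\bar y)+h^+(\bar y)/\sigma_3\big)\le f(\bar x,z^\star)+\tfrac{1}{\sigma_1}\big(g(\bar x,z^\star)+h^+(z^\star)/\sigma_3\big)+\epsilon .
\]
Multiplying by $\sigma_1\sigma_3$ and bounding $|f|\le C_f$, $|g|\le C_g$, and $h^+(z^\star)/\sigma_3\le (\sigma_2/\sigma_3)\cdot 2C_g$ gives $h^+(\bar y)\lesssim 2C_f\sigma_1\sigma_3+2C_g\sigma_3$, the third term of $\epsilon_\lambda$. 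Here $\epsilon$ is absorbed or assumed small.

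\emph{Step 2 (inner suboptimality).} From the same inequality, without multiplying by $\sigma_3$, we get
\[
g(\bar x,\bar y)-g(\bar x,z^\star)\le 2C_f\sigma_1+h^+(z^\star)/\sigma_3\le 2C_f\sigma_1+2C_g\sigma_2/\sigma_3 .
\]
Next we relate $g(\bar x,z^\star)$ to $\min_z\{g(\bar x,z): h(z)<c_0+\epsilon_\lambda\}$. Since $z^\star$ is feasible for the relaxed constraint, $g(\bar x,z^\star)\ge$ that min minus the penalty slack. Conversely, $v_{\sigma_2}\le$ the min over the exactly feasible set. This yields the $\epsilon_\lambda$ additive slack in Eq. \eqref{eq:biobj4}, i.e. the second term of $\epsilon_\lambda$.

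\emph{Step 3 (optimality transfer).} Let $(x',y')$ be any feasible point of Eq. \eqref{eq:biobj4}. We bound $\Phi_\sigma(x',y')\le f(x',y')+\frac{1}{\sigma_1}\big(\epsilon_\lambda+\epsilon_\lambda/\sigma_2\big)$, using the value gap at most $\epsilon_\lambda$, the penalty $h^+(y')/\sigma_3$ controlled, and $v_{\sigma_2}$ within $\epsilon_\lambda/\sigma_2$ of the relaxed value. From below, $\Phi_\sigma(\bar x,\bar y)\ge f(\bar x,\bar y)$ because the penalty bracket is nonnegative, as $v_{\sigma_2}\le g+h^+/\sigma_2$. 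The case $\sigma_3\le\sigma_2$ is needed here, or else we use $h^+/\sigma_3-h^+/\sigma_2$ with a sign argument. Chaining the two bounds with $\epsilon$-optimality gives $f(\bar x,\bar y)\le f(x',y')+\epsilon'$.

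The main obstacle is Step 3's lower bound and the cross-comparison of $v_{\sigma_2}$ with the hard-constrained minimum over $h<c_0+\epsilon_\lambda$. Penalty minima and constrained minima do not coincide for nonconvex $h$, so the plan relies only on crude $C_g$-type bounds. I would try to make all comparisons go through the explicit minimizer $z^\star$ and accept the loose constants appearing in $\epsilon_\lambda$.
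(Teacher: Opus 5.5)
The gap is in Step 3, and Step 3 is exactly where $\epsilon'$ comes from.

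\textbf{Step 3: wrong regime and misplaced slack.} Your lower bound $\Phi_\sigma(\bar x,\bar y)\ge f(\bar x,\bar y)$ needs $\sigma_3\le\sigma_2$. In that regime the second entry of $\epsilon_\lambda$ is $2C_f\sigma_1+2C_g\sigma_2/\sigma_3\ge 2C_g$, which makes the value-function constraint in \eqref{eq:biobj4} vacuous. The relevant regime is $\sigma_3>\sigma_2$ (the paper assumes $\sigma_3\gg\sigma_2$), and there the bracket can be negative.

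Your upper bound also puts the slack in the wrong place. Take $(x',y')$ feasible for \eqref{eq:biobj4}. The penalized inner minimizer $z^\star(x')$ satisfies $h^{+}(z^\star(x'))\le 2C_g\sigma_2\le\epsilon_\lambda$, so it is feasible for the relaxed inner problem. Hence $v_{\sigma_2}(x')\ge g(x',z^\star(x'))\ge\min_{h(z)<c_0+\epsilon_\lambda}g(x',z)$, with no slack. The only penalty slack is $h^{+}(y')/\sigma_3\le\epsilon_\lambda/\sigma_3$. So the correct bound is $\Phi_\sigma(x',y')\le f(x',y')+\frac{1}{\sigma_1}\epsilon_\lambda\bigl(1+\frac{1}{\sigma_3}\bigr)$.

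This breaks your chain in both regimes:
\begin{itemize}
\item For $\sigma_3<\sigma_2$, this bound is weaker than the one you claim, so your claim is unsupported.
\item For $\sigma_3>\sigma_2$, adding the necessary sign correction to your bound gives $\frac{1}{\sigma_1}\epsilon_\lambda\bigl(1+\frac{2}{\sigma_2}-\frac{1}{\sigma_3}\bigr)+\epsilon>\epsilon'$.
\end{itemize}

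The paper's bookkeeping works as follows. Since $\bar z$ minimizes $h^{2}(\bar x,\cdot)$, we have $g(\bar x,\bar z)+h^{+}(\bar z)/\sigma_2\le g(\bar x,\bar y)+h^{+}(\bar y)/\sigma_2$. So the bracket at $(\bar x,\bar y)$ is at least $h^{+}(\bar y)\bigl(\frac{1}{\sigma_3}-\frac{1}{\sigma_2}\bigr)\ge-\epsilon_\lambda\bigl(\frac{1}{\sigma_2}-\frac{1}{\sigma_3}\bigr)$. This uses $h^{+}(\bar y)\le\epsilon_\lambda$ from Step 1, which is the actual role of Step 1 in the transfer. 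Combining with the upper bound $\frac{1}{\sigma_1}\epsilon_\lambda\bigl(1+\frac{1}{\sigma_3}\bigr)$ gives exactly $\frac{1}{\sigma_1}\epsilon_\lambda\bigl(1+\frac{1}{\sigma_2}\bigr)+\epsilon=\epsilon'$. The $1/\sigma_2$ in $\epsilon'$ is $1/\sigma_3$ from the comparison point plus the mismatch $1/\sigma_2-1/\sigma_3$ at $\bar y$.

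\textbf{Steps 1 and 2: constants not delivered.} In Step 1, after multiplying by $\sigma_1\sigma_3$, the contribution of $z^\star$ is $h^{+}(z^\star)\le 2C_g\sigma_2$ itself, not $(\sigma_2/\sigma_3)\cdot 2C_g$. Joint $\epsilon$-optimality also leaves a residual $+\sigma_1\sigma_3\epsilon$. You therefore get the sum $2C_f\sigma_1\sigma_3+2C_g\sigma_3+2C_g\sigma_2+\sigma_1\sigma_3\epsilon$, which is not bounded by $\epsilon_\lambda$ (a maximum, not a sum).

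The paper avoids both problems. It takes $\bar y=y^*(\bar x)$ and $\bar z=z^*(\bar x)$ to be exact inner minimizers; the $\epsilon$-optimality is in $x$, for $\Phi$ in Eq. \eqref{eq:biobj5}. It then compares $\bar y$ with a feasible $y_0$ with $h^{+}(y_0)=0$, which gives the third entry of $\epsilon_\lambda$ exactly. The second entry likewise comes from $h^{1}(\bar x,\bar y)\le h^{1}(\bar x,\bar z)$, with no $\sigma_1\epsilon$ residual.

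In Step 2, your comparison with $\min\{g(\bar x,z):h(z)<c_0+\epsilon_\lambda\}$ runs the wrong way. The fact $v_{\sigma_2}(\bar x)\le\min_{h^{+}(z)=0}g(\bar x,z)$ bounds $g(\bar x,z^\star)$ by the exactly-feasible minimum, which is at least the relaxed minimum. The best available bound against the relaxed minimum carries an extra $\epsilon_\lambda/\sigma_2\ge 2C_g$, which is vacuous. The paper does not make this comparison at all. Its ``violation at most $\epsilon_\lambda$'' means only that $h^{+}(\bar y)$, $h^{+}(\bar z)$ and $g(\bar x,\bar y)-g(\bar x,\bar z)$ are each at most $\epsilon_\lambda$. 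The direction you wrote (``$z^\star$ is relaxed-feasible, so $g$ is at least the relaxed minimum'') is correct, but it is what Step 3 needs at $x'$, not what Step 2 needs at $\bar x$.
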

Lemma \ref{lm:approx} states that obtaining an $\epsilon-$ optimal solution of the penalty-based objective function is sufficient to solve the original constrained optimization problem (Eq. \eqref{eq:biobj2}) with bounded violation of constraints. The claim in the lemma proves that the penalty-based objective function is still a viable objective to optimize, and we can design a practical algorithm (Algorithm \ref{alg:ppsd}) based on it. A formal proof of the Lemma \ref{lm:approx} is provided in the Appendix \ref{lm:app-approx}.

\begin{algorithm}
\caption{Constrained Bilevel Subgradient Optimization (CBSO) Algorithm}\label{alg:ppsd}
\begin{algorithmic}[1]
\STATE \textbf{Input:} $\mathcal{S}$, $\mathcal{A}$, Time Horizon $T$, Number of inner level updates $K$, Batch size $B$, Horizon length $H$. Initial policy parameters $y_0, z_0$. Initial reward parameter $x_0$. 
\FOR{ $t \in \{0,\ldots,T-1\}$}
\FOR{ $k \in \{0,\ldots,K-1\}$}
\STATE $ \partial_y\hat{h}^{1}(x_t,y_t^{k},B) =  \partial_y\hat{f}(x_t, y_t^{k},B) + (1/\sigma_1)\big(\partial_y\hat{g}(x_t, y_t^{k},B) +(1/\sigma_3)\partial_y\hat{h}^{+}(y_t^{k},B)\big)$
\STATE $ \partial_y\hat{h}^{2}(x_t,z_t^{k},B) = \partial_y\hat{g}(x_t, z_t^{k},B) +(1/\sigma_2)\partial_y\hat{h}^{+}(z_t^{k},B)$
\STATE $y_{t}^{k+1} = y_{t}^{k} - \beta_k \partial_y \hat{h}^{1}(x_t, y_t^{k},B)$  
\STATE $z_{t}^{k+1} = z_{t}^{k} - \beta_k \partial_y \hat{h}^{2}(x_t, z_t^{k},B)$  
\ENDFOR
\STATE $ \partial_x\hat{h}^{1}(x_t,y_t^{k},B) =  \partial_x\hat{f}(x_t, y_t^{k},B) + (1/\sigma_1)\big(\partial_x\hat{g}(x_t, y_t^{k},B)\big)$
\STATE $ \partial_x\hat{h}^{2}(x_t,z_t^{k},B) = \partial_x\hat{g}(x_t, z_t^{k},B)$
\STATE $\partial_x\hat{\phi}(x_t, y_t^{K}, z_t^{K},B) =\partial_x\hat{h}^{1}(x_t,y_t^{K}) - (1/\sigma_1)\partial_x\hat{h}^{2}(x_t,z_t^{K})$ 
\STATE $x_{t+1} = x_t - \eta_t\partial_x\phi(x_t, y_t^{K}, z_t^{K})$
\ENDFOR
\STATE \textbf{Output:} $x_{T}$
\end{algorithmic}    
\end{algorithm}

Algorithm \ref{alg:ppsd} is obtained by first independently optimizing with respect to policy parameters $y,z$. Policy parameter $y$ is optimized w.r.t. the objective function $h^{1}(x,y)$ in Eq. \eqref{eq:h1xy} and policy parameter $z$ is optimized w.r.t. the objective function $h^{2}(x,y)$ in Eq. \eqref{eq:h2xy}.
\begin{equation}\label{eq:h1xy}
  h^{1}(x,y) = f(x,y) + \frac{g(x,y) + h^{+}(y)/\sigma_3}{\sigma_1}  
\end{equation}
\begin{equation}\label{eq:h2xy}
  h^{2}(x,y) = g(x,y) + \frac{h^{+}(y)}{\sigma_2}  
\end{equation}

\begin{definition}\label{def:clarke}
(\textbf{Clarke subdifferential}) For a locally Lipchitz function $f$, the Clarke subdifferential \cite{bessis1999partial}($\partial f$) is defined as:
\begin{equation}
\partial f(x) = conv\{ \lim_{k\to\infty} \nabla f(x_k): x_k \to x\}    
\end{equation}
Here, $conv\{S\}$ is the convex hull of the set $S$. For a differentiable function $f$, $\partial f = \{\nabla f\}$. For the purpose of this work we define $\|\partial f\| \coloneqq dist(0,\partial f)$, where $dist(a,B) = \min_{b\in B}\|a-b\|$.
\end{definition}

Empirical estimates of the gradients $\partial_yh^{1}(x,y)$ and $\partial_yh^{2}(x,y)$ is calculated using the empirical estimates of $\partial_y\hat{f}(x,y,B)$, $\partial_y\hat{g}(x,y,B)$ and $\partial_y\hat{h}^{+}(y,B)$ given in Eq. ~\eqref{eq:emp1}--\eqref{eq:emp3}.
\begin{equation}\label{eq:emp1}
\partial_y\hat{f}(x,y,B) = \frac{1}{B}\sum_{i=0}^{B-1}\nabla_y \hat{f}_i(x,y) 
\end{equation}
In Eq. \eqref{eq:emp1}, $\nabla_y\hat{f}_i(x,y)$ is the empirical estimate of $\nabla_y f(x,y)$ using the $i^{th}$ sample.
\begin{equation}\label{eq:emp2}
\begin{split}
\partial_y\hat{g}(x,y,B) =& -\frac{1}{B}\sum_{i=0}^{B-1}\nabla_{y}\log\pi_y(a_i|s_i)\hat{Q}^{y}_{r_x}(s_i, a_i) \\
&-\frac{\beta}{B}\sum_{j=0}^{B-1}\sum_{i=0}^{H-1}\gamma^{i}\nabla_{y}h_{\pi_y,\pi_{ref}}(s_{ji}, a_{ji})
\end{split}
\end{equation}
In Eq. \eqref{eq:emp2}, $\hat{Q}^{y}_{r_x}(s_i,a_i)$ is the estimated Q-value function for policy parameterized by $y$ and reward function parameterized by $x$.
\begin{equation}\label{eq:emp3}
\begin{split}
\partial_y\hat{h}^{+}(y,B) = \tau(\hat{h}(y))&\frac{1}{B}\sum_{i=0}^{B-1}\nabla_{y}\log\pi_y(a_i|s_i)\hat{Q}^{y}_{c}(s_i, a_i) \\
\tau(\hat{h}(y)) &=
\begin{cases}
 1 & \hat{h}(y) > c_0\\
 0 & \hat{h}(y) < c_0\\
 1/2 & \hat{h}(y) = c_0
\end{cases}
\end{split}
\end{equation}
In Eq. \eqref{eq:emp3},the subgradient $\partial_y\hat{h}^{+}(y,B)$ using the Clarke subdifferential \cite{bessis1999partial}. $\hat{h}(y) = \frac{1}{B} \sum_{i=0}^{B-1}\hat{Q}^{y}_{c}(s_i, a_i)$ is the empirical estimate of $h(y)$ and $\hat{Q}^{y}_{c}$ is the estimate of Q-value function for policy parameterized by $y$ and cost function $c$.

Further, the update rule for reward parameter $x$ is obtained by optimizing for the function $\phi(x,y,z)$ in Eq. \eqref{eq:phixyz}. Ideally, we want to obtain optimal value for the policy parameter $y,z$ for a given reward parameter $x$ before the update rule for $x$ is used.
\begin{equation}\label{eq:phixyz}
\phi(x,y,z) = h^{1}(x,y) - \frac{1}{\sigma_1}\Big(h^{2}(x,z)\Big)     
\end{equation}

Moreover, for updating the reward function parameter $x$, the empirical gradient $\partial_x\phi(x,y,z)$ is calculated using the empirical estimates of $\partial_xf(x,y)$ and $\partial_xg(x,y)$ as given in Eq. \eqref{eq:emp4} and \eqref{eq:emp5} respectively.
\begin{equation}\label{eq:emp4}
\partial_x\hat{f}(x,y,B) = \frac{1}{B}\sum_{i=0}^{B-1}\nabla_x \hat{f}_i(x,y)     
\end{equation}
In Eq. \eqref{eq:emp4}, $\nabla_x\hat{f}_i(x,y)$ is the empirical estimate of $\nabla_x f(x,y)$ using the $i^{th}$ sample.
\begin{equation}\label{eq:emp5}
\partial_x\hat{g}(x,y,B) = -\frac{1}{B}\sum_{j=0}^{B-1}\sum_{i=0}^{H-1}\gamma^{i}\nabla_{x}r_{x}(s_{ji}, a_{ji})    
\end{equation} 
In the next section, we will present our sample complexity analysis for the CBSO algorithm (Algorithm \ref{alg:ppsd}).

\section{Theoretical Analysis}
In this section, we will delineate the assumptions that help us obtain the sample complexity analysis for our proposed CBSO algorithm. Further, in the section, we will provide a proof sketch for the main sample complexity theorem.

\begin{assumption}\label{as:weak}
$\Phi(x)$, $h^{1}(x,y)$ and $h^{2}(x,y)$ are $\rho$-weakly convex.   
\end{assumption}

\begin{definition}\label{def:pconvex}
A function $f$ is called $\rho-$weakly convex ($\rho\geq0$) if the function 
$f(x) + \frac{\rho}{2}\|x\|^2$ is convex. 
\end{definition}

Several works in the RL literature \cite{mondal2024sample,agarwal2021theory,zhang2021convergence} assume $L$-smoothness of the objective function. $L$-smoothness property implies $L$-weakly convexity (Lemma \ref{lm:app-lweak}).

\begin{assumption}\label{as:2} Let the Kurdyka-Lojasiewics (KL) condition for a function $f$ be defined as:  
\begin{equation}
\psi'(f(x) - f(x^*))\cdot dist(0,\partial f(x)) \leq 1 
\end{equation}
Here, $\psi(s)=cs^{1-\theta}$, $\theta=1/2$ and $c>0$. We assume $\Phi(x)$ satisfies KL condition, $h^{1}(x,y)$ and $h^{2}(x,y)$ satisfies KL condition w.r.t. $y$.
\end{assumption}

Instead of the Polyak-Lojasiewicz (PL) condition generally used \cite{gaur2025sample,xiao2022convergence,mei2020global} in sample complexity analysis, we assume the KL condition because PL is stronger than KL with $\theta =1/2$, as KL applies locally \cite{bolte2007lojasiewicz}.

\begin{assumption}\label{as:3} Let $y$ be the policy parameter and $g$ be a reward/cost function. Let the estimated Q-value function be parameterized by $\theta$. Then, the following bias between the estimated and true Q-value function exists because of the limitations of the parameterization used.
\begin{equation}
\min_{\theta} \mathbb{E}_{s,a}\Big(\hat{Q}_{\theta}(s,a) - Q^{\pi_y}_{g}(s,a)\Big)^2 \leq \epsilon_{bias}    
\end{equation}
    
\end{assumption}

Assumption \ref{as:3} is a standard assumption in RL literature \cite{fu2020single,Wang2020Neural,gaur2024closing}. It is not possible to ensure that the function approximator class used to learn the true Q-value function will be able to learn the Q-value function with no errors. 

\begin{assumption}\label{as:5}
$\nabla_x f(x,y)$ and $\nabla_x g(x,y)$ are Lipchitz continuous with respect to $y$.    
\end{assumption}

Assumption \ref{as:5} is a standard assumption for convergence in the bilevel optimization literature \cite{gaur2025sample,grazzi2023bilevel,chen2024finding}.

\begin{assumption}\label{as:margin}
For the function $h^{+}(y) = \max(h(y)-c,0)$, let $P(|h(y)-c|\leq m) \leq C_m m^\nu$, where $m>0$ and $\nu>2$.     
\end{assumption}
Assumption \ref{as:margin} is a standard assumption \cite{pmlr-v151-kpotufe22a, tsybakov2004optimal, audibert2007fast} in the analysis of non-smooth functions for supervised learning problems. It is used to make sure that the non-smooth function doesn't hit the point of non-differentiability with high probability and obtain faster convergence rates.

To analyze the Algorithm \ref{alg:ppsd} we define the objective function as in Eq. \eqref{eq:biobj5} where $y^{*}(x) = \arg\min_y h^{1}(x,y)$ and $z^{*}(x) = \arg\min_z h^{2}(x,z)$ and obtain its Moreau envelope as in Eq. \eqref{eq:moreau}
\begin{equation}\label{eq:biobj5}
\Phi(x) =  h^{1}(x, y^{*}(x)) - \frac{h^{2}(x,z^{*}(x))}{\sigma_1}    
\end{equation}
\begin{equation}\label{eq:moreau}
\Phi_{\lambda}(x) = \min_y \Big(\Phi(y) + \frac{\|x-y\|^2}{2\lambda}\Big)    
\end{equation}
Now, we present our main convergence using the Moreau envelope in Theorem \ref{thm:main}.

\begin{theorem}\label{thm:main}
Suppose Assumption \ref{as:weak} to \ref{as:margin} holds true. Then the Algorithm \ref{alg:ppsd} obtains the following convergence rate with learning rate $\eta_t =\frac{C_a}{(1+t)^a}$ for some $a>0$:
\begin{equation}
\begin{split}
\frac{1}{T}\sum_{t=0}^{T-1}\|\nabla \Phi_{\lambda}(x_t)\|^2  
\leq& 
O(T^{a-1}) + O\Big(\frac{1}{K}\Big) + O(T^{-a})\\ 
&+ O\Big(\frac{1}{B}\Big) + O\Big(\frac{\gamma^{2H}}{B}\Big) + O(\gamma^{2H})\\ 
&+ O(\exp^{-B^q}) + O(\epsilon_{bias}) \\    
\end{split}
\end{equation}
By choosing $H=\Theta(\log\epsilon)$  $a=1/2$, $T = \Theta(\epsilon^{-2})$, $K = \Theta(\epsilon^{-1})$ and B = $\Theta(\epsilon^{-1})$ we obtain an iteration complexity of $T = O(\epsilon^{-2})$ and sample complexity of $B.H.K.T = \tilde{O}(\epsilon^{-4})$.
\begin{equation}
\begin{split}
\frac{1}{T}\sum_{t=0}^{T-1}\|\nabla \Phi_{\lambda}(x_t)\|^2 \leq \tilde{O}(\epsilon) + O(\epsilon_{bias}) 
\end{split}
\end{equation}   
Here, $\Phi_{\lambda}(x)$ is the Moreau envelope and $\epsilon_{bias}$ is defined in Assumption \ref{as:3}.$B$ is the batch size used for empirical expectation, $H$ is the horizon length used for estimation of infinite horizon quantities, $K$ is the number of gradient updates for inner level objective optimization, $T$ is the number of gradient updates for outer level objective, and $q$ is defined in Lemma \ref{lm:grad}.
\end{theorem}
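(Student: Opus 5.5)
The plan is the standard Moreau-envelope analysis of stochastic subgradient descent for weakly convex functions (Davis--Drusvyatskiy style), combined with an error decomposition of the hypergradient estimate. Fix $\lambda<1/\rho$ and let $\hat x_t=\mathrm{prox}_{\lambda\Phi}(x_t)$. By Assumption \ref{as:weak}, $\Phi_\lambda$ is differentiable with $\nabla\Phi_\lambda(x_t)=\lambda^{-1}(x_t-\hat x_t)$. Using $\hat x_t$ as a competitor in the definition of $\Phi_\lambda(x_{t+1})$ gives
\begin{equation*}
\Phi_\lambda(x_{t+1})\le \Phi(\hat x_t)+\frac{1}{2\lambda}\|x_t-\eta_t\hat v_t-\hat x_t\|^2 ,
\end{equation*}
where $\hat v_t=\partial_x\hat\phi(x_t,y_t^K,z_t^K,B)$. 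Expanding the square and writing $\hat v_t=v_t+e_t$ isolates three pieces. Here $v_t\in\partial\Phi(x_t)$ is the exact subgradient at $(y^*(x_t),z^*(x_t))$.
\begin{itemize}
\item $\eta_t\langle v_t,\hat x_t-x_t\rangle/\lambda$: by $\rho$-weak convexity (hypomonotonicity, as in Lemma \ref{lm:app-outer}) this is at most $-\eta_t(1-\lambda\rho)\lambda\|\nabla\Phi_\lambda(x_t)\|^2$ plus $\Phi$-value terms that cancel against $\Phi(\hat x_t)$.
\item A cross term in $e_t$.
\item $\eta_t^2\|\hat v_t\|^2/(2\lambda)$, bounded by a constant $G^2$ from bounded rewards and scores.
\end{itemize}
Telescoping over $t$ with $\eta_t=C_a(1+t)^{-a}$ and dividing by $\sum\eta_t\asymp T^{1-a}$ gives the terms $O(T^{a-1})$ (initial gap $\Phi_\lambda(x_0)-\inf\Phi$) and $O(T^{-a})$ (from $\sum\eta_t^2/\sum\eta_t$). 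The cross term contributes $\|e_t\|^2$ via Young's inequality.

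Next I bound $\mathbb{E}\|e_t\|^2$ by splitting it into statistical, truncation and optimization errors. The \emph{statistical} part comes from the mini-batch estimators (Eqs. \eqref{eq:emp4}, \eqref{eq:emp5}). Independent samples give $O(1/B)$ variance, the truncation of the geometric sum at horizon $H$ gives $O(\gamma^{2H})$ bias and an $O(\gamma^{2H}/B)$ cross term, and critic approximation gives $O(\epsilon_{bias})$ by Assumption \ref{as:3}. The \emph{optimization} part is $\|\nabla_x f(x_t,y_t^K)-\nabla_x f(x_t,y^*(x_t))\|$ and the analogous $g$, $z$ terms. By Assumption \ref{as:5} these reduce to $\|y_t^K-y^*(x_t)\|^2+\|z_t^K-z^*(x_t)\|^2$.

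The inner-loop step is the main obstacle. I apply the same envelope argument to $h^1(x_t,\cdot)$ and $h^2(x_t,\cdot)$, using Lemma \ref{lm:app-inner} for descent. The KL condition with $\theta=1/2$ (Assumption \ref{as:2}) yields PL for the Moreau envelope (Lemma \ref{lm:app-klpl}). Since the envelope preserves minimizers (Lemma \ref{lm:moreau}), PL gives quadratic growth, so distance to $y^*$ is controlled by the envelope gap. A decaying $\beta_k\sim 1/k$ then gives $O(1/K)$ for the gap plus the same noise terms as before.

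The delicate point is the subgradient of $h^+$ estimated through the indicator $\tau(\hat h(y))$. When $h(y)$ is near $c_0$, the empirical sign can differ from the true one, so the estimate is biased. I would handle this by splitting on the event $|h(y)-c_0|\le m$, which has probability at most $C_m m^\nu$ by Assumption \ref{as:margin}. Off this event, a Hoeffding bound shows $\hat h$ misclassifies with probability at most $\exp(-Bm^2)$. Choosing $m=B^{-(1-q)/2}$ (Lemma \ref{lm:grad}) gives the $O(\exp(-B^q))$ term, and $\nu>2$ makes the margin contribution $O(1/B)$. I expect reconciling these two scales, and ensuring the inner PL/QG argument tolerates this biased subgradient, to be the most fragile part.

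Collecting all terms gives the displayed bound. Setting $a=1/2$ balances $T^{a-1}$ and $T^{-a}$ at $T^{-1/2}$, so $T=\Theta(\epsilon^{-2})$ makes them $O(\epsilon)$. Taking $K,B=\Theta(\epsilon^{-1})$ and $H=\Theta(\log(1/\epsilon))$ makes the remaining terms $\tilde O(\epsilon)$ up to $O(\epsilon_{bias})$. The sample count is $BHKT=\tilde O(\epsilon^{-4})$.
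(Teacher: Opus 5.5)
Your proposal is correct at the paper's level of rigor and follows the same skeleton. It uses the same split of the hypergradient error into an $O(1/B)$ statistical part plus inner-optimization error via Assumption~\ref{as:5}. It uses the same inner-loop argument: Lemma~\ref{lm:app-inner}, KL$\Rightarrow$PL for the envelope, Lemma~\ref{lm:moreau}, and quadratic growth. And it uses the same margin/Hoeffding split with $m=B^{-1/\nu}$.

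The genuine difference is the outer step. The paper uses $L$-smoothness of $\Phi_\lambda$ with the descent lemma and then Lemma~\ref{lm:app-hypo}. You instead plug $\mathrm{prox}_{\lambda\Phi}(x_t)$ into $\Phi_\lambda(x_{t+1})$. This reaches the same one-step inequality without needing the envelope's smoothness constant. (The coefficient should be $-\eta_t(1-\rho\lambda)\|\nabla\Phi_\lambda(x_t)\|^2$, without your extra factor $\lambda$.)

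The averaging also differs, and each route buys something:
\begin{itemize}
\item The paper divides by $\eta_t$ and uses summation by parts. This needs $\Phi_\lambda$ bounded ($0\le\Phi_\lambda\le K_\phi$), but it bounds the uniform average directly with exactly $O(T^{a-1})+O(T^{-a})$.
\item You divide by $\sum_t\eta_t$. This needs only $\inf\Phi>-\infty$, but it bounds the $\eta_t$-weighted average.
\end{itemize}
You should add that $\eta_t\ge\eta_{T-1}$ and $T\eta_{T-1}=C_aT^{1-a}\ge(1-a)\sum_t\eta_t$. Hence the theorem's uniform average is at most $1/(1-a)$ times your weighted one. Also, at $a=1/2$ your term $\sum_t\eta_t^2/\sum_t\eta_t$ is $O(T^{-1/2}\log T)$. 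That is harmless for the $\tilde O(\epsilon)$ conclusion, but it is not literally the $O(T^{-a})$ of the first display.

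One step needs a fix. Your Hoeffding bound in the sign step is centred at $h(y)$, but $\hat h(y)$ is built from $\hat Q_c$. It therefore carries a critic bias $b$ of order $\sqrt{\epsilon_{bias}}$ (Assumption~\ref{as:3}) that does not shrink with $B$. So for $m<b$ the misclassification probability off the margin event is not exponentially small. The fix, within your own argument, is to split on $|h(y)-c_0|\le m+b$ instead. Then Assumption~\ref{as:margin} gives $C_m(m+b)^\nu=O(1/B)+O(\epsilon_{bias}^{\nu/2})$, which is absorbed into $O(\epsilon_{bias})$ since $\nu>2$. With this fix your route is cleaner than the paper's Lemma~\ref{lm:grad}. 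There, the critic error is controlled only by the constant $C_b$, which makes the deviation threshold $m-C_b$ negative for small $m$.
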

\begin{proof}
 A detailed proof of the Theorem \ref{thm:main} is given in the Appendix \ref{thm:app:main}.   
\end{proof}
\begin{remark}
It is difficult to obtain a guarantee for $\|\partial\Phi(x_t)\|$ directly because $\Phi(x)$ is non-smooth and doesn't satisfy $L$-smoothness property. Also, there exist no relation between $\|\nabla\Phi_{\lambda}(x_t)\|$ and $\|\partial\Phi(x_t)\|$. However, we know $\|\partial\Phi(prox_{\lambda \Phi}(x_t))\| \leq \|\nabla\Phi_{\lambda}(x_t)\|$. So guarantee for $\|\partial\Phi(prox_{\lambda \Phi}(x_t))\|$ could be provided. But the iterate sequence $prox_{\lambda \Phi}(x_t)$ (see Definition \ref{def:moreau}) cannot be obtained in a practical algorithm. Further, we know $\|\nabla\Phi_{\lambda}(x_t)\| = \|x_t-prox_{\lambda \Phi}(x_t)\|/\lambda$. So, $x_t$ gets close to $prox_{\lambda \Phi}(x_t)$ at the same rate as $\|\nabla\Phi_{\lambda}(x_t)\|$ decreases. Therefore, analysis of $\|\nabla\Phi_{\lambda}(x_t)\|$ indirectly help us conclude convergence w.r.t. $\|\partial\Phi(x_t)\|$.    
\end{remark}
\subsection{Proof Sketch}
\begin{definition}\label{def:moreau}
Let $f$ be a proper, lower semicontinuous function and let $\lambda>0$. The Moreau envelope of $f$ with parameter $\lambda$ is the function
\begin{equation}
\begin{split}
f_{\lambda}(x) \coloneqq \min_{y}\Big\{f(y) + \frac{\|x-y\|^2}{2\lambda}\Big\}\\
prox_{\lambda f}(x) \coloneqq \arg\min_y \Big\{f(y) + \frac{\|x-y\|^2}{2\lambda}\Big\}
\end{split} 
\end{equation}
\end{definition}
The proof of Theorem \ref{thm:main} can be divided into two parts. The first part is to analyze the global convergence of the inner level objective, i.e., $h^{1}(x,y)$ and $h^{2}(x,y)$. The error bounds from inner level optimization are used with the outer level problem error bound. The second part is to analyze the local convergence of the outer level objective function, i.e., $\Phi(x)$, in terms of global error bounds of the inner level problem.

\begin{lemma}\label{lm:inner}
Let $f:\mathbb{R} \mapsto \mathbb{R}$ be a $\rho$-weakly non-smooth function and $f_{\lambda}$ be its Moreau envelope. Let the update rule used to minimize $f$ be $y_{t+1} = y_{t} - \beta_t\partial\hat{f}(y_t)$. Then the global convergence rate for $f_{\lambda}$ is given as:
\begin{equation}
f_{\lambda}(y_t) - f(y*) = O(\delta(B))^2) + O\Big(\frac{1}{T}\Big)    
\end{equation}
Here, $\beta_t = \frac{\eta}{t+1}$ and $\mathbb{E}\|\partial\hat{f}(y_t) - \partial f(y_t)\|^2\leq \delta(B)^2$. Here, $\partial\hat{f}(y_t)$ is an empirical estimate of $\partial f(y_t)$ and $B$ is the sample size used for obtaining the empirical estimate.
\end{lemma}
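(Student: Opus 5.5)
We will follow the standard Moreau-envelope template for weakly convex stochastic subgradient methods and add the KL assumption (Assumption \ref{as:2}) to upgrade stationarity to a global rate. Fix $\lambda<1/\rho$ so that $f(\cdot)+\frac{1}{2\lambda}\|\cdot-x\|^2$ is strongly convex. Then $\hat y_t \coloneqq prox_{\lambda f}(y_t)$ is well defined, $f_\lambda$ is differentiable with $\nabla f_\lambda(y_t)=(y_t-\hat y_t)/\lambda$, and $\min f_\lambda=f(y^*)$ with the same minimizers, by Lemma \ref{lm:moreau}.

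\textbf{Step 1 (one-step descent).} We use $f_\lambda(y_{t+1})\le f(\hat y_t)+\frac{1}{2\lambda}\|y_{t+1}-\hat y_t\|^2$ and expand $y_{t+1}-\hat y_t=(y_t-\hat y_t)-\beta_t\partial\hat f(y_t)$. Write $\partial\hat f(y_t)=v_t+e_t$ with $v_t\in\partial f(y_t)$ and $\mathbb{E}\|e_t\|^2\le\delta(B)^2$. The cross term $\langle v_t,\hat y_t-y_t\rangle$ is controlled by $\rho$-weak convexity, i.e. $\rho$-hypomonotonicity of $\partial f$:
\[
\langle v_t,\hat y_t-y_t\rangle\le f(\hat y_t)-f(y_t)+\tfrac{\rho}{2}\|\hat y_t-y_t\|^2 .
\]
Combining this with $f(\hat y_t)+\frac{1}{2\lambda}\|\hat y_t-y_t\|^2=f_\lambda(y_t)\le f(y_t)$ gives
\[
\mathbb{E}f_\lambda(y_{t+1})\le \mathbb{E}f_\lambda(y_t)-\beta_t c_\lambda\,\mathbb{E}\|\nabla f_\lambda(y_t)\|^2+\beta_t\lambda\,\delta(B)^2+\tfrac{\beta_t^2}{2\lambda}G^2 ,
\]
where $c_\lambda=\lambda(1-\lambda\rho)/2$ up to constants and $G$ bounds the second moment of the subgradient estimate. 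The error term $\beta_t\lambda\delta(B)^2$ is handled by Young's inequality, absorbing half of $\beta_t\|\nabla f_\lambda\|^2$.

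\textbf{Step 2 (PL for the envelope).} We invoke Lemma \ref{lm:app-klpl}: the KL condition with $\theta=1/2$ gives $\|\partial f(\hat y)\|^2\ge \mu (f(\hat y)-f(y^*))$. Since $\nabla f_\lambda(y)\in\partial f(prox_{\lambda f}(y))$ and $f_\lambda(y)-f(y^*)\le f(\hat y)-f(y^*)+\frac{\lambda}{2}\|\nabla f_\lambda(y)\|^2$, we obtain $\|\nabla f_\lambda(y)\|^2\ge\mu'(f_\lambda(y)-f(y^*))$. Substituting into Step 1 with $\Delta_t=\mathbb{E}f_\lambda(y_t)-f(y^*)$ yields the recursion
\[
\Delta_{t+1}\le(1-c\,\beta_t)\Delta_t+O(\beta_t\delta(B)^2)+O(\beta_t^2).
\]

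\textbf{Step 3 (unrolling).} With $\beta_t=\eta/(t+1)$ and $\eta c>1$, the standard induction (Chung's lemma) gives $\Delta_t\le O(1/t)+O(\delta(B)^2)$. The $\delta$ term is a fixed point of the recursion, $\sum\beta_t\prod(1-c\beta_s)\delta^2\lesssim\delta^2/c$.

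\textbf{Main obstacle.} We expect Step 2 to be the delicate point. KL is only local, and it holds at $\hat y_t$ rather than at $y_t$, so we must transfer it through the prox map and ensure the constants are uniform along the trajectory. We would first try to apply Lemma \ref{lm:app-klpl} directly. Failing that, we would assume the iterates remain in a bounded region where KL holds with a uniform $c$.

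A secondary issue is bounding $G$. This requires a bounded second moment of the clipped Clarke-subgradient estimator in Eq. \eqref{eq:emp3}, and we would take it from bounded rewards together with a bounded score function.
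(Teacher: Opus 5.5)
Your proposal is correct. Its skeleton matches the paper's: a one-step decrease of $f_\lambda$ in terms of $\|\nabla f_\lambda(y_t)\|^2$, Young's inequality on the estimation error, a PL inequality for the envelope, and the same induction with $\beta_t=\eta/(t+1)$, $c\eta>1$. The difference is how you obtain the one-step decrease.

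The paper applies the descent lemma to $f_\lambda$, using the $L$-smoothness of the Moreau envelope for $\lambda<1/\rho$. It then lower-bounds $\langle\nabla f_\lambda(y_t),v_t\rangle\ge(1-\rho\lambda)\|\nabla f_\lambda(y_t)\|^2$ via Lemma \ref{lm:app-hypo}, i.e.\ hypomonotonicity applied between $y_t$ and $\hat y_t$ using $\nabla f_\lambda(y_t)\in\partial f(\hat y_t)$.

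You instead use the Davis--Drusvyatskiy comparison $f_\lambda(y_{t+1})\le f(\hat y_t)+\frac{1}{2\lambda}\|y_{t+1}-\hat y_t\|^2$ together with the weak-convexity subgradient inequality. This needs only the definition of the envelope: no Lipschitz constant of $\nabla f_\lambda$ enters, and the quadratic term carries the explicit factor $1/(2\lambda)$. Your decrease constant actually comes out as $(1-\rho\lambda)/2$ rather than $\lambda(1-\lambda\rho)/2$, which is immaterial. The paper's route has the advantage of reproducing the smooth-SGD template that it reuses almost verbatim for the outer loop in Lemma \ref{lm:app-outer}; yours is more self-contained.

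Your Step 2 derives the envelope PL inequality explicitly from KL at the prox point via $f_\lambda(y)=f(\hat y)+\frac{\lambda}{2}\|\nabla f_\lambda(y)\|^2$. This is the content of Lemma \ref{lm:app-plpl}; the paper's proof of this lemma simply asserts PL for $f_\lambda$.

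As you observe, both arguments rely on hypotheses not in the lemma's statement: KL/PL, $\lambda<1/\rho$, and a uniform bound $\|\partial\hat f(y_t)\|\le G$. The paper assumes these globally, so your fallback of a uniform KL constant is exactly what the paper takes implicitly.
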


A detailed proof of Lemma \ref{lm:inner} is given in the Appendix \ref{lm:app-inner}. We use Lemma \ref{lm:inner} to prove the global convergence of the inner level problem $\arg\min_yh^{1}(x,y)$ and $\arg\min_yh^{2}(x,y)$. We use the Moreau envelope (see Definition \ref{def:moreau}) to obtain the convergence rate. However, the update rule for the inner-level problem doesn't use the Moreau envelope but rather the non-smooth functions. Hence, ensuring descent conditions becomes difficult because of the analysis using the envelope. We handle that issue by first proving that the Moreau envelope also satisfies weak convexity, and then we apply $\rho-$hypomonotonicity (see Definition \ref{def:mono}) property to ensure the descent condition for the Moreau envelope.

\begin{definition}\label{def:mono}
(\textbf{$\rho$-hypomonotonicity}) Let $f$ be a proper and lower semicontinuous and $\partial f$ be its subdifferential. Then $\partial f$ is said to be $\rho-$hypomonotone if there exists a constant $\rho\geq0$ such that:
\begin{equation}
\langle g_x - g_y, x-y \rangle \geq -\rho\|x-y\|^2 \;\;\forall \;x,y \in dom(f)
\end{equation}
The property holds true $\forall\;g_x \in \partial f(x)$ and $\forall\;g_y \in \partial f(y)$.
\end{definition}

\begin{lemma}\label{lm:outer}
Let $\Phi_{\lambda}(x)$ denote the Moreau envelope of the objective function $\Phi(x)$ in Eq. \eqref{eq:biobj5}. The update rule used to update the parameter $x$ is $x_{t+1} = x_t - \eta_t\partial\hat{\Phi}(x_t, y_t^{K}, z_t^{K})$. Then the upper bound on the L2-squared norm of the gradient of $\Phi_{\lambda}(x)$ is given as below:
\begin{equation}
\begin{split}
\|\nabla \Phi_{\lambda}(x_t)\|^2  
=& O\Big(\frac{1}{\eta_t}\big(\Phi_{\lambda}(x_t) - \Phi_{\lambda}(x_{t+1})\big)\Big) +  O\Big(\frac{1}{B}\Big) \\
&+ O\big(\| y_t - y^*(x_t)\|^2\big) + O\big(\| z_t - z^*(x_t)\|^2\big)\\  
&+ O\big(\|\partial \hat{\Phi}(x_t,y_t^{K},z_t^{K})\|^2\big) \\
\end{split}
\end{equation}   
Here, $\partial \hat{\Phi}(x_t,y_t^{K},z_t^{K})$ is the empirical estimate of $\partial\Phi(x)$ calculated using Eq. \eqref{eq:emp4} and \eqref{eq:emp5}. $B$ is the sample size used for empirical estimates. Further, $y^{*}(x_t) = \arg\min_y h^{1}(x_t,y)$ and $z^{*}(x_t) = \arg\min_z h^{2}(x_t,z)$ and step size $\eta_t = \frac{C_a}{(1+t)^a}$.
\end{lemma}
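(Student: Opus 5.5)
The plan is to follow the standard Moreau-envelope argument for weakly convex stochastic subgradient methods (Davis--Drusvyatskiy style), and to handle the bias coming from inexact inner solutions and sampling as extra error terms. Fix $\lambda < 1/\rho$ and let $\hat{x}_t = prox_{\lambda\Phi}(x_t)$. By the definition of the envelope we have $\nabla\Phi_\lambda(x_t) = (x_t - \hat{x}_t)/\lambda$. We also have $(x_t-\hat{x}_t)/\lambda \in \partial\Phi(\hat{x}_t)$. Since $\hat{x}_t$ is a feasible candidate in the minimization defining $\Phi_\lambda(x_{t+1})$, we get
\begin{equation*}
\Phi_\lambda(x_{t+1}) \le \Phi(\hat{x}_t) + \frac{\|x_{t+1}-\hat{x}_t\|^2}{2\lambda}.
\end{equation*}
Expanding $x_{t+1} = x_t - \eta_t \partial\hat{\Phi}_t$, where $\partial\hat{\Phi}_t := \partial\hat{\Phi}(x_t,y_t^K,z_t^K)$, gives
\begin{equation*}
\Phi_\lambda(x_{t+1}) \le \Phi_\lambda(x_t) + \frac{\eta_t}{\lambda}\langle \partial\hat{\Phi}_t, \hat{x}_t - x_t\rangle + \frac{\eta_t^2}{2\lambda}\|\partial\hat{\Phi}_t\|^2 .
\end{equation*}
The last term produces the $\|\partial\hat{\Phi}\|^2$ term in the statement, with coefficient $\eta_t$ once we divide by $\eta_t$.

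Next, split $\partial\hat{\Phi}_t = v_t + e_t$. Here $v_t \in \partial\Phi(x_t)$ is the true subgradient evaluated at $(y^*(x_t), z^*(x_t))$, and the error decomposes as
\begin{equation*}
e_t = \big[\partial\hat{\Phi}_t - \partial\Phi(x_t,y_t^K,z_t^K)\big] + \big[\partial\Phi(x_t,y_t^K,z_t^K) - v_t\big].
\end{equation*}
For the inner product with $v_t$, apply $\rho$-hypomonotonicity (Definition \ref{def:mono}) of $\partial\Phi$ between $x_t$ and $\hat{x}_t$, using $(x_t-\hat{x}_t)/\lambda \in \partial\Phi(\hat{x}_t)$ and Assumption \ref{as:weak}. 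This gives
\begin{equation*}
\langle v_t, \hat{x}_t - x_t\rangle \le -\Big(\frac{1}{\lambda}-\rho\Big)\|x_t-\hat{x}_t\|^2 = -\lambda^2\Big(\frac{1}{\lambda}-\rho\Big)\|\nabla\Phi_\lambda(x_t)\|^2 .
\end{equation*}
For the error term, use Young's inequality, $\langle e_t,\hat{x}_t-x_t\rangle \le \frac{c}{2}\|x_t-\hat{x}_t\|^2 + \frac{1}{2c}\|e_t\|^2$. Choose $c$ small enough that half of the negative term survives.

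It remains to bound $\|e_t\|^2$. The sampling part contributes $O(1/B)$ (variance of the averaged estimators in Eqs. \eqref{eq:emp4}--\eqref{eq:emp5}, with truncation and $\epsilon_{bias}$ terms absorbed into constants or deferred to the theorem). Note that $\partial_x\phi$ involves only $\nabla_x f$ and $\nabla_x g$, which are smooth in $x$ through $r_x$; the nonsmooth $h^+$ depends only on $y$. Hence Assumption \ref{as:5} gives
\begin{equation*}
\|\partial\Phi(x_t,y_t^K,z_t^K) - v_t\|^2 \le 2L^2\|y_t^K-y^*(x_t)\|^2 + \frac{2L^2}{\sigma_1^2}\|z_t^K-z^*(x_t)\|^2 .
\end{equation*}
Finally, rearrange, divide by $\eta_t\lambda(1-\lambda\rho)/2$, and take expectations to obtain the stated bound.

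The main obstacle is the identification $v_t\in\partial\Phi(x_t)$. This requires a Danskin-type envelope argument: the partial $x$-gradients of $h^1$ and $h^2$ at the minimizers must form a subgradient of $\Phi$, even though $y^*,z^*$ may be non-unique and $\Phi$ is a difference of value functions. I would argue it directly from the definition of $\Phi$ plus weak convexity, invoking Assumption \ref{as:weak} as needed. A secondary concern is the $e_t$ term entering linearly rather than squared. Young's inequality resolves this, at the cost of constants depending on $\lambda$ and $\rho$.
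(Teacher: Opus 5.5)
Your argument follows the paper's proof of Lemma \ref{lm:app-outer} step for step. The shared steps are: a one-step descent inequality for $\Phi_\lambda$; a split of $\partial\hat{\Phi}$ into a true subgradient plus an error; hypomonotonicity between $x_t$ and $\mathrm{prox}_{\lambda\Phi}(x_t)$, which is exactly Lemma \ref{lm:app-hypo}; Young's inequality; and an error bound made of an $O(1/\sqrt{B})$ sampling term plus the Lipschitz-in-$y$ terms from Assumption \ref{as:5} (Lemmas \ref{lm:app-h1L}, \ref{lm:app-h2L}). The only variation is that you derive the one-step inequality from the prox-candidate bound $\Phi_\lambda(x_{t+1})\le\Phi(\hat{x}_t)+\|x_{t+1}-\hat{x}_t\|^2/(2\lambda)$ instead of citing $L$-smoothness of $\Phi_\lambda$. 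That version is self-contained and makes the constant $1/\lambda$ explicit, but yields the same inequality.

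The step you flag, $v_t\in\partial\Phi(x_t)$, is not proved in the paper either. Eq. \eqref{eq:app-outer4} simply identifies $\partial\Phi(x_t)$ with $\partial_x h^{1}(x_t,y^*(x_t))-\frac{1}{\sigma_1}\partial_x h^{2}(x_t,z^*(x_t))$. However, the justification you sketch (definition of $\Phi$ plus weak convexity) cannot work. Write $V_1(x)=\min_y h^1(x,y)$ and $V_2(x)=\min_z h^2(x,z)$. The $z$-part is harmless, because $-\frac{1}{\sigma_1}\nabla_x h^2(x_t,z^*)$ gives a lower quadratic model of $-V_2/\sigma_1$. By contrast, $\nabla_x h^1(x_t,y^*)$ only gives an \emph{upper} model of $V_1$, and weak convexity of $\Phi$ does not control which minimizer you picked.

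For example, take $\sigma_1=1$, $h^{+}\equiv 0$, $f(x,y)=xy$ and $g(x,y)=-2xy+(y^2-1)^2$, so that $h^1(x,y)=-xy+(y^2-1)^2$ and $h^2=g$. Assumptions \ref{as:weak} and \ref{as:5} hold, and $\Phi(x)=|x|+O(x^2)$. Yet at $x=0$ the minimizers $y^*=-1$, $z^*=1$ give $v=-y^*+2z^*=3\notin\partial\Phi(0)=[-1,1]$.

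What actually closes the step is the inner-level KL condition (Assumption \ref{as:2}), with a constant uniform in $x$; the example above violates it at $y=0$.
\begin{itemize}
\item Because $h^{+}$ does not depend on $x$, the optimality condition $0\in\partial_y h^1(x_t,y^*)$ implies $\mathrm{dist}(0,\partial_y h^1(x',y^*))\le L'\|x'-x_t\|$, where $L'$ is a Lipschitz constant of $\nabla_y(f+g/\sigma_1)$ in $x$.
\item Lemma \ref{lm:app-klpl} then gives $V_1(x')\ge h^1(x',y^*)-\frac{L'^2}{c}\|x'-x_t\|^2$.
\item Combined with smoothness of $h^1$ in $x$, this yields two-sided quadratic models of $V_1$ around $x_t$.
\item Hence $V_1$, and likewise $V_2$, is differentiable with gradient $\nabla_x h^1(x_t,y^*)$ at \emph{any} minimizer, so $v_t=\nabla\Phi(x_t)$.
\end{itemize}
You should state the uniformity of the KL constant and the $x$-smoothness and cross-Lipschitz conditions explicitly, since neither the paper nor your sketch does.
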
 

A detailed proof of Lemma \ref{lm:outer} is given in the Appendix as \ref{lm:app-outer}. Lemma \ref{lm:outer} also resolves the problem of establishing the descent condition for the objective function using the $\rho$-hypomonotonicity property. Subsequently, the error bound on the L2-squared norm gradient of the Moreau envelope depend of the error terms $O\big(\| y_t - y^*(x_t)\|^2\big)$ and $  O\big(\| z_t - z^*(x_t)\|^2\big)$ that help us establish a connection to the global convergence of the inner level problem. The presence of these terms doesn't immediately allow us to use the Lemma \ref{lm:inner} to establish the convergence rate. Note that these terms themselves depend on the global minimizers $y^{*}(x_t)$ and $z^{*}(x_t)$ of the non-smooth functions $h^{1}(x_t,y)$ and $h^{2}(x_t,y)$ respectively not the global optimizers of Moreau envelopes. Therefore, we invoke a standard result of Lemma \ref{lm:moreau} from the literature on the equivalence of a global minimizer of the original function and its Moreau envelope. The particular property allows us to use the global optimality of the inner level required for the convergence of the outer level. Our techniques to handle non-smooth function such as randomized smoothing doesn't preserve global minima and hence can't be used for the analysis of constrained bilevel formulation. Further, the error terms $O\big(\| y_t - y^*(x_t)\|^2\big)$ and $  O\big(\| z_t - z^*(x_t)\|^2\big)$ need to be connected to the suboptimality gap. We do that by invoking the Quadratic growth condition (see Definition \ref{def:qg}) for the Moreau envelope. The requirement of the Quadratic Growth condition prevents us from directly analysing subgradients, and hence, the use of the  Moreau envelope was necessary. Now, we can use the Lemma \ref{lm:inner} with Lemma \ref{lm:outer} to arrive at the final result in Theorem \ref{thm:main}.    

\begin{lemma}\label{lm:moreau}
Let f be a proper, lower semicontinuous, and bounded below. Let $f_{\lambda}$ be the Moreau envelope. Then for any $\lambda > 0$, $\inf f_{\lambda} = \inf f$ and $\arg\min f_{\lambda} = \arg \min f$.      
\end{lemma}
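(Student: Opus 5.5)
The argument is a two-sided comparison between $f$ and $f_{\lambda}$, built directly from the definition of the Moreau envelope (Definition \ref{def:moreau}). Throughout, $f$ is proper, lower semicontinuous and bounded below, and $\lambda>0$ is fixed. Write $m \coloneqq \inf f > -\infty$.

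\emph{Step 1: $f_{\lambda}\le f$ and $f_{\lambda}\ge m$.} Taking $y=x$ in the defining minimization gives $f_{\lambda}(x)\le f(x)$ for every $x$, so $\inf f_{\lambda}\le \inf f$. Conversely, for every $x,y$ we have
\begin{equation*}
f(y)+\frac{\|x-y\|^2}{2\lambda}\ge f(y)\ge m .
\end{equation*}
Taking the infimum over $y$ yields $f_{\lambda}(x)\ge m$, hence $\inf f_{\lambda}\ge \inf f$. The two bounds together give $\inf f_{\lambda}=\inf f$. This step does not need the infimum defining $f_{\lambda}$ to be attained, so I will read the ``$\min$'' in Definition \ref{def:moreau} as an $\inf$ wherever attainment is not guaranteed.

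\emph{Step 2: $\arg\min f\subseteq \arg\min f_{\lambda}$.} Let $x^*\in\arg\min f$. By Step 1, $f_{\lambda}(x^*)\le f(x^*)=m=\inf f_{\lambda}$, so $x^*$ minimizes $f_{\lambda}$.

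\emph{Step 3: $\arg\min f_{\lambda}\subseteq \arg\min f$.} This is the only delicate direction. Let $\bar x\in\arg\min f_{\lambda}$, so $f_{\lambda}(\bar x)=m$. Choose a minimizing sequence $y_k$ with
\begin{equation*}
f(y_k)+\frac{\|\bar x-y_k\|^2}{2\lambda}\to m .
\end{equation*}
Since $f(y_k)\ge m$, this forces $\|\bar x-y_k\|^2/(2\lambda)\to 0$, that is, $y_k\to\bar x$, and also $\limsup_k f(y_k)\le m$. Lower semicontinuity then gives
\begin{equation*}
f(\bar x)\le\liminf_k f(y_k)\le m ,
\end{equation*}
so $f(\bar x)=m$ and $\bar x\in\arg\min f$.

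The main obstacle is Step 3: without attainment of the proximal minimum we cannot simply evaluate the objective at a minimizer, and lower semicontinuity is precisely what closes this gap. A cleaner variant, which I would use whenever $\operatorname{prox}_{\lambda f}(\bar x)$ is nonempty (for example under the $\rho$-weak convexity of Assumption \ref{as:weak} with $\lambda<1/\rho$), is the following. Take $p\in \operatorname{prox}_{\lambda f}(\bar x)$. Then
\begin{equation*}
m=f(p)+\frac{\|\bar x-p\|^2}{2\lambda}\ge m+\frac{\|\bar x-p\|^2}{2\lambda},
\end{equation*}
which forces $p=\bar x$ and $f(\bar x)=m$.
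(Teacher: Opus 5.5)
Your proof is correct and complete. The paper gives no argument of its own for this lemma; it only cites Example 1.46 of Rockafellar--Wets. That reference develops Moreau envelopes for prox-bounded functions, for which proximal points exist, and within that framework the inclusion $\arg\min f_{\lambda}\subseteq\arg\min f$ follows exactly as in your ``cleaner variant'': take $p\in \mathrm{prox}_{\lambda f}(\bar x)$ and read off $p=\bar x$ from $\inf f = f_{\lambda}(\bar x)\ge \inf f+\|\bar x-p\|^2/(2\lambda)$.

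Your main route for Step 3 replaces the proximal point with a minimizing sequence plus lower semicontinuity, so it never needs the inner infimum to be attained. This buys generality: the same argument works verbatim in an infinite-dimensional Hilbert space, where a nonconvex lsc $f$ can have empty $\mathrm{prox}_{\lambda f}$. The cost is only a short limiting argument.

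One correction to your commentary: in the paper's finite-dimensional setting, attainment is automatic and does not require weak convexity. Weak convexity with $\lambda<1/\rho$ matters only for uniqueness of the proximal point. Because $f$ is proper, lsc and bounded below by $m$, the map $y\mapsto f(y)+\|x-y\|^2/(2\lambda)$ is proper, lsc and level-bounded. Indeed, its $\alpha$-sublevel set lies in the closed ball of radius $\sqrt{2\lambda(\alpha-m)}$ about $x$. It therefore attains its minimum for every $x$ and every $\lambda>0$. So the ``$\min$'' in Definition \ref{def:moreau} is already justified under the lemma's own hypotheses, and your cleaner variant applies unconditionally, without Assumption \ref{as:weak} or the restriction $\lambda<1/\rho$.
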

\begin{proof}
This is a standard result. Please refer to Example 1.46 of  \cite{rockafellar1998variational} for a complete proof.    
\end{proof}

\begin{definition}\label{def:qg}
(\textbf{Quadratic Growth}) Let $f$ be proper, lower semicontinuous, and bounded below with unique minimizer $x^{*}$. $f$ is said to satisfy the Quadratic Growth condition if the following holds true with $\mu>0$:
\begin{equation}
f(x) - f(x^{*}) \geq \frac{\mu}{2}\|x - x^{*}\|^2    
\end{equation}
\end{definition}

\section{Constrained Bilevel Optimization}

\begin{table*}[ht]    
    \centering
    \caption{ Comparison of constrained bilevel optimization works}
    \begin{tabular}{|c|c|c|c|c|c|}
    \hline
     References & Inner Level & \makecell{Inner Level \\Constraints} &\makecell{Iteration \\Complexity} & \makecell{Sample \\ Complexity} \\    
    \hline
     \cite{tsaknakis2022implicit} & Strongly convex  & Linear &$O(\epsilon^{-1})$ & \xmark \\    

     \cite{xiao2023alternating} & Strongly convex & Equality only & $O(\epsilon^{-2})$ & \xmark \\    
 
     \cite{xu2023efficient} & Strongly convex & Convex & \xmark & \xmark \\    
     
     \cite{khanduri2023linearly}& Strongly convex & Linear  & $O(\epsilon^{-2})$ & \xmark \\         
     \cite{liu2023value} & Non-convex & Non-convex &\xmark &\xmark \\    
     
     \cite{jiang2024primal} &  Strongly convex & Convex  &$O(\epsilon^{-2.5})$& \xmark \\    

     \cite{yao2024overcoming} &  Convex & Convex  &$O(\epsilon^{-2})$& \xmark \\    

     \cite{kornowski2024first} &  Strongly convex & Linear  &$O(\epsilon^{-3})$& $O(\epsilon^{-4})$ \\    
 
     \cite{yao2024constrained} &  Convex & Convex  &$O(\epsilon^{-2.5})$& \xmark \\    
     
     Ours & Non-convex & Non-convex  & $O(\epsilon^{-2})$ & $O(\epsilon^{-4})$ \\    
    \hline
    \end{tabular}\label{tab:comparsion2}

\end{table*}

Our treatment of the constrained bilevel RL problem could also be specialized to the constrained bilevel optimization setting. The constrained optimization problem we are dealing with is given as Eq. \eqref{eq:obj2}.
\begin{equation}\label{eq:obj2}
\begin{split}
&\min_{x} f(x, y^{*}(x)) \\ 
&\textbf{s.t}\;\; y^{*}(x) \in \Big(\arg\min_y g(x,y) \;\;\textbf{s.t}\;\; h(y) < c\Big)    
\end{split}
\end{equation}
Here, $f(x,y), g(x,y)$ and $h(y)$ are non-convex functions. Similar to Eq. \eqref{eq:biobj3} we define an approximate objective function in Eq. \eqref{eq:opt-biobj3} where $h^{+}(y) = \max(h(y)-c,0)$.
\begin{equation}\label{eq:opt-biobj3}
\begin{split}
\min_{x,y} &f(x,y) + \frac{1}{\sigma_1}\Big(g(x,y) + \frac{h^{+}(y)}{\sigma_3}\Big)\\ 
&-\frac{1}{\sigma_1}\Big(\min_z g(x,z) + \frac{h^{+}(z)}{\sigma_2}\Big)   
\end{split}
\end{equation}
For solving the optimization problem in Eq. \eqref{eq:opt-biobj3}, we assume (see Assumption \ref{as:4}) that we have access to unbiased gradient and subgradients of the function $f(x,y), g(x,y)$ and $h(y)$ with bounded variance. This is where we differ from the RL setting, where we didn't make this assumption, and the sample complexity analysis yielded a bias term $O(\epsilon_{bias})$ related to Q-value function estimation.   

\begin{assumption}\label{as:4}
Let $\nabla\hat{f}(x,y), \nabla\hat{g}(x,y)$ and $\nabla\hat{h}(y)$ be unbiased estimators with bounded variance. Therefore, we have:
\begin{equation}
\begin{split}
 \mathbb{E}[\nabla\hat{f}(x,y)] &= \nabla f(x,y) \\   
 \mathbb{E}[\nabla\hat{g}(x,y)] &= \nabla g(x,y) \\  
 \mathbb{E}[\nabla\hat{h}(y)] &= \nabla h(y) \\  
\end{split}
\end{equation}
Also,
\begin{equation}
\begin{split}
 \mathbb{E}\|\nabla\hat{f}(x,y)] - \nabla f(x,y)\|^2 &\leq \sigma_{f} \\      
 \mathbb{E}\|\nabla\hat{g}(x,y)] - \nabla g(x,y)\|^2 &\leq \sigma_{g}  \\    
 \mathbb{E}\|\nabla\hat{h}(y)] - \nabla h(y)\|^2 &\leq \sigma_{h} \\     
\end{split}
\end{equation}

Using the definition of subgradient (Defintion \ref{def:clarke}) we get:
\begin{equation}
\partial h^{+}(y) = \tau(h(y)) \nabla h(y)
\end{equation}
\begin{equation}
\tau(h(y)) =
\begin{cases}
 1 &  h(y) > c\\
 0 &  h(y) < c\\
 1/2 & h(y) = c
\end{cases}
\end{equation}
If we know $h(y)$ exactly, we have the following unbiased subgradient with bounded variance:
\begin{equation}
\begin{split}
\mathbb{E}[\partial \hat{h}^{+}(y)] = \mathbb{E}[\tau(h(y)) \nabla \hat{h}(y)] = \partial h^{+}(y)\\
\mathbb{E}\|\partial \hat{h}^{+}(y) - \partial h^{+}(y)\|^2  \leq \sigma_{h+}      
\end{split}
\end{equation}
\end{assumption}

Using Assumption \ref{as:4}, we obtain the convergence rate in Theorem \ref{thm:main2} similar to Theorem \ref{thm:main}.

\begin{theorem}\label{thm:main2}
Suppose Assumption \ref{as:weak} to \ref{as:4} holds true. Then the Algorithm \ref{alg:ppsd} obtains the following convergence rate with learning rate $\eta_t =\frac{C_a}{(1+t)^a}$ for some $a>0$:
\begin{equation}
\begin{split}
\frac{1}{T}\sum_{t=0}^{T-1}\|\nabla \Phi_{\lambda}(x_t)\|^2  
\leq& 
O(T^{a-1}) + O\Big(\frac{1}{K}\Big) + O(T^{-a})\\ 
&+ O\Big(\frac{1}{B}\Big) + O\Big(\frac{\gamma^{2H}}{B}\Big) + O(\gamma^{2H})\\ 
&+ O(\exp^{-B^q})  \\    
\end{split}
\end{equation}
By choosing $H=\Theta(\log\epsilon)$  $a=1/2$, $T = \Theta(\epsilon^{-2})$, $K = \Theta(\epsilon^{-1})$ and B = $\Theta(\epsilon^{-1})$ we obtain an iteration complexity of $T = O(\epsilon^{-2})$ and sample complexity of $B.H.K.T = \tilde{O}(\epsilon^{-4})$.
\begin{equation}
\begin{split}
\frac{1}{T}\sum_{t=0}^{T-1}\|\nabla \Phi_{\lambda}(x_t)\|^2 \leq \tilde{O}(\epsilon) 
\end{split}
\end{equation}   
Here, $\Phi_{\lambda}(x)$ is the Moreau envelope. $ B$ is the batch size used for empirical expectation, $H$ is the horizon length used for estimation of infinite horizon quantities, $K$ is the number of gradient updates for inner level objective optimization, $T$ is the number of gradient updates for outer level objective, and $q$ is defined in Lemma \ref{lm:grad}.
\end{theorem}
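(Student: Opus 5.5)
The plan is to mirror the proof of Theorem \ref{thm:main}, using Assumption \ref{as:4} in place of the Q-function approximation argument. The errors coming from Assumption \ref{as:3} therefore disappear. We combine the outer-level descent bound of Lemma \ref{lm:outer} with the inner-level global convergence of Lemma \ref{lm:inner}.

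First, the estimator-error step. Under Assumption \ref{as:4}, the mini-batch estimates of $\partial_y h^{1}$, $\partial_y h^{2}$ and $\partial_x\phi$ built from $B$ i.i.d. samples are unbiased. Their variance is at most $(\sigma_f+\sigma_g+\sigma_h+\sigma_{h+})/B$ up to the constants $1/\sigma_1,1/\sigma_3,1/\sigma_2$, so $\delta(B)^2=O(1/B)$.

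The one non-trivial piece is the switching factor $\tau(\hat h(y))$. In practice $h(y)$ is estimated, so $\tau(\hat h)\neq\tau(h)$ can occur only when $|\hat h(y)-c|$ and $|h(y)-c|$ straddle the threshold. I would split on the event $\{|h(y)-c|\le m\}$:
\begin{itemize}
\item Off this event, a sub-Gaussian/Hoeffding concentration bound (Lemma \ref{lm:grad}) gives misclassification probability $O(\exp(-B^{q}))$.
\item On the event, Assumption \ref{as:margin} bounds its probability by $C_m m^\nu$.
\end{itemize}
Choosing $m$ polynomially small in $B$ makes the total contribution $O(1/B)+O(\exp(-B^q))$. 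The $O(\gamma^{2H})$ and $O(\gamma^{2H}/B)$ terms remain only if $h$ or $g$ are still evaluated by $H$-truncated rollouts; otherwise they are vacuous, and the stated bound still holds.

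Second, the inner loop. For fixed $x_t$, I apply Lemma \ref{lm:inner} to $h^{1}(x_t,\cdot)$ and $h^{2}(x_t,\cdot)$, using the $\rho$-weak convexity from Assumption \ref{as:weak}. This gives
\[
h^{1}_{\lambda}(x_t,y_t^K)-h^{1}(x_t,y^*(x_t))=O(1/K)+O(1/B),
\]
and similarly for $z$. By Lemma \ref{lm:moreau}, the minimizers and minimum values of $h^{i}$ and its envelope coincide. The KL condition with $\theta=1/2$ (Assumption \ref{as:2}) transfers to a PL inequality for the envelope, which in turn gives quadratic growth (Definition \ref{def:qg}). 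This converts the suboptimality gap into $\|y_t^K-y^*(x_t)\|^2=O(1/K)+O(1/B)$, and likewise for $z$. One caveat: quadratic growth controls distance to $\mathrm{prox}(y_t^K)$ rather than to $y_t^K$ itself. I would close that gap with $\|y-\mathrm{prox}_{\lambda h}(y)\|=\lambda\|\nabla h_\lambda(y)\|$ together with the PL bound on the envelope.

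Third, the outer loop. Lemma \ref{lm:outer} holds per iteration. Summing it over $t=0,\dots,T-1$ and dividing by $T$, the terms in $\Phi_\lambda(x_t)-\Phi_\lambda(x_{t+1})$ are handled by Abel summation with $\eta_t=C_a(1+t)^{-a}$. Since $\Phi_\lambda$ is bounded below, this gives
\[
\frac{1}{T}\sum_t \eta_t^{-1}\bigl(\Phi_\lambda(x_t)-\Phi_\lambda(x_{t+1})\bigr)\le \frac{C}{T\eta_{T-1}} = O(T^{a-1}).
\]
The $\|\partial\hat\Phi\|^2$ term enters Lemma \ref{lm:outer} multiplied by $\eta_t$. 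It is bounded using Lipschitzness of $f,g,h$ and the bounded variance, which gives $O(T^{-a})$ after averaging. The inner-loop distance terms are controlled through Assumption \ref{as:5} and Step 2, contributing $O(1/K)+O(1/B)$.

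Collecting all terms yields the displayed rate. Setting $a=1/2$, $T=\Theta(\epsilon^{-2})$, $K=B=\Theta(\epsilon^{-1})$ and $H=\Theta(\log(1/\epsilon))$ gives the $\tilde O(\epsilon)$ bound and $BHKT=\tilde O(\epsilon^{-4})$ samples.

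The main obstacle I expect is the descent inequality inside Lemma \ref{lm:inner} and Lemma \ref{lm:outer}. There, the subgradient step is taken on the non-smooth function, while the decrease is measured on the Moreau envelope. Following the paper, I would use $\rho$-hypomonotonicity (Definition \ref{def:mono}) of $\partial\Phi$ at $x_t$ and at $\mathrm{prox}_{\lambda\Phi}(x_t)$, with $\lambda<1/\rho$. This bounds $\langle \partial\Phi(x_t),\nabla\Phi_\lambda(x_t)\rangle$ from below by $(1/\lambda-\rho)\lambda^2\|\nabla\Phi_\lambda(x_t)\|^2$. The bias introduced by replacing $y^*,z^*$ with $y_t^K,z_t^K$ is then absorbed via Young's inequality.
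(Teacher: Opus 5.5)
Your proposal follows the paper's proof of this theorem essentially step for step. It uses the outer descent inequality of Lemma~\ref{lm:app-outer}, inner-loop convergence via Lemma~\ref{lm:app-inner} with KL$\to$PL$\to$QG on the Moreau envelopes, the margin/Hoeffding treatment of $\tau(\hat h)$ from Lemma~\ref{lm:grad}, and the same summation with $\eta_t=C_a(1+t)^{-a}$, with Assumption~\ref{as:4} simply deleting the $\epsilon_{bias}$ terms as in Lemmas~\ref{lm:h1optim}--\ref{lm:h2optim}.

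Only minor fixes are needed. The Abel-summation step needs $\sup_t \Phi_\lambda(x_t)$ bounded (the paper's $K_\phi$, available since $f,g,h$ are bounded), not merely a lower bound. The hypomonotonicity bound is $(1-\rho\lambda)\|\nabla\Phi_\lambda(x_t)\|^2$, not $(1/\lambda-\rho)\lambda^2\|\nabla\Phi_\lambda(x_t)\|^2$. Your prox caveat is unnecessary, since QG for the smooth envelope itself bounds $\|y_t^K-y^*(x_t)\|$ directly (the envelope and $h^{i}$ share minimizers by Lemma~\ref{lm:moreau}); your proposed fix would in any case need smoothness of the envelope, not PL, to bound $\|\nabla h_\lambda\|$ from above.
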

\begin{proof}
A detailed proof of the Theorem \ref{thm:main2} is given in the Appendix as \ref{thm:app-main2}.   
\end{proof}
Our result in Theorem \ref{thm:main2} is one of the first results to provide a convergence guarantee for a constrained bilevel optimization problem without assuming convexity for the inner level problem or linear constraints in the inner level.
\section{Conclusion}
In this work, we propose one of the first algorithms for the constrained bilevel RL problem called Constrained Bilevel Subgradient Optimization (CBSO). We provide convergence guarantee with iteration complexity of $O(\epsilon^{-2})$ and sample complexity of $O(\epsilon^{-4})$. We are also the first to provide a convergence guarantee in the constrained bilevel RL setting. We show through our analysis why directly analyzing subgradients and the use of Randomized smoothing for handling non-smooth analysis doesn't work. Further, our analysis could also be specialized to the constrained bilevel optimization setting. Most of the results in the constrained bilevel optimization setting assume convexity of the inner level problem and linear constraints. In our work, we provide the first convergence guarantee, removing these assumptions in the particular setting. In our current work, we have assumed that the constraints are fixed and not updated in the outer level. As part of future work, we would try to address the setting where constraints are learnt in the outer level. Updating constraints presents certain difficulties because of the involvement of subgradients.

\bibliography{example_paper}
\bibliographystyle{bibstyle}

\newpage
\appendix
\onecolumn
\section{Extended Related Work}\label{sc:related}
\subsection{Bilevel optimization}
Bilevel optimization has been widely studied in the context of hyperparameter optimization, meta-learning, and hierarchical decision-making. \cite{ghadimi2018approximation} is one of the first works to prove sample complexity guarantees for hypergradient based algorithm for bilevel optimization with a convexity assumption for the inner level problem. Subsequently, \cite{chen2021closing} provided a general bilevel optimization framework that could be used for stochastic min-max and RL. Further, \cite{chen2022single} provided non-asymptotic convergence results for a single-timescale bilevel optimization problem. \cite{yang2021provably,khanduri2021near} improved the iteration complexity of the bilevel optimization algorithm by using momentum. However, all the works discussed above use hessian-based algorithm and are computationally intensive.

To mitigate the computational burden of Hessian inversion, several works proposed Hessian-free and fully first-order bilevel algorithms \cite{li2022fully,sow2022convergence,kwon2023fully,liu2022bome}. Penalty-based approaches further simplify algorithm design by avoiding hyperparameters; convergence guarantees for such methods under smoothness assumptions are established in \cite{shen2023penalty,kwon2024on}. Further, value-function-based reformulations for bilevel problems were introduced by \cite{liu2021value}. Advancements in bilevel optimization lay the foundation for bilevel RL algorithm development.

More recently, envelope-based techniques have been introduced to handle nonsmoothness and weak convexity. \cite{gao2023moreau} used Moreau envelop with convexity assumption on the inner level problem. On the other hand, \cite{liu2024moreau} leverages the Moreau envelope to design single-loop and Hessian-free algorithms for nonconvex bilevel optimization. Despite this progress, most existing analyses still rely on smoothness or strong regularity of the lower-level objective. 

\subsection{Bilevel RL}
Bilevel RL framework has recently emerged as a powerful modeling paradigm in RL, capturing hierarchical structures arising in meta-RL, incentive design, and reinforcement learning from human feedback (RLHF). Early work established stochastic approximation frameworks for bilevel RL, often combining two-timescale updates with policy evaluation and improvement. \cite{hong2023two} provided finite-time complexity guarantees for a two-timescale bilevel framework, with applications to actor–critic methods.

Penalty-based formulations tailored to RL and RLHF are developed in \cite{shen2025principled}, where convergence guarantees are established under smoothness and regularity assumptions. \cite{gaur2025sample} studied the sample complexity of bilevel reinforcement learning, highlighting the additional challenges induced by nested policy optimization and providing state-of-the-art sample complexity for both bilevel RL and optimization for the unconstrained setup. Further, contextual extensions are explored in \cite{NEURIPS2024_e66309ea}, which obtained a hypergradient-based algorithm, assuming a softmax policy.

Several recent works aim to relax lower-level convexity assumptions in bilevel RL. \cite{yang2025bilevel} developed hypergradient-based algorithms without assuming lower-level convexity, while \cite{chakraborty2024parl} proposed a unified bilevel framework for policy alignment in RLHF. However, most bilevel RL analyses assume that the inner level problem is unconstrained and do not discuss how their proposed method could be extended to the constrained setup.

\subsection{Constrained bilevel optimization}
Another active line of research focuses on bilevel optimization with explicit lower-level constraints. Early approaches employ hypergradients and projected stochastic methods \cite{tsaknakis2022implicit,xiao2023alternating}, assuming strong convexity in the inner-level problem. Further, \cite{xu2023efficient} used hypergradient along with Clarke subdifferential with a convex inner-level problem and equality and inequality constraints. \cite{khanduri2023linearly} also used a hypergradient-based approach and provide convergence guarantee by assuming weak convexity for the outer-level problem.

The works mentioned previously used a computationally intensive Hessian-based hypergradient for the algorithm. \cite{kornowski2024first} developed a first-order algorithm for the constrained bilevel optimization setting with linear inequality constraints and a strongly convex inner-level problem. \cite{jiang2024primal} also proposed a first-order algorithm motivated by the primal-dual formulation for convex inner-level constraints, but assumes strong convexity for the inner-level problem. A proximal Lagrangian-based approach inspired by Moreau envelop was given by \cite{yao2024constrained, yao2024overcoming} by removing the assumption of strong convexity for the inner-level problem.

Although these methods substantially broaden the scope of constrained bilevel optimization, their analyses typically rely on convexity assumptions. Most of the work in the constrained bilevel optimization setting considers the inner-level problem and constraints to be convex (see Table \ref{tab:comparsion2}). \cite{liu2023value} removed the convexity assumption but doesn't provide a non-asymptotic convergence guarantee. Hence, there isn't sufficient algorithmic development in the constrained bilevel optimization literature to help in developing a generally parameterized policy gradient-based algorithm for bilevel RL, as bilevel RL requires a non-convex inner level problem. Our work is the first to provide an algorithm and sample complexity guarantee for the constrained bilevel RL problem (see Table \ref{tab:comparsion1}). Further, our guarantee also holds true for the constrained bilevel optimization setting (see Table \ref{tab:comparsion2}) by removing convexity assumptions with some mild unbiased estimator assumptions. 

\section{Lemma for Penalty-based Objective Validity}

In this section, we present a lemma to prove that the penalty-based formulation approximately solves the original constrained bilevel RL problem. 
\begin{lemma}\label{lm:app-approx}
 Let $\bar{x}$ be the $\epsilon$-optimal solution of the penalty-based objective (\eqref{eq:app-biobj3}). Then, $\bar{x}$ is also an $\epsilon'$-optimal solution of the relaxed objective (Eq. \eqref{eq:app:biobj4}) with at most $\epsilon_\lambda$ violation of the constraints. Here, $C_f=\max_{x,y}|f(x,y)|$,  $C_g=\max_{x,y}|g(x,y)|$ and $C_h=(\max_{y}|h(y)|+|c_0|)$. Further, $\epsilon_{\lambda} = \max\{2C_g\sigma_2,\; 2C_f\sigma_1 + 2C_g(\sigma_2/\sigma_3),\; 2C_f\sigma_1\sigma_3 + 2C_g\sigma_3 \}$. Also, $\epsilon'=(1/\sigma_1)( \epsilon_\lambda(1+1/ \sigma_2))+\epsilon$.
\begin{equation}\label{eq:app-biobj3}
\begin{split}
\min_{x,y} &f(x,y) + \frac{1}{\sigma_1}\Big(g(x,y) + \frac{h^{+}(y)}{\sigma_3}\Big)-\frac{1}{\sigma_1}\Big(\min_z \Big(g(x,z) + \frac{h^{+}(z)}{\sigma_2}\Big)\Big)   
\end{split}
\end{equation}
\begin{equation}\label{eq:app:biobj4}
\begin{split}
&\min_{x,y} f(x,y) \\
\textbf{s.t}&\;\; (g(x,y) \;\;\textbf{s.t}\;\; h(y) < c_0 + \epsilon_\lambda) \leq (\min_z g(x,z) \;\;\textbf{s.t}\;\; h(z) < c_0 + \epsilon_\lambda) + \epsilon_\lambda     
\end{split}    
\end{equation}
 
\end{lemma}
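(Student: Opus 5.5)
Let $(\bar x,\bar y)$ be the $\epsilon$-optimal point of the penalty objective \eqref{eq:app-biobj3}, and write $P(x,y)$ for that objective and $v_{\sigma_2}(x)=\min_z \big(g(x,z)+h^{+}(z)/\sigma_2\big)$. The proof runs in three steps: bound the constraint violation of the lower-level minimizer, bound the violation and suboptimality of $\bar y$, and transfer $\epsilon$-optimality to the relaxed problem \eqref{eq:app:biobj4}. Throughout, only the boundedness constants $C_f,C_g,C_h$ and elementary comparisons of penalized values are used. No smoothness or structure of the RL functions is needed.

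\textbf{Step 1 (lower-level violation).} Let $\bar z\in\arg\min_z \big(g(\bar x,z)+h^{+}(z)/\sigma_2\big)$. Comparing with any feasible $z_0$ (where $h^{+}(z_0)=0$) gives
\[
h^{+}(\bar z)/\sigma_2\le g(\bar x,z_0)-g(\bar x,\bar z)\le 2C_g,
\]
so $h^{+}(\bar z)\le 2C_g\sigma_2$. This is the first entry of $\epsilon_\lambda$. Hence $\bar z$ is feasible for the $\epsilon_\lambda$-relaxed constraint, and
\[
v_{\sigma_2}(\bar x)\ge \min\{g(\bar x,z): h(z)<c_0+\epsilon_\lambda\}.
\]
Conversely, any $z$ that is feasible for the original constraint gives $v_{\sigma_2}(\bar x)\le \min\{g(\bar x,z):h(z)<c_0\}$.

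\textbf{Step 2 (violation and suboptimality of $\bar y$).} Compare $P(\bar x,\bar y)$ with $P(\bar x,\bar z)$, using $\epsilon$-optimality in $y$ (absorbing $\epsilon$ into constants):
\[
f(\bar x,\bar y)+\tfrac{1}{\sigma_1}\big(g(\bar x,\bar y)+h^{+}(\bar y)/\sigma_3\big)\le f(\bar x,\bar z)+\tfrac{1}{\sigma_1}\big(g(\bar x,\bar z)+h^{+}(\bar z)/\sigma_3\big)+\epsilon .
\]
Rearranging and bounding $|f|\le C_f$ gives
\[
g(\bar x,\bar y)+\frac{h^{+}(\bar y)}{\sigma_3}-v_{\sigma_2}(\bar x)\le 2C_f\sigma_1+h^{+}(\bar z)\Big(\frac{1}{\sigma_3}-\frac{1}{\sigma_2}\Big).
\]
Using $h^{+}(\bar z)\le 2C_g\sigma_2$, the right side is at most $2C_f\sigma_1+2C_g\sigma_2/\sigma_3$, which is the second entry of $\epsilon_\lambda$. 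Dropping $h^{+}(\bar y)\ge 0$ gives the value-gap condition $g(\bar x,\bar y)\le v_{\sigma_2}(\bar x)+\epsilon_\lambda$. Combined with Step 1, this yields the relaxed lower-level inequality in \eqref{eq:app:biobj4}. Separately, bounding $g(\bar x,\bar y)-v_{\sigma_2}(\bar x)\ge -2C_g$ and multiplying through by $\sigma_3$ gives
\[
h^{+}(\bar y)\le 2C_f\sigma_1\sigma_3+2C_g\sigma_3 ,
\]
which is the third entry. So $h(\bar y)<c_0+\epsilon_\lambda$, up to strictness, which holds by choosing constants slightly larger.

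\textbf{Step 3 (optimality transfer).} Take any $(x,y)$ feasible for the relaxed problem \eqref{eq:app:biobj4}. Its penalized value satisfies
\[
P(x,y)\le f(x,y)+\frac{1}{\sigma_1}\Big(g(x,y)-v_{\sigma_2}(x)+\frac{\epsilon_\lambda}{\sigma_3}\Big).
\]
Step 1 lower-bounds $v_{\sigma_2}(x)$ by the relaxed minimum minus a term of order $\epsilon_\lambda/\sigma_2$. The relaxed feasibility then gives $g(x,y)-v_{\sigma_2}(x)\lesssim \epsilon_\lambda(1+1/\sigma_2)$. On the other side, $P(\bar x,\bar y)\ge f(\bar x,\bar y)$, because Step 1 shows the bracket $g(\bar x,\bar y)+h^{+}(\bar y)/\sigma_3-v_{\sigma_2}(\bar x)$ is essentially nonnegative. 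Chaining these,
\[
f(\bar x,\bar y)\le P(\bar x,\bar y)\le P(x,y)+\epsilon\le f(x,y)+\frac{\epsilon_\lambda(1+1/\sigma_2)}{\sigma_1}+\epsilon,
\]
which is $\epsilon'$-optimality.

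\textbf{Main obstacle.} Step 3 is the hard part. The minimum of the penalized lower level must be related to the constrained minimum under the \emph{relaxed} threshold, while the penalty uses $\sigma_2$ but the feasibility check uses $\epsilon_\lambda$. If the relaxed-constraint minimizer has positive $h^{+}$, its penalty contributes $h^{+}/\sigma_2\le\epsilon_\lambda/\sigma_2$. This is where the factor $1/\sigma_2$ in $\epsilon'$ comes from, and I would handle it by inserting that minimizer as a competitor in $v_{\sigma_2}$. I would first try this direct competitor argument, and only if the signs fail would I add a nonnegativity check for the bracket.
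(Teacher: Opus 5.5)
Your skeleton (three violation bounds from comparing penalized values, then an $\epsilon$-optimality transfer) matches the paper's. Step 3, however, has a genuine gap. The claim $P(\bar x,\bar y)\ge f(\bar x,\bar y)$ does not follow from Step 1. It is also false in the regime $\sigma_3>\sigma_2$, which is the regime the stated $\epsilon'$ (and the paper's $\sigma_3\gg\sigma_2$) relies on.

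To lower-bound the bracket $g(\bar x,\bar y)+h^{+}(\bar y)/\sigma_3-v_{\sigma_2}(\bar x)$, you need an upper bound on $v_{\sigma_2}(\bar x)$ that is comparable to $g(\bar x,\bar y)$. The one that works is the competitor $z=\bar y$, which gives bracket $\ge h^{+}(\bar y)(1/\sigma_3-1/\sigma_2)$. Equality holds whenever $\bar y$ itself attains $v_{\sigma_2}(\bar x)$, so the bracket is then negative as soon as $h^{+}(\bar y)>0$. The paper combines exactly this competitor bound with your third entry $h^{+}(\bar y)\le\epsilon_\lambda$ to get bracket $\ge-\epsilon_\lambda(1/\sigma_2-1/\sigma_3)$.

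On the other side, Step 1 applied at an arbitrary $x$ already gives $v_{\sigma_2}(x)\ge\min\{g(x,z):h(z)<c_0+\epsilon_\lambda\}$, with no $\epsilon_\lambda/\sigma_2$ loss. Relaxed feasibility then gives $P(x,y)\le f(x,y)+\frac{\epsilon_\lambda}{\sigma_1}(1+1/\sigma_3)$, and the chain becomes
\[
f(\bar x,\bar y)-\frac{\epsilon_\lambda}{\sigma_1}\Big(\frac{1}{\sigma_2}-\frac{1}{\sigma_3}\Big)\le P(\bar x,\bar y)\le P(x,y)+\epsilon\le f(x,y)+\frac{\epsilon_\lambda}{\sigma_1}\Big(1+\frac{1}{\sigma_3}\Big)+\epsilon .
\]
The $1/\sigma_3$ terms cancel and leave exactly $\epsilon'$. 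So the $1/\sigma_2$ in $\epsilon'$ comes from the negative part of the $\bar y$-bracket. It does not come from inserting the relaxed minimizer into $v_{\sigma_2}(x)$: that bounds $v_{\sigma_2}(x)$ from above, which is the wrong direction for an upper bound on $P(x,y)$.

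Your final line also drops the $\epsilon_\lambda/(\sigma_1\sigma_3)$ contributed by $h^{+}(y)/\sigma_3$. If you keep it and use bracket $\ge 0$ (valid only for $\sigma_3\le\sigma_2$), you get $\frac{\epsilon_\lambda}{\sigma_1}(1+1/\sigma_3)+\epsilon$. This exceeds $\epsilon'$ unless $\sigma_3=\sigma_2$.

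The same direction issue undermines Step 2's claim that you obtain the relaxed lower-level inequality. Combining $g(\bar x,\bar y)\le v_{\sigma_2}(\bar x)+\epsilon_\lambda$ with a \emph{lower} bound on $v_{\sigma_2}(\bar x)$ by the relaxed minimum says nothing about $g(\bar x,\bar y)$ minus that minimum. Inserting the relaxed minimizer into $v_{\sigma_2}(\bar x)$ instead costs up to $\epsilon_\lambda/\sigma_2\ge 2C_g$, which is no better than the trivial bound. The paper only certifies the weaker violation $g(\bar x,\bar y)-g(\bar x,\bar z)\le\epsilon_\lambda$ relative to the penalized minimizer $\bar z$, and you should claim no more than that.

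Your route to the third entry, via the comparison with $\bar z$, also leaves an extra $h^{+}(\bar z)(1-\sigma_3/\sigma_2)$, which is nonpositive only when $\sigma_3\ge\sigma_2$. The paper avoids this by comparing $\bar y$ directly with a feasible $y_0$ in $h^{1}(\bar x,\cdot)$. Finally, in the paper $\bar y=y^{*}(\bar x)$ is an exact inner minimizer, so there is no $\epsilon$ to absorb in Steps 1--2.
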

\begin{proof}

We have:
\begin{equation}
 \bar{y} = y^{*}(x) = \arg\min_y h^{1}(x,y) = \arg\min_y f(x,y) + \frac{g(x,y) + \frac{h^{+}(y)}{\sigma_3}}{\sigma_1}   
\end{equation}
\begin{equation}
 \bar{z} = z^{*}(x) = \arg\min_z h^{2}(x,z) = \arg\min_z g(x,z) + \frac{h^{+}(z)}{\sigma_2}   
\end{equation}

We know that $(\bar{x},\bar{y}, \bar{z})$ is the $\epsilon$-optimal solution of the objective function (Eq. \eqref{eq:app-biobj3}). 
We have three types of constraints violation: $h^{+}(\bar{y}), h^{+}(\bar{z})$ and $g(\bar{x},\bar{y}) - g(\bar{x},\bar{z})$. We will now find upper bound on these violations.

Let us consider $h^{+}(\bar{z})$. Let $z_0$ be a feasible point such that $h^{+}(z_0)=0$ and we know $\bar{z}$ is minimizer of $h^{2}(x,y)$ Then we have:
\begin{equation}\label{eq:app-gap1} 
\begin{split}
&g(\bar{x},\bar{z}) + \frac{h^{+}(\bar{z})}{\sigma_2} 
\leq g(\bar{x},z_0) + \frac{h^{+}(z_0)}{\sigma_2} \\
&g(\bar{x},\bar{z}) + \frac{h^{+}(\bar{z})}{\sigma_2} \leq g(\bar{x},z_0) \\
& h^{+}(\bar{z}) \leq \sigma_2(g(\bar{x},z_0) - g(\bar{x},\bar{z})) \leq 2\sigma_2C_g \\
\end{split}
\end{equation}

Let us consider $g(\bar{x},\bar{y}) - g(\bar{x},\bar{z})$. $\bar{y}$ is minimizer of $h^{1}(x,y)$ Then we have: 
\begin{equation}\label{eq:app-gap2} 
\begin{split} 
&f(\bar{x},\bar{y}) + \frac{g(\bar{x},\bar{y}) + \frac{h^{+}(\bar{y})}{\sigma_3}}{\sigma_1} \leq f(\bar{x},\bar{z}) + \frac{g(\bar{x},\bar{z}) + \frac{h^{+}(\bar{z})}{\sigma_3}}{\sigma_1} \\
&g(\bar{x},\bar{y}) - g(\bar{x},\bar{z})   \leq \sigma_1\big(f(\bar{x},\bar{z}) - f(\bar{x},\bar{y})\big) + \frac{h^{+}(\bar{z})}{\sigma_3} \leq 2C_f\sigma_1 + 2C_g\frac{\sigma_2}{\sigma_3} 
\end{split} 
\end{equation}

Let us consider $h^{+}(\bar{y})$. Let $y_0$ be a feasible point such that $h^{+}(y_0)=0$ and we know $\bar{z}$ is minimizer of $h^{1}(x,y)$ Then we have:
\begin{equation}\label{eq:app-gap3}
\begin{split} 
&f(\bar{x},\bar{y}) + \frac{g(\bar{x},\bar{y}) + h^{+}(\bar{y})/\sigma_3}{\sigma_1} \leq f(\bar{x},y_0) + \frac{g(\bar{x},y_0) + h^{+}(y_0)/\sigma_3}{\sigma_1} \\
& h^{+}(\bar{y}) \leq \sigma_3\sigma_1\big(f(\bar{x},y_0) -f(\bar{x},\bar{y})\big) + \sigma_3\big(g(\bar{x},y_0) - g(\bar{x},\bar{y})\big) \leq 2C_f\sigma_1\sigma_3 + 2C_g\sigma_3  \\
\end{split} 
\end{equation}

Using upper bounds on the three types of violation, we can define $\epsilon_{\lambda}$ as:
\begin{equation}
\epsilon_{\lambda} = \max\{2C_g\sigma_2,\; 2C_f\sigma_1 + 2C_g\frac{\sigma_2}{\sigma_3},\; 2C_f\sigma_1\sigma_3 + 2C_g\sigma_3 \}
\end{equation}

Now, for a feasible solution ($\hat{x},\hat{y},\hat{z}$) of the relaxed objective function
\begin{equation}\label{eq:app-relax}
\begin{split}
f(\bar{x},\bar{y}) + \frac{1}{\sigma_1}\Big(g(\bar{x},\bar{y}) + \frac{h^{+}(\bar{y})}{\sigma_3}\Big) -\frac{1}{\sigma_1}\Big(g(\bar{x},\bar{z}) + \frac{h^{+}(\bar{z})}{\sigma_2}\Big) \\
\leq f(\hat{x},\hat{y}) + \frac{1}{\sigma_1}\Big(g(\hat{x},\hat{y}) + \frac{h^{+}(\hat{y})}{\sigma_3}\Big) -\frac{1}{\sigma_1}\Big(g(\hat{x},\hat{z}) + \frac{h^{+}(\hat{z})}{\sigma_2}\Big) +\epsilon \\
\end{split}
\end{equation}

Also, $\bar{z}$ is a minimizer of $h^{2}(x,z)$. Using that, we get:
\begin{equation}\label{eq:app-help2}
\begin{split}
 g(\bar{x},\bar{y}) + \frac{h^{+}(\bar{y})}{\sigma_2} \geq g(\bar{x},\bar{z}) + \frac{h^{+}(\bar{z})}{\sigma_2}
\end{split}    
\end{equation}

Further, using Eq. \eqref{eq:app-help2} and assuming $\sigma_3 \gg \sigma_2$ because we want to keep upper bound on $g(\bar{x},\bar{y}) - g(\bar{x},\bar{y})$ low, we get:
\begin{equation}\label{eq:app-help}
\begin{split}
g(\bar{x},\bar{y}) + \frac{h^{+}(\bar{y})}{\sigma_3} - g(\bar{x},\bar{z}) - \frac{h^{+}(\bar{z})}{\sigma_2} 
&\geq g(\bar{x},\bar{y}) + \frac{h^{+}(\bar{y})}{\sigma_3} - g(\bar{x},\bar{y}) - \frac{h^{+}(\bar{y})}{\sigma_2}\\
&\geq \frac{h^{+}(\bar{y})}{\sigma_3} - \frac{h^{+}(\bar{y})}{\sigma_2} \geq \epsilon_{\lambda}\Big(\frac{1}{\sigma_3} - \frac{1}{\sigma_2}\Big) \quad (\because h^{+}(\bar{y})\leq\epsilon_{\lambda} ) 
\end{split}    
\end{equation}

Using Eq. \eqref{eq:app-help} in Eq. \eqref{eq:app-relax}:

\begin{equation}
\begin{split}
f(\bar{x},\bar{y}) \leq&f(\hat{x},\hat{y}) + \frac{1}{\sigma_1}\Big(g(\hat{x},\hat{y}) + \frac{h^{+}(\hat{y})}{\sigma_3}- \epsilon_{\lambda}\Big(\frac{1}{\sigma_3} - \frac{1}{\sigma_2} \Big) \Big) -\frac{1}{\sigma_1}\Big(g(\hat{x},\hat{z}) + \frac{h^{+}(\hat{z})}{\sigma_2} \Big) +\epsilon \\    
\leq&f(\hat{x},\hat{y}) + \frac{1}{\sigma_1}\Big( \epsilon_\lambda(1+\frac{1}{\sigma_3})- \epsilon_{\lambda}\Big(\frac{1}{\sigma_3} - \frac{1}{\sigma_2} \Big) +\epsilon \\    
 \leq&f(\hat{x},\hat{y})+\epsilon' \quad \Big(\because \epsilon'=\frac{1}{\sigma_1}\Big( \epsilon_\lambda\Big(1+\frac{1}{\sigma_2}\Big)\Big) +\epsilon\Big) \\    
\end{split}
\end{equation}

Now we will prove how an $\epsilon$ optimal solution $\bar{x}$ could be obtained. We know, the algorithm produces $\frac{1}{T}\sum_{t=0}^{T-1}\|\nabla_x \Phi_{\lambda}(x_t)\|^2 \leq \hat{\epsilon}$. Let $\hat{\epsilon} = 2\epsilon\mu - \mu^2L^2$. Using Assumption \ref{as:3}, Lemma \ref{lm:app-klpl}, and Lemma \ref{lm:app-plpl} we obtain PL condition for $\Phi_{\lambda}(x)$. Hence, we get:

\begin{equation}
\begin{split}
&\frac{1}{T}\sum_{t=0}^{T-1}2\mu(\Phi_{\lambda}(x_t) - \Phi(x^*)) \leq \frac{1}{T}\sum_{t=0}^{T-1}\|\nabla_x \Phi_{\lambda}(x_t)\|^2 \\ 
\implies& \frac{1}{T}\sum_{t=0}^{T-1}2\mu(\Phi_{\lambda}(x_t) - \Phi(x^*)) \leq \frac{1}{T}\sum_{t=0}^{T-1}\|\nabla_x \Phi_{\lambda}(x_t)\|^2 \\ 
\implies& \frac{1}{T}\sum_{t=0}^{T-1}2\mu(\Phi_{\lambda}(x_t) - \Phi(x_t) + \Phi(x_t) - \Phi(x^*)) \leq \frac{1}{T}\sum_{t=0}^{T-1}\|\nabla_x \Phi_{\lambda}(x_t)\|^2 \\ 
\implies& \frac{1}{T}\sum_{t=0}^{T-1}2\mu(\Phi(x_t) - \Phi(x^*)) \leq \frac{1}{T}\sum_{t=0}^{T-1}\|\nabla_x \Phi_{\lambda}(x_t)\|^2 + \frac{1}{T}\sum_{t=0}^{T-1}2\mu(\Phi(x_t) - \Phi_{\lambda}(x_t)) \\ 
\implies& \frac{1}{T}\sum_{t=0}^{T-1}2\mu(\Phi(x_t) - \Phi(x^*)) \leq \frac{1}{T}\sum_{t=0}^{T-1}\|\nabla_x \Phi_{\lambda}(x_t)\|^2 + \mu^2 L^2 \quad(\text{Using Lemma }\ref{lm:bound})\\ 
\implies& \min_{0\leq t \leq T-1}2\mu(\Phi(x_t) - \Phi(x^*)) \leq \frac{1}{T}\sum_{t=0}^{T-1}\|\nabla_x \Phi_{\lambda}(x_t)\|^2 + \mu^2L^2 \\
\implies& \min_{0\leq t \leq T-1}(\Phi(x_t) - \Phi(x^*)) \leq \frac{2\epsilon\mu - \mu^2L^2 + \mu^2L^2}{2\mu} = \epsilon
\end{split}
\end{equation}    
Hence, an $\epsilon$-optimal solution $\bar{x}$ of the penalty-based objective function can be obtained.
\end{proof}

\section{Proof of Theorem \ref{thm:main}}

\subsection{Proofs of Supporting Lemmas for Theorem \ref{thm:main}}
\begin{lemma}\label{lm:app-inner}
Let $f:\mathbb{R} \mapsto \mathbb{R}$ be a $\rho$-weakly non-smooth function and $f_{\lambda}$ be its Moreau envelope with $\lambda<1/\rho$. Let the update rule used to minimize $f$ be $y_{t+1} = y_{t} - \beta_t\partial\hat{f}(y_t)$. Then the global convergence rate for $f_{\lambda}$ is given as:
\begin{equation}
f_{\lambda}(y_t) - f(y^*) = O(\delta(B)^2) + O\Big(\frac{1}{T}\Big)    
\end{equation}
Here, $\beta_t = \frac{\eta}{t+1}$ and $\mathbb{E}[\|\partial\hat{f}(y_t) - \partial f(y_t)\|^2]\leq \delta(B)^2$. Here, $\partial\hat{f}(y_t)$ is an empirical estimate of $\partial f(y_t)$ and $B$ is the sample size used for obtaining the empirical estimate. $y^{*}$ is the global minimizer of $f$.

\end{lemma}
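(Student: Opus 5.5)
The argument is a Davis--Drusvyatskiy style descent analysis on the Moreau envelope, converted into a global rate through the KL/PL structure from Assumption \ref{as:2}. With $\lambda<1/\rho$, weak convexity (Assumption \ref{as:weak}) makes $f(\cdot)+\frac{1}{2\lambda}\|\cdot-y\|^2$ strongly convex with modulus $\frac1\lambda-\rho$. Hence $\hat y_t:=prox_{\lambda f}(y_t)$ is unique, $f_\lambda$ is differentiable with $\nabla f_\lambda(y_t)=(y_t-\hat y_t)/\lambda$, and $(y_t-\hat y_t)/\lambda\in\partial f(\hat y_t)$.

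\textbf{One-step descent.} Using $\hat y_t$ as a competitor in the definition of $f_\lambda(y_{t+1})$ gives
\begin{equation*}
f_\lambda(y_{t+1})\le f(\hat y_t)+\frac{1}{2\lambda}\|y_{t+1}-\hat y_t\|^2 .
\end{equation*}
Expand $\|y_t-\beta_t\partial\hat f(y_t)-\hat y_t\|^2$, split $\partial\hat f=\partial f+(\partial\hat f-\partial f)$, and take expectations. The cross term $\langle \partial f(y_t),y_t-\hat y_t\rangle$ is controlled by $\rho$-hypomonotonicity (Definition \ref{def:mono}) applied to $\partial f(y_t)$ and $\partial f(\hat y_t)\ni(y_t-\hat y_t)/\lambda$:
\begin{equation*}
\langle \partial f(y_t),y_t-\hat y_t\rangle\ge \Big(\frac1\lambda-\rho\Big)\|y_t-\hat y_t\|^2 .
\end{equation*}
The noise cross term is handled by Young's inequality, giving $\|y_t-\hat y_t\|^2$ plus $\delta(B)^2$. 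The second-order term $\beta_t^2\mathbb{E}\|\partial\hat f\|^2$ is bounded by a constant $G^2$, using Lipschitzness of $f$ and $\delta(B)^2$. The result is
\begin{equation*}
\mathbb{E}f_\lambda(y_{t+1})\le \mathbb{E}f_\lambda(y_t)-c_1\beta_t\lambda^2\,\mathbb{E}\|\nabla f_\lambda(y_t)\|^2+c_2\beta_t\delta(B)^2+c_3\beta_t^2,
\end{equation*}
with $c_1\approx 1-\lambda\rho>0$ after halving for the Young term.

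\textbf{Global contraction.} Next, convert the gradient term into a suboptimality gap. By Lemma \ref{lm:moreau}, $\min f_\lambda=f(y^*)$ and the minimizers coincide. From the KL condition with $\theta=1/2$ on $f$, evaluated at $\hat y_t$, together with $f(\hat y_t)\le f_\lambda(y_t)$ and $\|\nabla f_\lambda(y_t)\|\ge dist(0,\partial f(\hat y_t))$, one gets a PL inequality
\begin{equation*}
\|\nabla f_\lambda(y_t)\|^2\ge 2\mu\big(f(\hat y_t)-f(y^*)\big).
\end{equation*}
I would replace $f(\hat y_t)$ by $f_\lambda(y_t)$ using $f_\lambda(y_t)=f(\hat y_t)+\frac{\lambda}{2}\|\nabla f_\lambda(y_t)\|^2$, which rearranges to $(1+\mu\lambda)\|\nabla f_\lambda\|^2\ge 2\mu(f_\lambda(y_t)-f(y^*))$. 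This is the role of Lemma \ref{lm:app-klpl}. Setting $e_t=\mathbb{E}f_\lambda(y_t)-f(y^*)$, the recursion becomes
\begin{equation*}
e_{t+1}\le (1-c\mu\beta_t)e_t+c_2\beta_t\delta(B)^2+c_3\beta_t^2 .
\end{equation*}

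\textbf{Unrolling.} Take $\beta_t=\eta/(t+1)$ with $c\mu\eta>1$ and run the standard induction $e_t\le \frac{A}{t+1}+C\delta(B)^2$. The $\beta_t\delta(B)^2$ terms telescope into a stationary level of order $\delta(B)^2/\mu$, and the $\beta_t^2$ terms together with the initial gap yield $O(1/T)$. Together these give $f_\lambda(y_T)-f(y^*)=O(\delta(B)^2)+O(1/T)$.

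\textbf{Main obstacle.} The hard step is the KL-to-PL transfer for the envelope. KL is only a local property, and one must ensure that the relevant minimum value is the global one. Both points need Lemma \ref{lm:moreau} and the assumption that the KL inequality is stated with respect to $f(x^*)$ globally, which is how Assumption \ref{as:2} is written. A secondary concern is that the hypomonotone cross term must use a subgradient at $y_t$ chosen consistently with the estimator's mean. This is where the margin condition (Assumption \ref{as:margin}) absorbs the kink mismatch into $\delta(B)$.
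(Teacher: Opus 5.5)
Your proof is correct, and its skeleton is the paper's. The shared steps are: a one-step decrease of $f_\lambda$ driven by $\rho$-hypomonotonicity between $\partial f(y_t)$ and $\nabla f_\lambda(y_t)\in\partial f(\hat y_t)$; Young's inequality for the estimation error; a PL inequality for the envelope; and the same induction $\Delta_t\le \xi+A/(t+1)$ with $c\eta>1$. Your inequality $\langle \partial f(y_t),y_t-\hat y_t\rangle\ge(\frac1\lambda-\rho)\|y_t-\hat y_t\|^2$ is Lemma~\ref{lm:app-hypo} multiplied by $\lambda$. Your PL derivation via $f_\lambda(y_t)=f(\hat y_t)+\frac{\lambda}{2}\|\nabla f_\lambda(y_t)\|^2$ is exactly the content of Lemma~\ref{lm:app-plpl}.

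The one genuine difference is where the one-step inequality comes from. The paper invokes $L$-smoothness of the Moreau envelope (Lemma 3.1 of \cite{bohm2021variable}) and applies the quadratic upper bound around $y_t$. You instead insert the old proximal point $\hat y_t$ as a competitor in the infimum defining $f_\lambda(y_{t+1})$. Your route is more elementary and needs no smoothness constant for $f_\lambda$: the curvature term is simply $\beta_t^2\|\partial\hat f(y_t)\|^2/(2\lambda)$. The paper's route is the standard smooth-SGD template, which it reuses for the outer level in Lemma~\ref{lm:app-outer}.

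Your weighted Young step also has a side benefit. Your descent coefficient stays proportional to $1-\rho\lambda$ for every $\lambda<1/\rho$. The paper's unweighted split yields $\frac12(1-2\rho\lambda)$, which tacitly needs $\lambda<1/(2\rho)$.

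Two bookkeeping remarks. First, the descent term should be $c_1\beta_t\mathbb{E}\|\nabla f_\lambda(y_t)\|^2$ with no extra $\lambda^2$, since $\frac1\lambda(\frac1\lambda-\rho)\|y_t-\hat y_t\|^2=(1-\rho\lambda)\|\nabla f_\lambda(y_t)\|^2$; this only affects constants. Second, Assumption~\ref{as:margin} plays no role inside this lemma. Hypomonotonicity holds for every selection from $\partial f(y_t)$, and the bound $\delta(B)^2$ is a hypothesis here; the margin condition is used only to bound $\delta(B)$ in Lemma~\ref{lm:grad}.
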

\begin{proof}

Let Moreau envelope for $f$ be defined as :
\begin{equation}
f_{\lambda}(y) = \min_{y'}\Big(f(y') + \frac{\|y-y'\|^2}{2\lambda}\Big)    
\end{equation}

The update rule used to minimize $f$ be $y_{t+1} = y_{t} - \beta_t\partial\hat{f}(y_t)$. Further, using Lemma 3.1 of \cite{bohm2021variable}, we know that the Moreau envelope of a weakly convex function is L-smooth. Therefore, we have:
\begin{equation}
f_{\lambda}(y_{t+1}) \leq f_{\lambda}(y_{t}) + \langle \nabla f_{\lambda}(y_t),y_{t+1} - y_{t}\rangle + \frac{L_2}{2}\|y_{t+1} - y_{t}\|^2    
\end{equation}

Using $y_{t+1} = y_t - \beta_t\partial\hat{f}(y_t)$:

\begin{equation}
\begin{split}
f_{\lambda}(y_{t+1}) &\leq f_{\lambda}(y_{t}) - \beta_t \langle \nabla f_{\lambda}(y_t),\partial\hat{f}(y_t)\rangle + \frac{\beta_t^2L_2}{2}\|\partial\hat{f}(y_t)\|^2\\ 
&\leq f_{\lambda}(y_{t}) - \beta_t \langle \nabla f_{\lambda}(y_t),\partial\hat{f}(y_t) - \partial f(y_t)\rangle - \beta_t \langle \nabla f_{\lambda}(y_t),\partial f(y_t)\rangle + \frac{\beta_t^2L_2}{2}\|\partial\hat{f}(y_t)\|^2\\
\end{split}
\end{equation}

For weakly convex function $f(x)$, $\rho-$hypomonotonicity holds for $\partial f(x)$ \cite{rockafellar1998variational,davis2018stochastic}. Using $\rho-$hypomonotonicity (see Definition \ref{def:mono}) for $\partial f(y)$ and Lemma \ref{lm:app-hypo} we have:

\begin{equation}
\langle \nabla f_{\lambda}(y_t),\partial f(y_t)\rangle \geq (1-\rho\lambda)\|\nabla f_{\lambda}(y_t)\|^2    
\end{equation}

Also,

\begin{equation}
-\langle \nabla f_{\lambda}(y_t),\partial\hat{f}(y_t) - \partial f(y_t)\rangle \leq \frac{1}{2}\Big(\|\nabla f_{\lambda}(y_t)\|^2 + \|\partial\hat{f}(y_t) - \partial f(y_t)\|^2 \Big)    
\end{equation}

\begin{equation}
f_{\lambda}(y_{t+1}) \leq f_{\lambda}(y_{t}) - \frac{\beta_t}{2}(1-2\rho\lambda)\|\nabla f_{\lambda}(y_t)\|^2 + \frac{\beta_t}{2}\mathbb{E}[\|\partial\hat{f}(y_t) - \partial f(y_t)\|^2] + \frac{\beta_t^2L_2}{2}\mathbb{E}[\|\partial\hat{f}(y_t)\|^2]  
\end{equation}

Taking expectation:

\begin{equation}
f_{\lambda}(y_{t+1}) \leq f_{\lambda}(y_{t}) - \frac{\beta_t}{2}(1-2\rho\lambda)\|\nabla f_{\lambda}(y_t)\|^2 + \frac{\beta_t}{2}\mathbb{E}[\|\partial\hat{f}(y_t) - \partial f(y_t)\|^2] + \frac{\beta_t^2L_2}{2}\mathbb{E}[\|\partial\hat{f}(y_t)\|^2]  
\end{equation}

Using PL condition: $\|\nabla f_{\lambda}(y_t)\|^2 \geq 2\mu(f_{\lambda}(y_t) - f(y*))$ and $\Delta_t = f_{\lambda}(y_t) - f(y*)$, $\mathbb{E}[\|\partial\hat{f}(y_t) - \partial f(y_t)\|^2] \leq \delta(B)^2$ where $B$ is the number of sample used to obtain the empirical quantity.

\begin{equation}
\Delta_{t+1} \leq \Delta_t - \beta_t\mu(1-\rho\lambda)\Delta_t  + \frac{\beta_t}{2}\delta(B)^2 + \frac{\beta_t^2L_2}{2}\mathbb{E}[\|\partial\hat{f}(y_t)\|^2]     
\end{equation}

Let $c = \mu(1-\rho\lambda)$ and $\|\partial\hat{f}(y_t)\|\leq G$ and $B = (G^2L_2)/2$:

\begin{equation}\label{eq:recurr}
\Delta_{t+1} \leq (1 - c\beta_t)\Delta_t + \frac{\beta_t}{2}\delta(B)^2 + B\beta_t^2
\end{equation}

Let the step size be defined as $\beta_t = \eta/(t+1)$ with $\eta > 0$ such that $c\eta > 1$.

For all $t \geq 0$, the error is bounded by:
\begin{equation} \label{eq:hypothesis}
    \Delta_t \leq \frac{\delta(B)^2}{2c} + \frac{A}{t+1}
\end{equation}
where $A$ is a constant satisfying $A \geq \frac{B\eta^2}{c\eta - 1}$ and $A \geq \Delta_0$.

Let $\xi = \frac{\delta(B)^2}{2c}$. We proceed by induction. Assume the inductive hypothesis holds for time $t$:
\begin{equation}
    \Delta_t \leq \xi + \frac{A}{t+1}
\end{equation}
Substitute this bound and the definition of $\beta_t$ into the recurrence relation \eqref{eq:recurr}:
\begin{align*}
    \Delta_{t+1} &\leq \left(1 - \frac{c\eta}{t+1}\right)\left(\xi + \frac{A}{t+1}\right) + \frac{\eta}{2(t+1)}\delta(B)^2 + \frac{B\eta^2}{(t+1)^2} \\
    &= \xi + \frac{A}{t+1} - \frac{c\eta\xi}{t+1} - \frac{c\eta A}{(t+1)^2} + \frac{\eta\delta(B)^2}{2(t+1)} + \frac{B\eta^2}{(t+1)^2}
\end{align*}
We group the terms by powers of $(t+1)$. Notice specifically the terms of order $O(\frac{1}{t+1})$:
\begin{align*}
    \Delta_{t+1} &\leq \xi + \frac{1}{t+1}\left[ A - c\eta\xi + \frac{\eta\delta(B)^2}{2} \right] - \frac{c\eta A - B\eta^2}{(t+1)^2}
\end{align*}
Substitute $\xi = \frac{\delta(B)^2}{2c}$ into the bracketed term to show that the bias vanishes:
\begin{align*}
    \text{Bias Term} &= A - c\eta\left(\frac{\delta(B)^2}{2c}\right) + \frac{\eta\delta(B)^2}{2} \\
    &= A - \frac{\eta\delta(B)^2}{2} + \frac{\eta\delta(B)^2}{2} \\
    &= A
\end{align*}
Thus, the inequality simplifies to:
\begin{equation} \label{eq:simplified}
    \Delta_{t+1} \leq \xi + \frac{A}{t+1} - \frac{c\eta A - B\eta^2}{(t+1)^2}
\end{equation}
We wish to show that $\Delta_{t+1} \leq \xi + \frac{A}{t+2}$. It suffices to show that:
\begin{equation}
    \frac{A}{t+1} - \frac{c\eta A - B\eta^2}{(t+1)^2} \leq \frac{A}{t+2}
\end{equation}
Rearranging terms:
\begin{align*}
    \frac{c\eta A - B\eta^2}{(t+1)^2} &\geq \frac{A}{t+1} - \frac{A}{t+2} \\
    \frac{c\eta A - B\eta^2}{(t+1)^2} &\geq \frac{A}{(t+1)(t+2)}
\end{align*}
Multiplying both sides by $(t+1)^2$:
\begin{equation}
    c\eta A - B\eta^2 \geq A \frac{t+1}{t+2}
\end{equation}
Since $\frac{t+1}{t+2} < 1$ for all $t \geq 0$, it is sufficient to satisfy the stronger condition:
\begin{align*}
    c\eta A - B\eta^2 &\geq A \\
    A(c\eta - 1) &\geq B\eta^2
\end{align*}
Provided that we choose $A \geq \frac{B\eta^2}{c\eta - 1}$ and $c\eta > 1$, the inductive step holds.
\\
\noindent
Therefore, we have $f_\lambda(y_t) - f(y^{*}) \leq \frac{\delta(B)^2}{2c} + \frac{A}{t+1}$
\end{proof}

\begin{lemma}\label{lm:app-outer}
Let $\Phi_{\lambda}(x)$ denote the Moreau envelope of the objective function $\Phi(x)$ in Eq. \eqref{eq:biobj5}. The update rule used to update the parameter $x$ is $x_{t+1} = x_t - \eta_t\partial\hat{\Phi}(x_t, y_t^{K}, z_t^{K})$. Then the upper bound on the L2-squared norm of the gradient of $\Phi_{\lambda}(x)$ is given as below:
\begin{equation}
\begin{split}
\|\nabla \Phi_{\lambda}(x_t)\|^2  
=& O\Big(\frac{1}{\eta_t}\big(\Phi_{\lambda}(x_t) - \Phi_{\lambda}(x_{t+1})\big)\Big) +  O\Big(\frac{1}{B}\Big) + O\big(\| y_t - y^*(x_t)\|^2\big) + O\big(\| z_t - z^*(x_t)\|^2\big)\\  
&+ O\big(\|\partial \hat{\Phi}(x_t,y_t^{K},z_t^{K})\|^2\big) \\
\end{split}
\end{equation}   
Here, $\partial \hat{\Phi}(x_t,y_t^{K},z_t^{K})$ is the empirical estimate of $\partial\Phi(x)$ calculated using Eq. \eqref{eq:emp4} and \eqref{eq:emp5}. $B$ is the sample size used for empirical estimates. Further, $y^{*}(x_t) = \arg\min_y h^{1}(x_t,y)$ and $z^{*}(x_t) = \arg\min_z h^{2}(x_t,z)$ and step size $\eta_t = \frac{C_a}{(1+t)^a}$.

\end{lemma}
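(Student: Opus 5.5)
We mirror the descent argument of Lemma \ref{lm:app-inner}, now applied to the outer objective $\Phi$ and its Moreau envelope $\Phi_{\lambda}$ with $\lambda<1/\rho$. By Assumption \ref{as:weak}, $\Phi$ is $\rho$-weakly convex, so $\Phi_{\lambda}$ is $L$-smooth (Lemma 3.1 of \cite{bohm2021variable}) with $\nabla\Phi_{\lambda}(x)=(x-prox_{\lambda\Phi}(x))/\lambda$. We write $g_t=\partial\hat{\Phi}(x_t,y_t^{K},z_t^{K})$ for the stochastic direction actually used and $\partial\Phi(x_t)$ for a true Clarke subgradient. The descent inequality reads
\begin{equation*}
\Phi_{\lambda}(x_{t+1})\leq \Phi_{\lambda}(x_t)-\eta_t\langle\nabla\Phi_{\lambda}(x_t),g_t\rangle+\frac{L\eta_t^2}{2}\|g_t\|^2 .
\end{equation*}
We split $g_t=\partial\Phi(x_t)+(g_t-\partial\Phi(x_t))$. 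For the first piece we use $\rho$-hypomonotonicity of $\partial\Phi$ (Definition \ref{def:mono}), together with the fact that $\nabla\Phi_{\lambda}(x_t)\in\partial\Phi(prox_{\lambda\Phi}(x_t))$, exactly as in Lemma \ref{lm:app-hypo}. This gives $\langle\nabla\Phi_{\lambda}(x_t),\partial\Phi(x_t)\rangle\geq(1-\rho\lambda)\|\nabla\Phi_{\lambda}(x_t)\|^2$.

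For the error piece we apply Young's inequality, $-\langle a,b\rangle\leq\frac{\alpha}{2}\|a\|^2+\frac{1}{2\alpha}\|b\|^2$, with $\alpha$ chosen so that the $\|\nabla\Phi_{\lambda}\|^2$ contribution is absorbed into $(1-\rho\lambda)$, say $\alpha=(1-\rho\lambda)$. We then decompose $g_t-\partial\Phi(x_t)$ into two parts:
\begin{itemize}
\item[(i)] Sampling error, $g_t-\partial_x\phi(x_t,y_t^K,z_t^K)$, which is controlled in expectation by $O(1/B)$. This uses the batch estimators \eqref{eq:emp4}--\eqref{eq:emp5}. Since the $x$-gradient involves only $f$ and $g$, no $\tau(\cdot)$ indicator appears; bias from truncation and the critic is deferred to the theorem.
\item[(ii)] Inner-level error, $\partial_x\phi(x_t,y_t^K,z_t^K)-\partial_x\phi(x_t,y^*(x_t),z^*(x_t))$.
\end{itemize}
Because $\partial_x\phi=\nabla_x f+\frac{1}{\sigma_1}\nabla_x g(x,y)-\frac{1}{\sigma_1}\nabla_x g(x,z)$ is smooth in $x$, Assumption \ref{as:5} gives $\|(ii)\|^2\leq O(\|y_t-y^*(x_t)\|^2)+O(\|z_t-z^*(x_t)\|^2)$.

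This is where the identification $\partial\Phi(x_t)=\partial_x\phi(x_t,y^*(x_t),z^*(x_t))$ is needed, a Danskin/envelope-type statement for the value function in \eqref{eq:biobj5}. This is the step we expect to be the main obstacle, since $y^*,z^*$ are minimizers of nonsmooth, nonconvex problems. We would first try to argue that the penalty terms $h^+$ do not depend on $x$, so the $x$-dependence enters only through the smooth $f,g$. Then, under uniqueness of the minimizers (implicit in QG/Definition \ref{def:qg} via Lemma \ref{lm:moreau}), the Danskin formula applies. If this identification fails, we would instead take it as the working definition of the subgradient used by the algorithm.

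Finally we rearrange. Moving $\frac{\eta_t(1-\rho\lambda)}{2}\|\nabla\Phi_{\lambda}(x_t)\|^2$ to the left and dividing by $\eta_t$ gives
\begin{equation*}
\|\nabla\Phi_{\lambda}(x_t)\|^2\leq \frac{C}{\eta_t}\big(\Phi_{\lambda}(x_t)-\Phi_{\lambda}(x_{t+1})\big)+C\big(\tfrac{1}{B}+\|y_t-y^*(x_t)\|^2+\|z_t-z^*(x_t)\|^2\big)+CL\eta_t\|g_t\|^2 .
\end{equation*}
Since $\eta_t\leq C_a$, the last term is $O(\|\partial\hat{\Phi}(x_t,y_t^K,z_t^K)\|^2)$, which yields the claimed bound after taking expectations.
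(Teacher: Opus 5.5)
Your proposal is correct and follows essentially the same route as the paper. The shared steps are: $L$-smoothness of $\Phi_{\lambda}$ from weak convexity, the descent inequality, the bound $\langle\nabla\Phi_{\lambda}(x_t),\partial\Phi(x_t)\rangle\geq(1-\rho\lambda)\|\nabla\Phi_{\lambda}(x_t)\|^2$ from Lemma \ref{lm:app-hypo}, Young's inequality on the error term, and a split of $\partial\hat{\Phi}-\partial\Phi$ into an $O(1/\sqrt{B})$ sampling part and an inner-level part controlled by Lemmas \ref{lm:app-h1L} and \ref{lm:app-h2L}. The paper uses the identification $\partial\Phi(x_t)=\partial_x h^{1}(x_t,y^*(x_t))-\frac{1}{\sigma_1}\partial_x h^{2}(x_t,z^*(x_t))$ without comment, so your Danskin-type justification (penalties independent of $x$, unique inner minimizers) is an added caveat rather than a different argument.
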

\begin{proof}

We will obtain Moreau envelope of $\rho-$weakly convex $\Phi(x)$ with $\lambda < 1/\rho$:

\begin{equation}
\Phi_{\lambda}(x) = \min_{x'}\Big(\Phi(x') + \frac{\|x-x'\|^2}{2\lambda}\Big)    
\end{equation}

The update rule used for parameter $x$ is $x_{t+1} = x_t - \eta_t\partial\hat{\Phi}(x_t, y_t^{K}, z_t^{K})$. Using Lemma 3.1 of \cite{bohm2021variable}, we know that the Moreau envelope of a weakly convex function is L-smooth. Therefore, we have:

\begin{equation}\label{eq:app-outer1}
\begin{split}
\Phi_{\lambda}(x_{t+1}) 
\leq& \Phi_{\lambda}(x_t) + \langle \nabla \Phi_{\lambda}(x_t), x_{t+1} - x_{t}\rangle + \frac{L}{2}\|x_{t+1} - x_{t}\|^2 \\
\leq& \Phi_{\lambda}(x_t) - \eta_t \langle \nabla \Phi_{\lambda}(x_t), \partial\hat{\Phi}(x_t,y_t^{K},z_t^{K})\rangle +  \frac{L}{2}\|x_{t+1} - x_{t}\|^2 \\
\leq& \Phi_{\lambda}(x_t) -\eta_t \langle \nabla \Phi_{\lambda}(x_t), \partial\hat{\Phi}(x_t,y_t^{K},z_t^{K}) - \partial\Phi(x_t)\rangle -\eta_t \langle \nabla \Phi_{\lambda}(x_t), \partial\Phi(x_t)\rangle + \frac{L}{2}\|x_{t+1} - x_{t}\|^2 \\
\end{split}
\end{equation}   

For weakly convex function $\Phi(x)$, $\rho-$hypomonotonicity holds for $\partial \Phi(x)$ \cite{rockafellar1998variational,davis2018stochastic}. Using $\rho-$hypomonotonicity (see Definition \ref{def:mono}) for $\partial \Phi(x)$ and Lemma \ref{lm:app-hypo} we have:
\begin{equation}
\begin{split}
-\eta_t \langle \nabla \Phi_{\lambda}(x_t), \partial\Phi(x_t)\rangle \leq -\eta_t(1-\rho\lambda)\|\nabla \Phi_{\lambda}(x_t)\|^2    
\end{split}    
\end{equation}
\begin{equation}\label{eq:app-outer2}
\begin{split}
-\eta_t \langle \nabla \Phi_{\lambda}(x_t), \partial\hat{\Phi}(x_t,y_t^{K},z_t^{K}) - \partial\Phi(x_t)\rangle 
\leq & \eta_t \|\nabla \Phi_{\lambda}(x_t)\|\|\partial\hat{\Phi}(x_t,y_t^{K},z_t^{K}) - \partial\Phi(x_t)\| \\
\leq & \eta_t\Big(\frac{1}{2\alpha^2} \|\nabla \Phi_{\lambda}(x_t)\|^2 + \frac{\alpha^2}{2}\|\partial\hat{\Phi}(x_t,y_t^{K},z_t^{K}) - \partial\Phi(x_t)\|^2\Big)
\end{split}    
\end{equation}

Substituting Eq. \eqref{eq:app-outer2} in Eq. \eqref{eq:app-outer1}, we get:
\begin{equation}\label{eq:app-outer3}
\begin{split}
\Phi_{\lambda}(x_{t+1}) 
\leq& \Phi_{\lambda}(x_t) + \eta_t\Big(\frac{1}{2\alpha^2} \|\nabla \Phi_{\lambda}(x_t)\|^2 + \frac{\alpha^2}{2}\|\partial\hat{\Phi}(x_t,y_t^{K},z_t^{K}) - \partial\Phi(x_t)\|^2\Big)\\ 
&-\eta_t(1-\rho\lambda)\|\nabla \Phi_{\lambda}(x_t)\|^2 + \frac{\eta_t^2L}{2}\|\partial \hat{\Phi}(x_t,y_t^{K},z_t^{K})\|^2 \\
\leq& \Phi_{\lambda}(x_t) +  \frac{\eta_t\alpha^2}{2}\|\partial\hat{\Phi}(x_t,y_t^{K},z_t^{K}) - \partial\Phi(x_t)\|^2 -\eta_t(1-\rho\lambda - \frac{1}{2\alpha^2})\|\nabla \Phi_{\lambda}(x_t)\|^2 + \frac{\eta_t^2L}{2}\|\partial \hat{\Phi}(x_t,y_t^{K},z_t^{K})\|^2 \\
\end{split}
\end{equation}   

Rearranging Eq. \eqref{eq:app-outer3} we get:

\begin{equation}\label{eq:app-outer8}
\begin{split}
\eta_t(1-\rho\lambda - \frac{1}{2\alpha^2})\|\nabla \Phi_{\lambda}(x_t)\|^2  
\leq& \Phi_{\lambda}(x_t) - \Phi_{\lambda}(x_{t+1}) +  \frac{\eta_t\alpha^2}{2}\|\partial\hat{\Phi}(x_t,y_t^{K},z_t^{K}) - \partial\Phi(x_t)\|^2  + \frac{\eta_t^2L}{2}\|\partial \hat{\Phi}(x_t,y_t^{K},z_t^{K})\|^2 \\
\end{split}
\end{equation}   

Let us get an upper bound on $\|\partial\hat{\Phi}(x_t,y_t^{K},z_t^{K}) - \partial\Phi(x_t)\|$: 


\begin{equation}\label{eq:app-outer4}
\begin{split}
\|\partial\hat{\Phi}(x_t,y_t^{K},z_t^{K}) - \partial\Phi(x_t)\| 
=& \| \partial_x h^{1}(x_t,y_t^{K},B) - \partial_x h^{1}(x_t,y^*(x_t)) - \frac{1}{\sigma_1}\big(\partial_x h^{2}(x_t,z_t^{K},B) - \partial_x h^{2}(x_t,z^*(x_t))\big) \| \\
\leq& \| \partial_x h^{1}(x_t,y_t^{K},B) - \partial_x h^{1}(x_t,y^*(x_t))\| + \frac{1}{\sigma_1}\|\partial_x h^{2}(x_t,z_t^{K},B) - \partial_x h^{2}(x_t,z^*(x_t))\| \\
\end{split}
\end{equation}

\begin{equation}
\| \partial_x h^{1}(x_t,y_t^{K},B) - \partial_x h^{1}(x_t,y^*(x_t))\| \leq \| \partial_x h^{1}(x_t,y_t^{K},B) - \partial_x h^{1}(x_t,y_t^{K})\| + \| \partial_x h^{1}(x_t,y_t^{K}) - \partial_x h^{1}(x_t,y^*(x_t))\|     
\end{equation}

\begin{equation}\label{eq18}
\begin{split}
\| \partial_x h^{1}(x_t,y_t^{K},B) - \partial_x h^{1}(x_t,y_t^{K})\|     
\leq& \| \nabla_x f(x_t,y_t^{K},B) - \nabla_x f(x_t,y_t^{K})\| \\
&+ \frac{1}{\sigma_1}\| \nabla_x g(x_t,y_t^{K},B) - \nabla_x g(x_t,y_t^{K})\| \\
\end{split}
\end{equation}

Bound on Eq. \eqref{eq18} can be found using Lemma 3 from \cite{gaur2025sample}.

\begin{equation}
\begin{split}
\| \partial_x h^{1}(x_t,y_t^{K},B) - \partial_x h^{1}(x_t,y_t^{K})\|     
\leq& \frac{(C_1 + C_2/\sigma_1)}{\sqrt{B}} 
\end{split}
\end{equation}

\begin{equation}\label{eq:app-outer5}
\begin{split}
\| \partial_x h^{1}(x_t,y_t^{K}) - \partial_x h^{1}(x_t,y^*(x_t))\| \leq L_{x,1}\| y_t^{K} - y^*(x_t)\| \quad (\text{Using Lemma } \ref{lm:app-h1L})   
\end{split}    
\end{equation}

Similarly,

\begin{equation}
\begin{split}
\| \partial_x h^{2}(x_t,z_t^{K},B) - \partial_x h^{2}(x_t,z_t^{K})\|     
\leq& \frac{C_2}{\sqrt{B}} 
\end{split}
\end{equation}

\begin{equation}\label{eq:app-outer6}
\begin{split}
\| \partial_x h^{2}(x_t,z_t^{K}) - \partial_x h^{2}(x_t,z^*(x_t))\| \leq L_{x,2}\| z_t^{K} - z^*(x_t)\| \quad (\text{Using Lemma } \ref{lm:app-h2L})  
\end{split}    
\end{equation}

Using Eq. \eqref{eq:app-outer5} and Eq. \eqref{eq:app-outer6} in Eq. \eqref{eq:app-outer4} we get:

\begin{equation}\label{eq:app-outer7}
\begin{split}
\|\partial\hat{\Phi}(x_t,y_t^{K},z_t^{K}) - \partial\Phi(x_t)\| &\leq \frac{(C_1 + C_2/\sigma_1)}{\sqrt{B}} + L_{x,1}\| y_t^{K} - y^*(x_t)\| + \frac{C_2}{\sqrt{B}} + L_{x,2}\| z_t^{K} - z^*(x_t)\| \\
\|\partial\hat{\Phi}(x_t,y_t^{K},z_t^{K}) - \partial\Phi(x_t)\|^2 &\leq \frac{4(C_1 + C_2/\sigma_1)^2}{B} + 4L_{x,1}^2\| y_t^{K} - y^*(x_t)\|^2 + \frac{4C_2^2}{B} + 4L_{x,2}^2\| z_t^{K} - z^*(x_t)\|^2 \\
\end{split}
\end{equation}

Using Eq. \eqref{eq:app-outer7} in Eq. \eqref{eq:app-outer8} we get:

\begin{equation}
\begin{split}
\eta_t(1-\rho\lambda - \frac{1}{2\alpha^2})\|\nabla \Phi_{\lambda}(x_t)\|^2  
\leq& \Phi_{\lambda}(x_t) - \Phi_{\lambda}(x_{t+1}) +  2\eta_t\alpha^2\Big(\frac{(C_1 + C_2/\sigma_1)^2 + C_2^2}{B}+ L_{x,1}^2\| y_t^{K} - y^*(x_t)\|^2  \\
&+ L_{x,2}^2\| z_t^{K} - z^*(x_t)\|^2\Big) + \frac{\eta_t^2L}{2}\|\partial \hat{\Phi}(x_t,y_t^{K},z_t^{K})\|^2 \\
\end{split}
\end{equation}   

\begin{equation}
\begin{split}
(1-\rho\lambda - \frac{1}{2\alpha^2})\|\nabla \Phi_{\lambda}(x_t)\|^2  
\leq& \frac{1}{\eta_t}\big(\Phi_{\lambda}(x_t) - \Phi_{\lambda}(x_{t+1})\big) +  2\alpha^2\Big(\frac{(C_1 + C_2/\sigma_1)^2 + C_2^2}{B}+ L_{x,1}^2\| y_t^{K} - y^*(x_t)\|^2  \\
&+ L_{x,2}^2\| z_t^{K} - z^*(x_t)\|^2\Big) + \frac{\eta_tL}{2}\|\partial \hat{\Phi}(x_t,y_t^{K},z_t^{K})\|^2 \\
\end{split}
\end{equation}   
\end{proof}

\begin{lemma}\label{lm:grad}
The following upper bound holds for $\mathbb{E}[\|\partial_y h^{+}(y) - \partial_y \hat{h}^{+}(y,B)\|^2]$ : 

\begin{equation}
\begin{split}
\mathbb{E}[\|\partial_y h^{+}(y) - \partial_y \hat{h}^{+}(y,B)\|^2] 
&\leq O(\exp^{-B^q}) + O(\frac{1}{B}) + \mathbb{E}[2\|\nabla_{y}h(y) - \nabla_{y}\hat{h}(y,B)\|^2]\\  
\end{split}    
\end{equation}

 Here, $h^{+}(y) = \max(h(y)-c,0)$, $h(y) = \mathbb{E}\Big[\sum_{t=0}^{\infty}\gamma^tc(s_t,a_t)\Big] = \mathbb{E}[Q_c^{\pi_y}(s,a)]$, $\hat{h}(y) = \frac{1}{B}\sum_{i=0}^{B-1}\hat{Q}_c(s_i,a_i)$ and $\bar{h}(y) = \frac{1}{B}\sum_{i=0}^{B-1}Q_c^{\pi_y}(s_i,a_i)$. Here, $Q_c^{\pi_y}$ is the actual Q-value function and $\hat{Q}_c$ is the estimated Q-value function considering $c(s,a)$ as the reward function. Also, $q=1-2/\nu$, where $\nu$ is defined in Assumption \ref{as:margin}.  
%
\end{lemma}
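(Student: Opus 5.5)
We write $\partial_y h^{+}(y)=\tau(h(y))\nabla_y h(y)$ and $\partial_y\hat h^{+}(y,B)=\tau(\hat h(y))\nabla_y\hat h(y,B)$, where $\nabla_y\hat h(y,B)=\frac1B\sum_i\nabla_y\log\pi_y(a_i|s_i)\hat Q_c(s_i,a_i)$ as in Eq. \eqref{eq:emp3}. We add and subtract $\tau(\hat h(y))\nabla_y h(y)$ and use $\|a+b\|^2\le 2\|a\|^2+2\|b\|^2$:
\begin{equation*}
\|\partial_y h^{+}(y)-\partial_y\hat h^{+}(y,B)\|^2\le 2\,|\tau(h(y))-\tau(\hat h(y))|^2\|\nabla_y h(y)\|^2+2\,\tau(\hat h(y))^2\|\nabla_y h(y)-\nabla_y\hat h(y,B)\|^2 .
\end{equation*}
Since $\tau\in\{0,1/2,1\}$, the second term is at most $2\|\nabla_y h(y)-\nabla_y\hat h(y,B)\|^2$ and gives the last term of the lemma directly. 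For the first term, the policy gradient is bounded, $\|\nabla_y h(y)\|\le G_h$, by bounded costs and a bounded score function (the paper already uses a bound $G$). So everything reduces to bounding the mismatch probability $P(\tau(h(y))\neq\tau(\hat h(y)))$.

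\textbf{Mismatch event.} We split the mismatch event by a margin $m>0$ to be chosen. A mismatch requires $\hat h(y)$ and $h(y)$ to lie on different sides of $c$ (or on the boundary). Hence either $|h(y)-c|\le m$, or $|\hat h(y)-h(y)|>m$. The first event has probability at most $C_m m^\nu$ by Assumption \ref{as:margin}. For the second, we decompose
\begin{equation*}
\hat h(y)-h(y)=(\hat h(y)-\bar h(y))+(\bar h(y)-h(y)).
\end{equation*}
The term $\bar h(y)-h(y)$ is an average of $B$ i.i.d. bounded variables ($|Q_c|\le R_c/(1-\gamma)$). Hoeffding's inequality gives $P(|\bar h-h|>m/2)\le 2\exp(-Bm^2/C)$. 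The critic error $\hat h-\bar h$ is handled in expectation: Markov/Chebyshev together with its $L^2$ control gives a contribution absorbed into an $O(1/B)$ term, using the critic's estimation rate plus the bias already accounted for via Assumption \ref{as:3}. If necessary, we instead route this part as a deterministic bias and fold its squared effect into $O(1/B)$ by taking the truncation horizon and critic accuracy at the matching scale.

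\textbf{Choice of margin.} We pick $m=B^{-1/\nu}$. The margin term then becomes $C_m m^\nu=C_m/B=O(1/B)$. The Hoeffding term becomes $2\exp(-B^{1-2/\nu}/C)=O(\exp(-B^{q}))$ with $q=1-2/\nu>0$, because $\nu>2$. This matches the stated $q$. Multiplying the mismatch probability by $2G_h^2$ and adding the second term from the first display gives the claimed bound
\begin{equation*}
O(\exp(-B^q))+O(1/B)+2\,\mathbb{E}\|\nabla_y h-\nabla_y\hat h\|^2 .
\end{equation*}

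\textbf{Main obstacle.} The delicate step is controlling the critic error $\hat h-\bar h$ inside the probability of sign mismatch. It is not an i.i.d. bounded average with a known concentration rate. I would first try to bound it with Chebyshev at scale $m/2$. If that yields a term worse than $O(1/B)$, I would require the critic error to be $O(B^{-1/2})$ in sup-norm up to $\epsilon_{bias}$, so that it is dominated by $m/2$ for large $B$.
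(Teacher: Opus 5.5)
Your skeleton matches the paper's. Both proofs add and subtract $\tau(\hat h(y))\nabla_y h(y)$, bound $\|\nabla_y h(y)\|\le G_h$, split the mismatch event at a margin $m$, apply Hoeffding to $\bar h(y)-h(y)$, and choose $m=B^{-1/\nu}$ so that $C_m m^\nu=O(1/B)$ and the Hoeffding tail becomes $\exp(-cB^{1-2/\nu})$. If the indicator were evaluated at $\bar h(y)$ (exact $Q_c^{\pi_y}$ values), your argument would be complete.

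The genuine gap is the step you flag yourself: the critic error $\hat h(y)-\bar h(y)$. Neither of your fallbacks closes it under the paper's assumptions. Chebyshev gives
\[
P\big(|\hat h(y)-\bar h(y)|>m/2\big)\le \frac{4\,\mathbb{E}|\hat h(y)-\bar h(y)|^2}{m^2}=4B^{2/\nu}\,\mathbb{E}|\hat h(y)-\bar h(y)|^2 .
\]
Here $\mathbb{E}|\hat h-\bar h|^2$ is controlled only by the critic's mean-squared error. Under Assumption \ref{as:3} that error contains $\epsilon_{bias}$, which does not shrink with $B$, because the approximation bias of a fixed critic does not average out over the batch. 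The resulting term $B^{2/\nu}\epsilon_{bias}$ is neither $O(1/B)$ nor vanishing.

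This term is \emph{not} covered by the $\epsilon_{bias}$ hidden in the last term $2\mathbb{E}\|\nabla_y h-\nabla_y\hat h\|^2$, because here it is amplified by $m^{-2}$. Even with $\epsilon_{bias}=0$ and a critic MSE of $O(1/B)$, you would get $O(B^{2/\nu-1})=O(B^{-q})$, not $O(1/B)$. The sup-norm fallback works only as an extra hypothesis: a uniform critic error that is $o(B^{-1/\nu})$ with no bias. Its ``up to $\epsilon_{bias}$'' residual is a fixed constant and cannot be dominated by $m/2\to0$.

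The paper's proof does not resolve this either. It bounds $|\bar h(y)-\hat h(y)|$ by the constant $C_b=2R_c/(1-\gamma)$ and applies Hoeffding at level $m-C_b$. With $m=O(B^{-1/\nu})\to0$ this level is negative, so the event has probability one and the exponential bound is vacuous. The exponent $B^{1-2/\nu}$ only appears if $C_b$ is silently dropped.

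A valid version needs one of two things. Either add an assumption that makes the sign test exact or uniformly $o(m)$-accurate (for example, $h(y)$ known, as in Assumption \ref{as:4}). Or accept a weaker conclusion: splitting at a second threshold $\delta$ gives
\[
P\big(\tau(h(y))\ne\tau(\hat h(y))\big)\le C_m(m+\delta)^\nu+2e^{-cBm^2}+\delta^{-2}\,\mathbb{E}|\hat h(y)-\bar h(y)|^2 .
\]
After optimizing $\delta$, this leaves an additional term of order $\big(\mathbb{E}|\hat h-\bar h|^2\big)^{\nu/(\nu+2)}$, which the stated bound does not contain.
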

\begin{proof}

Using Clarke sub-gradient, we obtain $\partial_y h^{+}(y) = \tau(h(y))\nabla_{y}h(y)$ where $\tau(h(y)) = \mathds{1}(h(y)>c)$ 

\begin{equation}\label{eq:grad1}
\begin{split}
\|\partial_y h^{+}(y) - \partial_y \hat{h}^{+}(y,B)\|^{2} 
&\leq \|\tau(h(y))\nabla_{y}h(y) - \tau(\hat{h}(y))\nabla_{y}\hat{h}(y,B)\|^2\\  
&\leq \|\tau(h(y))\nabla_{y}h(y) - \tau(\hat{h}(y))\nabla_{y}h(y) +\tau(\hat{h}(y))(\nabla_{y}h(y) - \nabla_{y}\hat{h}(y,B))\|^2\\  
&\leq 2|\tau(h(y)) - \tau(\hat{h}(y))|^2\|\nabla_{y}h(y)\|^2 + 2\|\nabla_{y}h(y) - \nabla_{y}\hat{h}(y,B)\|^2\\  
\end{split}    
\end{equation}

Let us look at the following quantity:
\begin{equation}
\mathbb{E}\big[|\tau(h(y)) - \tau(\hat{h}(y))|^2\|\nabla_{y}h(y)\|^2\big] = P\big(\tau(h(y)) \neq \tau(\hat{h}(y))\big)|\tau(h(y)) - \tau(\hat{h}(y))|^2\|\nabla_{y}h(y)\|^2 
\end{equation}

We have,
\begin{equation}
\begin{split}
P(\tau(h(y)) \neq \tau(\hat{h}(y))) 
=& P(|h(y)-c|> m)P(\tau(h(y)) \neq \tau(\hat{h}(y))|\;|h(y)-c|>m) \\
&+ P(|h(y)-c|\leq m)P(\tau(h(y)) \neq \tau(\hat{h}(y))|\;|h(y)-c|\leq m) \\     
\leq& P(\tau(h(y)) \neq \tau(\hat{h}(y))| |h(y)-c|>m) + P(|h(y)-c|\leq m) 
\end{split}
\end{equation}

If $\tau(h(y)) \neq \tau(\hat{h}(y))$ given $|h(y) - c| > m$ then:

\begin{equation}\label{eq:app-lmb31}
\begin{split}
&| h(y) - c - (\hat{h}(y) - c)| \geq |h(y) - c| \\    
&| h(y) - \bar{h}(y) + \bar{h}(y) - \hat{h}(y)| \geq m \\    
&| h(y) - \bar{h}(y) | + |\bar{h}(y) - \hat{h}(y)| \geq m \\    
&| h(y) - \bar{h}(y) | \geq m - |\bar{h}(y) - \hat{h}(y)| \\    
\end{split}
\end{equation}

Let us consider the term $|\bar{h}(y) - \hat{h}(y)|$. We know that $|c(s,a)|\leq R_c$. So we have $|Q^{\pi_y}_{c}(s,a)| \leq \frac{R_c}{1-\gamma}$. Now,  
\begin{equation}\label{eq:app-lmb32}
\begin{split}
|\bar{h}(y) - \hat{h}(y)| &\leq \frac{1}{B}\sum_{i=0}^{B-1}|Q_c^{\pi_y}(s_i, a_i) - \hat{Q}_c(s_i, a_i)|    \\
|\bar{h}(y) - \hat{h}(y)| &\leq \frac{2R_c}{1-\gamma} = C_b
\end{split}    
\end{equation}

Using Eq. \eqref{eq:app-lmb32} with Eq. \eqref{eq:app-lmb31}, we get:
\begin{equation}
\begin{split}
&| h(y) - \bar{h}(y) | \geq m - C_b  \\    
\end{split}
\end{equation}

Let us consider now $P(\tau(h(y)) \neq \tau(\hat{h}(y))|\;|h(y)-c|>m)$:
\begin{equation}
\begin{split}
&P(\tau(h(y)) \neq \tau(\hat{h}(y))|\;|h(y)-c|>m)\\
=& P(| h(y) - \bar{h}(y) | \geq m - C_b ) \leq \exp{\Big(-\frac{(1-\gamma)^2B(m - C_b)^2}{R_c^2}\Big)}
\end{split}
\end{equation}

Using Assumption \ref{as:margin} and applying Hoeffing inequality we obtain:

\begin{equation}
P(\tau(h(y)) \neq \tau(\hat{h}(y)) \leq \exp{\Big(-\frac{(1-\gamma)^2B(m - C_b)^2}{R_c^2}\Big)} + C_m m^{\nu}     
\end{equation}

Let, $m = O(B^{-1/\nu})$ and $q=1-2/\nu$,

\begin{equation}
P(\tau(h(y)) \neq \tau(\hat{h}(y)) \leq \exp{\Big(-\frac{(1-\gamma)^2B(m - C_b)^2}{R_c^2}\Big)} + O(\frac{1}{B})     
\end{equation}

Let $\|\nabla_y\log\pi^{y}(a|s)\| \leq C^{\pi}$ and $|Q^{\pi_y}_{c}(s,a)| \leq \frac{R_c}{1-\gamma}$. Consequently, we get:

\begin{equation}\label{eq:grad2}
\begin{split}
&\mathbb{E}\big[|\tau(h(y)) - \tau(\hat{h}(y))|^2\|\nabla_{y}h(y)\|^2\big] \\ 
=& P\big(\tau(h(y)) \neq \tau(\hat{h}(y))\big)|\tau(h(y)) - \tau(\hat{h}(y))|^2\|\nabla_{y}h(y)\|^2 \\
\leq& \exp{\Big(-\frac{(1-\gamma)^2B(m - C_b)^2}{R_c^2}\Big)}\frac{C_{\pi}^2R_c^2}{(1-\gamma)^2} + O(\frac{1}{B}) \quad (\ \|\nabla_{y}h(y)\| \leq \frac{C_{\pi}R_c}{1-\gamma}) 
\end{split}    
\end{equation}

Using Eq. \eqref{eq:grad2} in Eq. \eqref{eq:grad1} we get:
\begin{equation}
\begin{split}
\mathbb{E}[\|\partial_y h^{+}(y) - \partial_y \hat{h}^{+}(y,B)\|^2] 
&\leq O(\exp^{-B^{q}}) + O(\frac{1}{B}) + \mathbb{E}[2\|\nabla_{y}h(y) - \nabla_{y}\hat{h}(y,B)\|^2]\\  
\end{split}    
\end{equation}

\end{proof}

\begin{lemma}\label{lm:h1}
Let $\mathbb{E}[\|\partial_y h^{1}(x,y) - \partial_y \hat{h}^{1}(x,y,B)\|^2] \leq \delta_1^2({B})$.
$\mathbb{E}[\|\partial_y h^{1}(x,y) - \partial_y \hat{h}^{1}(x,y,B)\|^2]$ is upper bounded by the following quantity:
\begin{equation}
\mathbb{E}[\|\partial_y h^{1}(x,y) - \partial_y \hat{h}^{1}(x,y,B)\|^2] \leq \frac{C_9}{B} + \frac{C_7\gamma^{2H}}{B} + C_8\gamma^{2H} + C_{10}\epsilon_{bias} + O(\exp^{-B^q})   
\end{equation}

Therefore, $\delta_1^2(B) = \frac{C_9}{B} + \frac{C_7\gamma^{2H}}{B} + C_8\gamma^{2H} + C_{10}\epsilon_{bias} + O(\exp^{-B^q})$. Here, $B$ is the sample size used for calculating the empirical expectation. $H$ is the horizon length. $\epsilon_{bias}$ is bias in learning Q-value function from Assumption \ref{as:3}. $\gamma$ is the discount factor for the MDP.    
\end{lemma}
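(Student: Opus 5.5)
We decompose the estimation error of $\partial_y h^{1}$ along its definition in Eq. \eqref{eq:h1xy}:
\begin{equation*}
\partial_y h^{1}(x,y) = \nabla_y f(x,y) + \frac{1}{\sigma_1}\Big(\nabla_y g(x,y) + \frac{1}{\sigma_3}\partial_y h^{+}(y)\Big).
\end{equation*}
The empirical version in Algorithm \ref{alg:ppsd} has the same form. Applying $\|a+b+c\|^2\leq 3(\|a\|^2+\|b\|^2+\|c\|^2)$ reduces the claim to three separate mean-squared errors: $\mathbb{E}\|\nabla_y f - \partial_y\hat f(x,y,B)\|^2$, $\mathbb{E}\|\nabla_y g - \partial_y\hat g(x,y,B)\|^2$ and $\mathbb{E}\|\partial_y h^{+} - \partial_y\hat h^{+}(y,B)\|^2$. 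Their weights are constant multiples depending only on $\sigma_1,\sigma_3$.

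\textbf{Subgradient term.} For the $h^{+}$ term we invoke Lemma \ref{lm:grad} directly. It yields $O(\exp^{-B^q}) + O(1/B) + 2\,\mathbb{E}\|\nabla_y h(y) - \nabla_y\hat h(y,B)\|^2$. The last piece is a policy-gradient estimation error with cost $c$ in place of reward, so it is handled exactly as the $g$ term below.

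\textbf{Policy-gradient terms.} For $f$ we use that $\partial_y\hat f$ averages $B$ samples of a bounded estimator, so a variance bound in the style of Lemma 3 of \cite{gaur2025sample} gives $O(1/B)$ plus truncation terms.

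For the $g$ term in Eq. \eqref{eq:emp2} and the analogous $\nabla_y\hat h$, we insert two intermediate quantities:
\begin{itemize}
\item the same estimator with the true $Q^{\pi_y}$ in place of $\hat Q$;
\item its expectation under the $H$-truncated trajectory distribution.
\end{itemize}
This splits the error into three pieces:
\begin{itemize}
\item \emph{critic error.} Since $\|\nabla_y\log\pi_y\|\leq C_\pi$, it is bounded by $C_\pi^2\,\mathbb{E}(\hat Q - Q^{\pi_y})^2$, which is $O(\epsilon_{bias})$ by Assumption \ref{as:3}.
\item \emph{sampling variance.} This is $O(1/B)$ because the summands are bounded by $C_\pi R/(1-\gamma)$ and the KL term is bounded similarly. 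A variance computation that keeps track of the truncated tail also produces the $\gamma^{2H}/B$ contribution.
\item \emph{truncation bias.} The tail $\sum_{t\geq H}\gamma^t(\cdot)$ is bounded in norm by $O(\gamma^H/(1-\gamma))$, so it contributes $C_8\gamma^{2H}$ after squaring.
\end{itemize}

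Collecting constants gives $C_9/B + C_7\gamma^{2H}/B + C_8\gamma^{2H} + C_{10}\epsilon_{bias} + O(\exp^{-B^q})$, which is the stated bound.

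\textbf{Main obstacle.} The delicate step is the critic-error piece. Assumption \ref{as:3} only controls the best-in-class approximation error under some sampling distribution. We therefore need to assume that the learned $\hat Q$ attains this minimum, up to estimation error absorbed in $O(1/B)$, and that its distribution matches $d^{\pi_y}$, possibly through a bounded concentrability constant folded into $C_{10}$.

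A secondary subtlety is that the indicator $\tau(\hat h)$ and $\nabla_y \hat h$ share samples. This is already handled inside Lemma \ref{lm:grad}, since that lemma bounds $|\tau(h)-\tau(\hat h)|$ and the gradient error separately.
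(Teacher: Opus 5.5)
Your proposal is correct and follows essentially the same route as the paper. Both split $\partial_y h^1$ into three error terms using $\|a+b+c\|^2\le 3(\cdot)$, apply Lemma \ref{lm:grad} to the $h^{+}$ term, and bound the remaining policy-gradient errors by terms of the form $C/B + C\epsilon_{bias} + C\gamma^{2H}/B + C\gamma^{2H}$; the paper simply cites these bounds from Lemma 3 of \cite{gaur2025sample}, whereas you unpack them into critic error, sampling variance and truncation bias. Your caveat is a real extra hypothesis: Assumption \ref{as:3} only controls the best-in-class critic error, so the learned critic must also attain it under $d^{\pi_y}$, which the paper's citation absorbs silently; separately, the correct weights are $3/\sigma_1^2$ and $3/(\sigma_1\sigma_3)^2$ rather than the paper's $3/\sigma_1$ and $3/(\sigma_1\sigma_3)$, though this only changes constants.
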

\begin{proof}
Using definition of $h^{1}(x,y)$ in Eq. \eqref{eq:h1xy} we get:

\begin{equation}\label{eq:h15}
\begin{split}
\|\partial_y h^{1}(x,y) - \partial_y \hat{h}^{1}(x,y,B)\|^2
\leq& 3\|\nabla_y f(x,y) - \nabla_y \hat{f}(x,y,B)\|^2 \\   
&+ \frac{3}{\sigma_1}\|\nabla_y g(x,y) - \nabla_y \hat{g}(x,y,B)\|^2 \\    
&+ \frac{3}{\sigma_1\sigma_3}\|\partial_y h^{+}(y) - \partial_y \hat{h}^{+}(y,B)\|^2\\ 
\end{split}
\end{equation}    

Using Lemma \ref{lm:grad}:
\begin{equation}\label{eq:h11}
\begin{split}
\mathbb{E}[\|\partial_y h^{+}(y) - \partial_y \hat{h}^{+}(y,B)\|^2] 
&\leq O(\exp^{-B^q}) + O(\frac{1}{B}) + \mathbb{E}[2\|\nabla_{y}h(y) - \nabla_{y}\hat{h}(y,B)\|^2]\\  
\end{split}    
\end{equation}

Using results from Lemma 3 of \cite{gaur2025sample} we have:
\begin{equation}\label{eq:h12}
\|\nabla_{y}h(y) - \nabla_{y}\hat{h}(y,B)\|^2 \leq \frac{C_2}{B} + C_3\epsilon_{bias}   
\end{equation}

\begin{equation}\label{eq:h13}
\|\nabla_y f(x,y) - \nabla_y \hat{f}(x,y,B)\|^2 \leq \frac{C_4}{B}    
\end{equation}

\begin{equation}\label{eq:h14}
\|\nabla_y g(x,y) - \nabla_y \hat{g}(x,y,B)\|^2 \leq \frac{C_5}{B} + C_6\epsilon_{bias} + \frac{C_7\gamma^{2H}}{B} + C_8\gamma^{2H}    
\end{equation}

Using Eq. ~\eqref{eq:h11}--\eqref{eq:h14} in Eq. \eqref{eq:h15}, we get : 

\begin{equation}
\mathbb{E}[\|\partial_y h^{1}(x,y) - \partial_y \hat{h}^{1}(x,y,B)\|^2] \leq \frac{C_9}{B} + \frac{C_7\gamma^{2H}}{B} + C_8\gamma^{2H} + C_{10}\epsilon_{bias} + O(\exp^{-B^q})   
\end{equation}
\end{proof}

\begin{lemma}\label{lm:h2}
Let $\mathbb{E}[\|\partial_y h^{2}(x,y) - \partial_y \hat{h}^{2}(x,y,B)\|^2] \leq \delta_2^{2}(B)$. The following upper bound hold for the quantity $\mathbb{E}[\|\partial_y h^{2}(x,y) - \partial_y \hat{h}^{2}(x,y,B)\|^2]$:

\begin{equation}
\mathbb{E}[\|\partial_y h^{2}(x,y) - \partial_y \hat{h}^{2}(x,y,B)\|^2] \leq \frac{C_{11}}{B} + \frac{C_7\gamma^{2H}}{B} + C_8\gamma^{2H} + C_{10}\epsilon_{bias} + O(\exp^{-B^q})   
\end{equation}

Therefore $\delta_2^2(B) = \frac{C_{11}}{B} + \frac{C_7\gamma^{2H}}{B} + C_8\gamma^{2H} + C_{10}\epsilon_{bias} + O(\exp^{-B^q})
$. Here, $B$ is the sample size used for calculating the empirical expectation. $H$ is the horizon length. $\epsilon_{bias}$ is bias in learning Q-value function from Assumption \ref{as:3}. $\gamma$ is the discount factor for the MDP.    
\end{lemma}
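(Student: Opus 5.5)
The proof will mirror the proof of Lemma \ref{lm:h1}, with $h^{2}$ in place of $h^{1}$. By the definition of $h^{2}$ in Eq. \eqref{eq:h2xy}, $h^{2}(x,y) = g(x,y) + h^{+}(y)/\sigma_2$. The algorithm's estimator $\partial_y\hat{h}^{2}(x,y,B)$ is built from the same ingredients, namely $\partial_y\hat{g}(x,y,B)$ from Eq. \eqref{eq:emp2} and $\partial_y\hat{h}^{+}(y,B)$ from Eq. \eqref{eq:emp3}. So the first step is the triangle inequality together with $\|a+b\|^2\le 2\|a\|^2+2\|b\|^2$:
\begin{equation*}
\|\partial_y h^{2}(x,y) - \partial_y \hat{h}^{2}(x,y,B)\|^2 \leq 2\|\nabla_y g(x,y) - \partial_y \hat{g}(x,y,B)\|^2 + \frac{2}{\sigma_2^2}\|\partial_y h^{+}(y) - \partial_y \hat{h}^{+}(y,B)\|^2 .
\end{equation*}
Unlike $h^{1}$, there is no $f$-term, so only two pieces need to be controlled. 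I then take expectations of both sides.

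The second step bounds the $g$-term. I reuse Eq. \eqref{eq:h14}, which is the bound from Lemma 3 of \cite{gaur2025sample}. It gives $\frac{C_5}{B} + C_6\epsilon_{bias} + \frac{C_7\gamma^{2H}}{B} + C_8\gamma^{2H}$; the $\gamma^{2H}$ terms come from truncating the infinite-horizon regularization sum at horizon $H$ in Eq. \eqref{eq:emp2}.

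The third step bounds the penalty term with Lemma \ref{lm:grad}. That lemma gives $O(\exp^{-B^q}) + O(1/B) + 2\mathbb{E}\|\nabla_y h(y) - \nabla_y \hat{h}(y,B)\|^2$. Eq. \eqref{eq:h12} then bounds the last expectation by $\frac{C_2}{B} + C_3\epsilon_{bias}$.

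Finally I collect terms:
\begin{itemize}
\item all $1/B$ contributions, scaled by $2$ and $2/\sigma_2^2$, go into $C_{11}$;
\item the $\epsilon_{bias}$ contributions, $2C_6 + 4C_3/\sigma_2^2$, go into a constant, which I relabel $C_{10}$ (taking the maximum with the $C_{10}$ of Lemma \ref{lm:h1} so the same symbol works for both lemmas);
\item the horizon terms keep the form $C_7\gamma^{2H}/B + C_8\gamma^{2H}$, after absorbing the factor $2$ into the constants;
\item the exponential term stays $O(\exp^{-B^q})$.
\end{itemize}
This yields the stated $\delta_2^2(B)$.

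The main obstacle is the non-smooth penalty term. The indicator $\tau(\hat h(y))$ can disagree with $\tau(h(y))$ near the constraint boundary, and the estimate of $\hat h$ itself carries Q-function bias. Lemma \ref{lm:grad} already handles this through the margin Assumption \ref{as:margin} and Hoeffding's inequality, so the remaining care is bookkeeping. In particular, the factor $1/\sigma_2^2$ must be treated as a fixed constant; otherwise the constants are not uniform in $(x,y)$, which is needed because the bound is applied along the iterates $z_t^k$ in Lemma \ref{lm:app-inner}. I will also use that the bounds of Eq. \eqref{eq:h12}–\eqref{eq:h14} hold uniformly in $(x,y)$, since they depend only on $R$, $R_c$, $C_\pi$ and $\gamma$.
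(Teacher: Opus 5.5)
Your proposal is correct and follows the paper's proof essentially step for step: you split the error into the $g$-part and the $\frac{1}{\sigma_2}h^{+}$-part, bound the first with Lemma 3 of \cite{gaur2025sample} and the second with Lemma \ref{lm:grad} plus the same policy-gradient bound for $\nabla_y \hat{h}$, and then collect constants into $C_{11}$ and $C_{10}$. Your splitting inequality, with coefficients $2$ and $2/\sigma_2^2$ on squared norms, is actually the correct form of the paper's first step, which drops the factor $2$ on the $g$-term and leaves the penalty term unsquared with coefficient $2/\sigma_2$; this is also why $C_7$ and $C_8$ have to be read as rescaled generic constants, as you note.
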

\begin{proof}
Using the definition of $h^{2}(x,y)$ from Eq. \eqref{eq:h2xy} we get:
\begin{equation}\label{eq:h21}
\begin{split}
\|\partial_y h^{2}(x,y) - \partial_y \hat{h}^{2}(x,y,B)\|^2
\leq& \|\nabla_y g(x,y) - \nabla_y \hat{g}(x,y,B)\|^2 \\   
&+ \frac{2}{\sigma_2}\|\partial_y h^{+}(y) - \partial_y \hat{h}^{+}(y,B)\|\\ 
\end{split}
\end{equation}    

Using Lemma \ref{lm:grad}:
\begin{equation}\label{eq:h22}
\begin{split}
\mathbb{E}[\|\partial_y h^{+}(y) - \partial_y \hat{h}^{+}(y,B)\|^2] 
&\leq O(\exp^{-B^q}) + O(\frac{1}{B})+\mathbb{E}[2\|\nabla_{y}h(y) - \nabla_{y}\hat{h}(y,B)\|^2]\\  
\end{split}    
\end{equation}

Using results from Lemma 3 of \cite{gaur2025sample} we have:
\begin{equation}\label{eq:h23}
\|\nabla_{y}h(y) - \nabla_{y}\hat{h}(y,B)\|^2 \leq \frac{C_2}{B} + C_3\epsilon_{bias}   
\end{equation}

\begin{equation}\label{eq:h24}
\|\nabla_y g(x,y) - \nabla_y \hat{g}(x,y,B)\|^2 \leq \frac{C_5}{B} + C_6\epsilon_{bias} + \frac{C_7\gamma^{2H}}{B} + C_8\gamma^{2H}    
\end{equation}

Using Eq. ~\eqref{eq:h22}--\eqref{eq:h24} in Eq. \eqref{eq:h21} we get : 

\begin{equation}
\mathbb{E}[\|\partial_y h^{2}(x,y) - \partial_y \hat{h}^{2}(x,y,B)\|^2] \leq \frac{C_{11}}{B} + \frac{C_7\gamma^{2H}}{B} + C_8\gamma^{2H} + C_{10}\epsilon_{bias} + O(\exp^{-B^q})   
\end{equation}
\end{proof}

\subsection{Proof of Theorem \ref{thm:main}}

\begin{theorem}\label{thm:app:main}
Suppose Assumption \ref{as:weak} to \ref{as:margin} holds true. Then the Algorithm \ref{alg:ppsd} obtains the following convergence rate with learning rate $\eta_t =\frac{C_a}{(1+t)^a}$ for some $a>0$:
\begin{equation}
\begin{split}
\frac{1}{T}\sum_{t=0}^{T-1}\|\nabla \Phi_{\lambda}(x_t)\|^2  
\leq& 
O(T^{a-1}) + O\Big(\frac{1}{K}\Big) + O(T^{-a}) + O\Big(\frac{1}{B}\Big) + O\Big(\frac{\gamma^{2H}}{B}\Big) + O(\gamma^{2H}) + O(\exp^{-B^q}) + O(\epsilon_{bias}) \\    
\end{split}
\end{equation}
By choosing $H=\Theta(\log\epsilon)$  $a=1/2$, $T = \Theta(\epsilon^{-2})$, $K = \Theta(\epsilon^{-1})$ and B = $\Theta(\epsilon^{-1})$ we obtain an iteration complexity of $T = \Theta(\epsilon^{-2})$ and sample complexity of $B.H.K.T = \tilde{O}(\epsilon^{-4})$.
\begin{equation}
\begin{split}
\frac{1}{T}\sum_{t=0}^{T-1}\|\nabla \Phi_{\lambda}(x_t)\|^2 \leq \tilde{O}(\epsilon) + O(\epsilon_{bias}) 
\end{split}
\end{equation}   
Here, $\Phi_{\lambda}(x)$ is the Moreau envelope and $\epsilon_{bias}$ is defined in Assumption \ref{as:3}.$B$ is the batch size used for empirical expectation, $H$ is the horizon length used for estimation of infinite horizon quantities, $K$ is the number of gradient updates for inner level objective optimization, and $T$ is the number of gradient updates for outer level objective and $q$ is defined in Lemma \ref{lm:grad}.
\end{theorem}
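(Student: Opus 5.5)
The plan is to combine the per-iteration outer descent inequality of Lemma \ref{lm:app-outer} with the inner global convergence of Lemma \ref{lm:app-inner}, which is instantiated through the variance bounds of Lemmas \ref{lm:h1} and \ref{lm:h2}. Then we telescope over $t$.

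\textbf{Step 1 (outer recursion).} For each $t$, Lemma \ref{lm:app-outer} gives, with $\alpha$ chosen so that $c_0:=1-\rho\lambda-\frac{1}{2\alpha^2}>0$ (possible since $\lambda<1/\rho$),
\begin{equation*}
c_0\|\nabla\Phi_\lambda(x_t)\|^2 \le \frac{\Phi_\lambda(x_t)-\Phi_\lambda(x_{t+1})}{\eta_t} + O\Big(\frac1B\Big) + O(\|y_t^K-y^*(x_t)\|^2) + O(\|z_t^K-z^*(x_t)\|^2) + \frac{\eta_t L}{2}\|\partial\hat\Phi\|^2 .
\end{equation*}
The last term is bounded by $O(\eta_t)$, since the subgradient estimates are bounded by $G$ (bounded rewards, bounded score function).

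\textbf{Step 2 (inner errors).} Fix $x_t$ and apply Lemma \ref{lm:app-inner} to $h^1(x_t,\cdot)$ and $h^2(x_t,\cdot)$. They are $\rho$-weakly convex by Assumption \ref{as:weak}, and by Assumption \ref{as:2} together with Lemma \ref{lm:app-klpl} their Moreau envelopes satisfy PL. Taking $\delta_i(B)^2$ from Lemmas \ref{lm:h1} and \ref{lm:h2} gives
\[
h^i_\lambda(y_t^K)-\min h^i = O(1/K)+O(\delta_i(B)^2).
\]
To convert this into a distance bound, use Lemma \ref{lm:moreau}: the envelope has the same minimizer $y^*(x_t)$ and the same minimum value. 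PL for the envelope then implies quadratic growth (Definition \ref{def:qg}), so
\[
\|y_t^K-y^*(x_t)\|^2\le \tfrac{2}{\mu}\big(h^1_\lambda(y_t^K)-h^1_\lambda(y^*)\big),
\]
and similarly for $z$. Substituting $\delta_i(B)^2$ yields the terms $O(1/K)+O(1/B)+O(\gamma^{2H}/B)+O(\gamma^{2H})+O(e^{-B^q})+O(\epsilon_{bias})$.

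\textbf{Step 3 (telescoping).} Average over $t=0,\dots,T-1$. The descent terms are handled by Abel summation:
\[
\frac1T\sum_t\frac{\Phi_\lambda(x_t)-\Phi_\lambda(x_{t+1})}{\eta_t} \le \frac{1}{T}\Big(\frac{\Phi_\lambda(x_0)-\inf\Phi}{\eta_0}+\sum_t\big(\tfrac1{\eta_{t}}-\tfrac1{\eta_{t-1}}\big)\,2C_\Phi\Big)=O\Big(\frac{1}{T\eta_T}\Big)=O(T^{a-1}).
\]
This uses boundedness of $\Phi_\lambda$, which follows from the boundedness of $f$, $g$, and $h$. In addition, $\frac1T\sum_t\eta_t=O(T^{-a})$. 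This gives the stated bound.

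Choosing $a=1/2$ makes $O(T^{-1/2})=O(\epsilon)$ when $T=\Theta(\epsilon^{-2})$. Taking $K,B=\Theta(\epsilon^{-1})$ and $H=\Theta(\log(1/\epsilon))$ makes $\gamma^{2H}\le\epsilon$, and $e^{-B^q}$ is negligible. The sample count is $T\cdot K\cdot B\cdot H=\tilde O(\epsilon^{-4})$.

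\textbf{Main obstacle.} Step 2 is the delicate part. Lemma \ref{lm:app-inner} is stated for a fixed function, but in the algorithm $x_t$ changes between outer iterations. Its warm-start constant $A\ge\Delta_0$ must therefore be uniform in $t$. I would secure this by bounding $\Delta_0\le 2\max|h^i|$, which is finite because all the functions are bounded. The same uniformity is needed for the PL/QG constant $\mu$ over all $x$, and I would take it from Assumption \ref{as:2} as a uniform constant. A second point is that the inner rate in Lemma \ref{lm:app-inner} uses the step size $\beta_k=\eta/(k+1)$ with $c\eta>1$, which is what produces the $1/K$ term rather than a slower rate.
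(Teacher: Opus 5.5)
Your proposal follows the paper's proof essentially step for step. You combine the outer descent inequality of Lemma~\ref{lm:app-outer} with the inner bound of Lemma~\ref{lm:app-inner}, taking $\delta_i(B)^2$ from Lemmas~\ref{lm:h1} and~\ref{lm:h2}. You use PL for the inner Moreau envelopes (obtained via Lemma~\ref{lm:app-plpl}, not Lemma~\ref{lm:app-klpl} alone) converted to quadratic growth, then Abel summation of the descent terms with $\sum_t\eta_t=O(T^{1-a})$, before choosing the parameters. Your additions make explicit points the paper leaves implicit without changing the argument: invoking Lemma~\ref{lm:moreau} to identify $\min h^i_\lambda=\min h^i$, requiring the warm-start constant $A$ and the PL constant to be uniform in $t$, shifting by $\inf\Phi$ in the Abel summation, and writing $H=\Theta(\log(1/\epsilon))$.
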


\begin{proof}
Using Lemma \ref{lm:app-outer} for outer level convergence we have,
\begin{equation}\label{eq:main1}
\begin{split}
\eta_t(1-\rho\lambda - \frac{1}{2\alpha^2})\|\nabla \Phi_{\lambda}(x_t)\|^2  
\leq& \Phi_{\lambda}(x_t) - \Phi_{\lambda}(x_{t+1}) +  2\eta_t\alpha^2\Big(\frac{(C_1 + C_2/\sigma_1)^2 + C_2^2}{B}+ L_{x,1}^2\| y_t^{K} - y^*(x_t)\|^2  \\
&+ L_{x,2}^2\| z_t^{K} - z^*(x_t)\|^2\Big) + \frac{\eta_t^2L}{2}\|\partial \hat{\Phi}(x_t,y_t^{K},z_t^{K})\|^2 \\
\end{split}
\end{equation}   

We are using the following Moreau envelopes for the inner level:
\begin{equation}
 h^{1}_{\lambda}(x,y) = \min_{y'} \Big(h^{1}(x,y') + \frac{\|y-y'\|^2}{2\lambda_3}\Big)   
\end{equation}

\begin{equation}
 h^{2}_{\lambda}(x,y) = \min_{y'} \Big(h^{2}(x,y') + \frac{\|y-y'\|^2}{2\lambda_4}\Big)   
\end{equation}

We know that $h^{1}(x,y)$ and $h^{2}(x,y)$ satisfies KL condition (Assumption \ref{as:2}). Using Lemma \ref{lm:app-plpl} we know that their Moreau envelope $h^{1}_{\lambda}(x,y)$ and $h^{2}_{\lambda}(x,y)$ satisfy PL condition. Using Theorem 2 of \cite{karimi2016linear}, we know that the PL condition implies the quadratic growth condition. Hence, we have:

\begin{equation}
 \| y_t^{K} - y^*(x_t)\|^2 \leq \frac{2}{\mu_1}\Big(h^{1}_{\lambda}(x_t,y_t^{K}) - h^{1}_{\lambda}(x_t,y^*(x_t)) \Big)  
\end{equation}

\begin{equation}
 \| z_t^{K} - z^*(x_t)\|^2 \leq \frac{2}{\mu_2}\Big(h^{2}_{\lambda}(x_t,z_t^{K}) - h^{2}_{\lambda}(x_t,z^*(x_t)) \Big)  
\end{equation}

Using Lemma \ref{lm:app-inner} for the convergence of the inner level, we have:

\begin{equation}\label{eq:main2}
 \| y_t^{K} - y^*(x_t)\|^2 \leq \frac{2}{\mu_1}\Big(\frac{\delta_1(n)^2}{2c_1} + \frac{A_1}{K+1}\Big)  
\end{equation}

\begin{equation}\label{eq:main3}
 \| z_t^{K} - z^*(x_t)\|^2 \leq \frac{2}{\mu_2}\Big(\frac{\delta_2(n)^2}{2c_2} + \frac{A_2}{K+1}\Big)  
\end{equation}

Using Eq. \eqref{eq:main2} and Eq. \eqref{eq:main3} in Eq. \eqref{eq:main1} we get:

\begin{equation}
\begin{split}
\eta_t(1-\rho\lambda - \frac{1}{2\alpha^2})\|\nabla \Phi_{\lambda}(x_t)\|^2  
\leq& \Phi_{\lambda}(x_t) - \Phi_{\lambda}(x_{t+1}) +  2\eta_t\alpha^2\Big(\frac{C_\sigma}{B}+ \frac{2L_{x,1}^2}{\mu_1}\Big(\frac{\delta_1(n)^2}{2c_1} + \frac{A_1}{K+1}\Big)  \\
&+ \frac{2L_{x,2}^2}{\mu_2}\Big(\frac{\delta_2(n)^2}{2c_2} + \frac{A_2}{K+1}\Big)\Big) + \frac{\eta_t^2L}{2}\|\partial \hat{\Phi}(x_t,y_t^{K},z_t^{K})\|^2 \\
\end{split}
\end{equation}   

Taking summation from $t=0$ to $t=T-1$

\begin{equation}
\begin{split}
(1-\rho\lambda - \frac{1}{2\alpha^2})\sum_{t=0}^{T-1}\|\nabla \Phi_{\lambda}(x_t)\|^2  
\leq& \sum_{t=0}^{T-1}\frac{1}{\eta_t}\Big(\Phi_{\lambda}(x_t) - \Phi_{\lambda}(x_{t+1})\Big) +  2\alpha^2\Big(\frac{2L_{x,1}^2A_1}{\mu_1} + \frac{2L_{x,2}^2A_2}{\mu_2}\Big)\sum_{t=0}^{T-1}\frac{1}{K+1} \\
& +\sum_{t=0}^{T-1}\frac{2\alpha^2C_\sigma}{B}+ \frac{4\alpha^2L_{x,1}^2A_1}{2c_1\mu_1}\sum_{t=0}^{T-1}\delta_1(n)^2 + \frac{4\alpha^2L_{x,2}^2A_2}{2c_2\mu_2}\sum_{t=0}^{T-1}\delta_2(n)^2 \\
&+ \frac{L}{2}\sum_{t=0}^{T-1}\eta_t\|\partial \hat{\Phi}(x_t,y_t^{K},z_t^{K})\|^2 \\
\end{split}
\end{equation}   

Let $c_{\phi} = (1-\rho\lambda - \frac{1}{2\alpha^2})$, $\|\partial \hat{\Phi}(x_t,y_t^{K},z_t^{K})\| \leq G_{\phi}$

\begin{equation}\label{eq38}
\begin{split}
\sum_{t=0}^{T-1}\|\nabla \Phi_{\lambda}(x_t)\|^2  
\leq& \frac{1}{c_\phi}\sum_{t=0}^{T-1}\frac{1}{\eta_t}\Big(\Phi_{\lambda}(x_t) - \Phi_{\lambda}(x_{t+1})\Big) +  \frac{2\alpha^2}{c_\phi}\Big(\frac{2L_{x,1}^2A_1}{\mu_1} + \frac{2L_{x,2}^2A_2}{\mu_2}\Big)\sum_{t=0}^{T-1}\frac{1}{K+1} \\
& +\frac{2\alpha^2C_\sigma}{c_\phi}\sum_{t=0}^{T-1}\frac{1}{B}+ \frac{4\alpha^2L_{x,1}^2A_1}{2c_\phi c_1\mu_1}\sum_{t=0}^{T-1}\delta_1(n)^2 + \frac{4\alpha^2L_{x,2}^2A_2}{2c_\phi c_2\mu_2}\sum_{t=0}^{T-1}\delta_2(n)^2 \\
&+ \frac{LG_\phi^2}{2c_\phi}\sum_{t=0}^{T-1}\eta_t\\
\end{split}
\end{equation}   

Let $\eta_t = \frac{C_a}{(1+t)^a}$ for some $a>0$. We have:

\begin{equation}\label{eq39}
\sum_{t=0}^{T-1}\eta_t = \sum_{t=0}^{T-1} \frac{C_a}{(1+t)^a} \leq C_a\int_{0}^{T} \frac{1}{(t)^a} dt =  \frac{C_a}{1-a}T^{1-a}.    
\end{equation}

\begin{equation}\label{eq40}
\begin{split}    
\sum_{t=0}^{T-1}\frac{1}{\eta_t}\Big(\Phi_{\lambda}(x_t) - \Phi_{\lambda}(x_{t+1})\Big) 
=& \sum_{t=1}^{T-1}\Bigg(\Phi_{\lambda}(x_t)\Big(\frac{1}{\eta_t} - \frac{1}{\eta_{t-1}}\Big)\Bigg) + \frac{\Phi_{\lambda}(x_0)}{\eta_0} - \frac{\Phi_{\lambda}(x_T)}{\eta_{T-1}} \\ 
\leq& \sum_{t=1}^{T-1}\Bigg(\Phi_{\lambda}(x_t)\Big(\frac{1}{\eta_t} - \frac{1}{\eta_{t-1}}\Big)\Bigg) + \frac{\Phi_{\lambda}(x_0)}{\eta_0}\\  
\leq& \sum_{t=1}^{T-1}\Big(\frac{1}{\eta_t} - \frac{1}{\eta_{t-1}}\Big)K_{\phi} + \frac{K_{\phi}}{\eta_0} = \frac{K_{\phi}}{n_{T-1}}  
\end{split}
\end{equation}

Dividing Eq. \eqref{eq38} by T and using Eq. \eqref{eq39} and \eqref{eq40} and using Lemma \ref{lm:h1} and Lemma \ref{lm:h2}:

\begin{equation}\label{eq41}
\begin{split}
\frac{1}{T}\sum_{t=0}^{T-1}\|\nabla \Phi_{\lambda}(x_t)\|^2  
\leq& 
\frac{K_{\phi}}{C_ac_{\phi}}T^{a-1}
+ \frac{2\alpha^2}{c_\phi}\Big(\frac{2L_{x,1}^2A_1}{\mu_1} + \frac{2L_{x,2}^2A_2}{\mu_2}\Big)\frac{1}{K+1} + \frac{2\alpha^2C_\sigma}{c_\phi}\frac{1}{B}\\ 
&+\frac{4\alpha^2L_{x,1}^2A_1}{2c_\phi c_1\mu_1}\Big(\frac{C_9}{B} + \frac{C_7\gamma^{2H}}{B} + C_8\gamma^{2H} + C_{10}\epsilon_{bias}\Big) + O(\exp^{-B^q}) \\
&+\frac{4\alpha^2L_{x,2}^2A_2}{2c_\phi c_2\mu_2}\Big(\frac{C_{11}}{B} + \frac{C_7\gamma^{2H}}{B} + C_8\gamma^{2H} + C_{10}\epsilon_{bias}\Big) + O(\exp^{-B^q})\\
&+ \frac{LG_{\phi}^2C_a}{2c_{\phi}(1-a)}T^{-a} \\
\end{split}
\end{equation}   

\begin{equation}\label{eq45}
\begin{split}
\frac{1}{T}\sum_{t=0}^{T-1}\|\nabla \Phi_{\lambda}(x_t)\|^2  
\leq& 
O(T^{a-1}) + O(\frac{1}{K}) + O(T^{-a}) + O(\frac{1}{B}) + O(\frac{\gamma^{2H}}{B}) + O(\gamma^{2H}) + O(\exp^{-B^q}) + O(\epsilon_{bias}) \\
\end{split}
\end{equation}   

Let us choose, $H=\Theta(\log\epsilon)$  $a=1/2$, $T = \Theta(\epsilon^{-2})$, $K = \Theta(\epsilon^{-1})$ and B = $\Theta(\epsilon^{-1})$, we get: 

\begin{equation}\label{eq45}
\begin{split}
\frac{1}{T}\sum_{t=0}^{T-1}\|\nabla \Phi_{\lambda}(x_t)\|^2 \leq \tilde{O}(\epsilon) + O(\epsilon_{bias}) 
\end{split}
\end{equation}   

The sample complexity of the algorithm is $K.T.B.H = O(\epsilon^{-4})$
\end{proof}

\section{Lemma for Theorem \ref{thm:main2}}

\subsection{Proof of Supporting Lemmas for Theorem \ref{thm:main2}}
\begin{lemma}\label{lm:h1optim}
Let $\mathbb{E}[\|\partial_y h^{1}(x,y) - \partial_y \hat{h}^{1}(x,y,B)\|^2] \leq \delta_1({B})^2$. Using Assumption \ref{as:4}, we obtain the following upper bound on
$\mathbb{E}[\|\partial_y h^{1}(x,y) - \partial_y \hat{h}^{1}(x,y,B)\|^2]$:
\begin{equation}
\mathbb{E}[\|\partial_y h^{1}(x,y) - \partial_y \hat{h}^{1}(x,y,B)\|^2] \leq \frac{C_9}{B} + \frac{C_7\gamma^{2H}}{B} + C_8\gamma^{2H} + O(\exp^{-B^q})   
\end{equation}

Therefore, $\delta_1(B)^2 = \frac{C_9}{B} + \frac{C_7\gamma^{2H}}{B} + C_8\gamma^{2H} +  O(\exp^{-B^q})$. Here, $B$ is the sample size used for calculating the empirical expectation. $H$ is the horizon length. $\epsilon_{bias}$ is bias in learning Q-value function from Assumption \ref{as:3}. $\gamma$ is the discount factor for the MDP.    
\end{lemma}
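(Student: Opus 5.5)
The plan is to follow the proof of Lemma \ref{lm:h1} line by line, replacing each RL-specific estimation bound by the corresponding consequence of Assumption \ref{as:4}. The $\epsilon_{bias}$ term then disappears, because no Q-function approximation is involved.

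First, I would split the error using the definition of $h^{1}$ in Eq. \eqref{eq:h1xy} and the inequality $\|a+b+c\|^2\le 3(\|a\|^2+\|b\|^2+\|c\|^2)$, exactly as in Eq. \eqref{eq:h15}:
\begin{equation*}
\|\partial_y h^{1}-\partial_y\hat h^{1}\|^2\le 3\|\nabla_y f-\nabla_y\hat f\|^2+\frac{3}{\sigma_1}\|\nabla_y g-\nabla_y\hat g\|^2+\frac{3}{\sigma_1\sigma_3}\|\partial_y h^{+}-\partial_y\hat h^{+}\|^2 .
\end{equation*}
For the first two terms, the batch estimators average $B$ i.i.d.\ unbiased samples. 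By Assumption \ref{as:4}, their mean squared errors are therefore at most $\sigma_f/B$ and $\sigma_g/B$. Independence makes the cross terms vanish, so the variance of the mean is the single-sample variance divided by $B$. If $g$ is still estimated by an $H$-truncated sum, as in Eq. \eqref{eq:emp2}, I would keep the truncation contributions $C_7\gamma^{2H}/B+C_8\gamma^{2H}$ from Lemma 3 of \cite{gaur2025sample}, since the truncation bias is untouched by Assumption \ref{as:4}. Otherwise these terms simply appear as an upper bound.

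For the subgradient term, I would reuse the decomposition of Eq. \eqref{eq:grad1} from Lemma \ref{lm:grad}. This splits it into a sign-mismatch term $2|\tau(h)-\tau(\hat h)|^2\|\nabla_y h\|^2$ and a gradient term $2\|\nabla_y h-\nabla_y\hat h\|^2$. The gradient term is at most $2\sigma_h/B$ by Assumption \ref{as:4}. For the mismatch term I would argue as in Lemma \ref{lm:grad}: condition on $|h(y)-c|>m$ versus $\le m$, bound the second event by Assumption \ref{as:margin}, and bound the first by a concentration inequality for $\hat h(y)$. Then choose $m$ polynomial in $B$ to obtain $O(\exp^{-B^q})+O(1/B)$. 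This step is simpler than in the RL case. There is no $\hat Q$-versus-$Q$ discrepancy $C_b$, so the deviation needed is simply $|h-\hat h|\ge m$. A Chebyshev bound gives $\sigma_h/(Bm^2)$; a Hoeffding bound, which applies if the samples are bounded, gives the exponential term. Either way the result is absorbed into $O(1/B)+O(\exp^{-B^q})$.

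Finally, I would collect constants into $C_9$ and take expectations to obtain the stated $\delta_1(B)^2$. The main obstacle is the mismatch step. Assumption \ref{as:4} provides only bounded variance, not sub-Gaussian tails, so the exponential rate may not follow without bounded samples. If needed, I would fall back to Chebyshev with $m=B^{-1/(2+\nu)}$ roughly. That yields a polynomial rate, which I would absorb into the $O(1/B)$-type terms, possibly with a weaker exponent.
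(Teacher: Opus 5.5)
Your proposal is correct and is essentially the paper's proof. It uses the same three-term split of Eq.~\eqref{eq:h15}, the same $O(1/B)$ variance bounds from Assumption~\ref{as:4} with the truncation terms of Lemma~3 of \cite{gaur2025sample} kept for $\hat g$, and the same mismatch/gradient decomposition of Lemma~\ref{lm:grad} for the $h^{+}$ term. Only your bounded-sample (Hoeffding) branch with $m=B^{-1/\nu}$ gives the stated $C_9/B+O(\exp^{-B^q})$ form, since the Chebyshev fallback yields only $O(B^{-\nu/(\nu+2)})$; and dropping the offset $C_b$ is cleaner than Lemma~\ref{lm:grad}, where $m-C_b<0$ for small $m$ makes its Hoeffding step vacuous.
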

\begin{proof}
Using the definition of $h^{1}(x,y)$ from Eq. \eqref{eq:h1xy}:
\begin{equation}\label{eq:h1optim1}
\begin{split}
\|\partial_y h^{1}(x,y) - \partial_y \hat{h}^{1}(x,y,B)\|^2
\leq& 3\|\nabla_y f(x,y) - \nabla_y \hat{f}(x,y,B)\|^2 \\   
&+ \frac{3}{\sigma_1}\|\nabla_y g(x,y) - \nabla_y \hat{g}(x,y,B)\|^2 \\    
&+ \frac{3}{\sigma_1\sigma_3}\|\partial_y h^{+}(y) - \partial_y \hat{h}^{+}(y,B)\|^2\\ 
\end{split}
\end{equation}    

Using Lemma \ref{lm:grad}:
\begin{equation}\label{eq:h1optim2}
\begin{split}
\mathbb{E}[\|\partial_y h^{+}(y) - \partial_y \hat{h}^{+}(y,B)\|^2] 
&\leq O(\exp^{-B^q}) + O(\frac{1}{B}) + \mathbb{E}[2\|\nabla_{y}h(y) - \nabla_{y}\hat{h}(y,B)\|^2]\\  
\end{split}    
\end{equation}

Using Assumption \ref{as:4} and results from Lemma 3 of \cite{gaur2025sample} we have:
\begin{equation}\label{eq:h1optim3}
\|\nabla_{y}h(y) - \nabla_{y}\hat{h}(y,B)\|^2 \leq \frac{C_2}{B}   
\end{equation}

\begin{equation}\label{eq:h1optim4}
\|\nabla_y f(x,y) - \nabla_y \hat{f}(x,y,B)\|^2 \leq \frac{C_4}{B}    
\end{equation}

\begin{equation}\label{eq:h1optim5}
\|\nabla_y g(x,y) - \nabla_y \hat{g}(x,y,B)\|^2 \leq \frac{C_5}{B} + \frac{C_7\gamma^{2H}}{B} + C_8\gamma^{2H}    
\end{equation}

Therefore using Eq. ~\eqref{eq:h1optim2}--\eqref{eq:h1optim5} in Eq. \eqref{eq:h1optim1}, we have : 

\begin{equation}
\mathbb{E}[\|\partial_y h^{1}(x,y) - \partial_y \hat{h}^{1}(x,y,B)\|^2] \leq \frac{C_9}{B} + \frac{C_7\gamma^{2H}}{B} + C_8\gamma^{2H} + O(\exp^{-B^q})   
\end{equation}

Notice that we no longer have the bias term $\epsilon_{bias}$ because of the Assumption \ref{as:4}.
\end{proof}

\begin{lemma}\label{lm:h2optim}
Let $\mathbb{E}[\|\partial_y h^{2}(x,y) - \partial_y \hat{h}^{2}(x,y,B)\|^2] \leq \delta_2(B)^2$. Using Assumption \ref{as:4}, the following upper bound is obtained for the quantity $\mathbb{E}[\|\partial_y h^{2}(x,y) - \partial_y \hat{h}^{2}(x,y,B)\|^2]$:

\begin{equation}
\mathbb{E}[\|\partial_y h^{2}(x,y) - \partial_y \hat{h}^{2}(x,y,B)\|^2] \leq \frac{C_{11}}{B} + \frac{C_7\gamma^{2H}}{B} + C_8\gamma^{2H} + O(\exp^{-B^q})   
\end{equation}

Therefore $\delta_2(B)^2 = \frac{C_{11}}{B} + \frac{C_7\gamma^{2H}}{B} + C_8\gamma^{2H}  + O(\exp^{-B^q})
$. Here, $B$ is the sample size used for calculating the empirical expectation. $H$ is the horizon length. $\epsilon_{bias}$ is bias in learning Q-value function from Assumption \ref{as:3}. $\gamma$ is the discount factor for the MDP.    
\end{lemma}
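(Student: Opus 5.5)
The plan mirrors the proof of Lemma \ref{lm:h2}, with every estimator bound replaced by its bias-free version from Assumption \ref{as:4}. Starting from the definition of $h^{2}(x,y)$ in Eq. \eqref{eq:h2xy}, the algorithm's estimator is $\partial_y\hat{h}^{2}(x,y,B)=\partial_y\hat{g}(x,y,B)+(1/\sigma_2)\partial_y\hat{h}^{+}(y,B)$. The first step is to split the error with $(a+b)^2\le 2a^2+2b^2$:
\begin{equation*}
\|\partial_y h^{2}(x,y)-\partial_y\hat{h}^{2}(x,y,B)\|^2\le 2\|\nabla_y g(x,y)-\nabla_y\hat{g}(x,y,B)\|^2+\frac{2}{\sigma_2^2}\|\partial_y h^{+}(y)-\partial_y\hat{h}^{+}(y,B)\|^2 .
\end{equation*}
Then I take expectations and bound the two terms separately.

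For the penalty term, I invoke Lemma \ref{lm:grad}. It gives $O(\exp^{-B^q})+O(1/B)+2\mathbb{E}\|\nabla_y h(y)-\nabla_y\hat{h}(y,B)\|^2$, where the exponential piece comes from Hoeffding plus the margin condition (Assumption \ref{as:margin}) controlling sign flips of $\tau(\hat h(y))$ versus $\tau(h(y))$. Under Assumption \ref{as:4}, the estimator $\nabla_y\hat{h}(y,B)$ is a mini-batch average of $B$ i.i.d. unbiased samples with variance at most $\sigma_h$. Its mean-squared error is therefore at most $\sigma_h/B$, which is Eq. \eqref{eq:h1optim3} with $C_2=\sigma_h$. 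No $\epsilon_{bias}$ term arises here, because unbiasedness replaces Assumption \ref{as:3}.

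For the $g$-term, I reuse the decomposition behind Eq. \eqref{eq:h1optim5} (Lemma 3 of \cite{gaur2025sample}). The variance part gives $C_5/B$, while truncating the infinite-horizon sums in Eq. \eqref{eq:emp2} at horizon $H$ gives the $C_7\gamma^{2H}/B+C_8\gamma^{2H}$ terms. The Q-function approximation bias is again absent by Assumption \ref{as:4}. Summing, and absorbing $2C_5+(2/\sigma_2^2)(O(1)+2\sigma_h)$ into a single constant $C_{11}$, yields the claimed $\delta_2(B)^2$.

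The main obstacle is the penalty term, specifically making Lemma \ref{lm:grad} compatible with Assumption \ref{as:4}. Lemma \ref{lm:grad} bounds $|\bar h(y)-\hat h(y)|$ by a constant $C_b$, and with $m=O(B^{-1/\nu})$ that constant would dominate $m$. I would therefore rerun its argument in the unbiased setting. Here $\hat h(y)$ is itself an average of $B$ bounded unbiased samples, so Hoeffding applies directly to $|h(y)-\hat h(y)|\ge m$. This gives a tail of order $\exp(-cBm^2)=\exp(-cB^{1-2/\nu})=\exp(-cB^{q})$, while the margin assumption contributes $C_m m^\nu=O(1/B)$. Both terms are then multiplied by the bounded quantity $\|\nabla_y h(y)\|^2\le C_\pi^2R_c^2/(1-\gamma)^2$. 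This recovers the $O(\exp^{-B^q})+O(1/B)$ contribution without the $C_b$ obstruction.
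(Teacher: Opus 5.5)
Your proposal is correct and follows the same route as the paper's proof: split the error into the $g$-term and the penalty term, bound the penalty term with Lemma~\ref{lm:grad}, and use Assumption~\ref{as:4} together with Lemma~3 of \cite{gaur2025sample} to remove $\epsilon_{bias}$ and obtain the $1/B$ and $\gamma^{2H}$ terms. You are more careful than the paper in two places, and both repairs are sound. First, your split $2\|\cdot\|^2+(2/\sigma_2^2)\|\cdot\|^2$ is a valid inequality, whereas the paper's displayed split (penalty term unsquared, coefficient $1$ on the $g$-term) does not hold as written. Second, the paper cites Lemma~\ref{lm:grad} verbatim even though its $m-C_b$ step is vacuous once $m=O(B^{-1/\nu})$ falls below the constant $C_b$; your direct Hoeffding bound on $|h(y)-\hat h(y)|$ fixes this. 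That fix needs $\hat h(y)$ to be an average of $B$ bounded unbiased samples of $h(y)$, which is a natural reading of Assumption~\ref{as:4} but not its literal content: the assumption actually posits $h(y)$ known exactly, in which case the sign-flip term vanishes and $O(\exp^{-B^q})$ is just slack.
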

\begin{proof}
Using the definition of $h^{2}(x,y)$ from Eq. \eqref{eq:h2xy} we get:
\begin{equation}\label{eq:h2optim1}
\begin{split}
\|\partial_y h^{2}(x,y) - \partial_y \hat{h}^{2}(x,y,B)\|^2
\leq& \|\nabla_y g(x,y) - \nabla_y \hat{g}(x,y,B)\|^2 \\   
&+ \frac{2}{\sigma_2}\|\partial_y h^{+}(y) - \partial_y \hat{h}^{+}(y,B)\|\\ 
\end{split}
\end{equation}    

Using Lemma \ref{lm:grad}:
\begin{equation}\label{eq:h2optim2}
\begin{split}
\mathbb{E}[\|\partial_y h^{+}(y) - \partial_y \hat{h}^{+}(y,B)\|^2] 
&\leq O(\exp^{-B^q}) + O(\frac{1}{B})+\mathbb{E}[2\|\nabla_{y}h(y) - \nabla_{y}\hat{h}(y,B)\|^2]\\  
\end{split}    
\end{equation}

Using Assumption \ref{as:4} and results from Lemma 3 of \cite{gaur2025sample} we have:
\begin{equation}\label{eq:h2optim3}
\|\nabla_{y}h(y) - \nabla_{y}\hat{h}(y,B)\|^2 \leq \frac{C_2}{B}   
\end{equation}

\begin{equation}\label{eq:h2optim4}
\|\nabla_y g(x,y) - \nabla_y \hat{g}(x,y,B)\|^2 \leq \frac{C_5}{B}  + \frac{C_7\gamma^{2H}}{B} + C_8\gamma^{2H}    
\end{equation}

Therefore using Eq. ~\eqref{eq:h2optim2}--\eqref{eq:h2optim4} in Eq. \eqref{eq:h2optim1}, we have : 

\begin{equation}
\mathbb{E}[\|\partial_y h^{2}(x,y) - \partial_y \hat{h}^{2}(x,y,B)\|^2] \leq \frac{C_{11}}{B} + \frac{C_7\gamma^{2H}}{B} + C_8\gamma^{2H} + O(\exp^{-B^q})   
\end{equation}

Notice that we no longer have the bias term $\epsilon_{bias}$ because of the Assumption \ref{as:4}.

\end{proof}

\subsection{Proof of Theorem \ref{thm:main2}}

\begin{theorem}\label{thm:app-main2}
Suppose Assumption \ref{as:weak} to \ref{as:4} holds true. Then the Algorithm \ref{alg:ppsd} obtains the following convergence rate with learning rate $\eta_t =\frac{C_a}{(1+t)^a}$ for some $a>0$:
\begin{equation}
\begin{split}
\frac{1}{T}\sum_{t=0}^{T-1}\|\nabla \Phi_{\lambda}(x_t)\|^2  
\leq& 
O(T^{a-1}) + O\Big(\frac{1}{K}\Big) + O(T^{-a}) +O\Big(\frac{1}{B}\Big) + O\Big(\frac{\gamma^{2H}}{B}\Big) + O(\gamma^{2H}) + O(\exp^{-B^q})    
\end{split}
\end{equation}
By choosing $H=\Theta(\log\epsilon)$  $a=1/2$, $T = \Theta(\epsilon^{-2})$, $K = \Theta(\epsilon^{-1})$ and B = $\Theta(\epsilon^{-1})$ we obtain an iteration complexity of $T = O(\epsilon^{-2})$ and sample complexity of $B.H.K.T = \tilde{O}(\epsilon^{-4})$.
\begin{equation}
\begin{split}
\frac{1}{T}\sum_{t=0}^{T-1}\|\nabla \Phi_{\lambda}(x_t)\|^2 \leq \tilde{O}(\epsilon) 
\end{split}
\end{equation}   
Here, $\Phi_{\lambda}(x)$ is the Moreau envelope. $ B$ is the batch size used for empirical expectation, $H$ is the horizon length used for estimation of infinite horizon quantities, $K$ is the number of gradient updates for inner level objective optimization, $T$ is the number of gradient updates for outer level objective, and $q$ is defined in Lemma \ref{lm:grad}.
\end{theorem}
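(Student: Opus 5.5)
The plan is to rerun the argument of Theorem \ref{thm:app:main} essentially line by line. The only change is to replace the RL variance lemmas (Lemma \ref{lm:h1} and Lemma \ref{lm:h2}) by their unbiased counterparts, Lemma \ref{lm:h1optim} and Lemma \ref{lm:h2optim}. In those lemmas the $\epsilon_{bias}$ term is absent, because Assumption \ref{as:4} replaces Assumption \ref{as:3}.

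\textbf{Outer level.} Start from the one-step descent inequality of Lemma \ref{lm:app-outer}. Its proof uses only three ingredients: $\rho$-weak convexity of $\Phi$ (Assumption \ref{as:weak}), which gives smoothness of $\Phi_\lambda$ for $\lambda<1/\rho$ together with the hypomonotonicity bound $\langle\nabla\Phi_\lambda,\partial\Phi\rangle\ge(1-\rho\lambda)\|\nabla\Phi_\lambda\|^2$; Lipschitzness of $\nabla_x f,\nabla_x g$ in $y$ (Assumption \ref{as:5}); and an $O(1/\sqrt B)$ deviation bound for the $x$-gradient estimators. Under Assumption \ref{as:4} the $O(1/\sqrt B)$ deviation follows directly from bounded variance of unbiased minibatch averages, giving $\sigma_f/B$ and $\sigma_g/B$. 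The bound therefore holds with $\|\nabla\Phi_\lambda(x_t)\|^2$ controlled by
\[
\frac{1}{\eta_t}\big(\Phi_\lambda(x_t)-\Phi_\lambda(x_{t+1})\big)+O(1/B)+O(\|y_t^K-y^*(x_t)\|^2)+O(\|z_t^K-z^*(x_t)\|^2)+O(\eta_t G_\phi^2).
\]

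\textbf{Inner level.} Bound the tracking errors exactly as in the proof of Theorem \ref{thm:app:main}.
\begin{itemize}
\item The KL condition (Assumption \ref{as:2}) gives PL for the envelopes $h^1_\lambda,h^2_\lambda$ (via Lemma \ref{lm:app-plpl}).
\item PL gives quadratic growth.
\item Lemma \ref{lm:moreau} identifies the minimizers of the envelopes with $y^*(x_t)$ and $z^*(x_t)$.
\item Lemma \ref{lm:app-inner}, run for $K$ steps with $\beta_k=\eta/(k+1)$ and $c\eta>1$, yields $\|y_t^K-y^*(x_t)\|^2\le \frac{2}{\mu_1}\big(\frac{\delta_1(B)^2}{2c_1}+\frac{A_1}{K+1}\big)$, and analogously for $z$.
\end{itemize}
Now $\delta_1(B)^2$ and $\delta_2(B)^2$ are supplied by Lemma \ref{lm:h1optim} and Lemma \ref{lm:h2optim}. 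These are $O(1/B)+O(\gamma^{2H}/B)+O(\gamma^{2H})+O(\exp(-B^q))$, where the exponential term comes from Lemma \ref{lm:grad} and the margin Assumption \ref{as:margin}.

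\textbf{Assembly.} Sum over $t=0,\dots,T-1$ and divide by $T$.
\begin{itemize}
\item Abel-summation of the telescoping term, using boundedness $|\Phi_\lambda|\le K_\phi$ and monotone $1/\eta_t$, gives $K_\phi/(T\eta_{T-1})=O(T^{a-1})$.
\item $\sum_t\eta_t\le C_aT^{1-a}/(1-a)$ gives $O(T^{-a})$.
\item The inner terms are uniform in $t$ and contribute $O(1/K)$ plus the $\delta$ terms.
\end{itemize}
Choosing $a=1/2$, $T=\Theta(\epsilon^{-2})$, $K,B=\Theta(\epsilon^{-1})$ and $H=\Theta(\log(1/\epsilon))$ makes every term $\tilde O(\epsilon)$; the term $\exp(-B^q)$ is negligible. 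This gives sample complexity $BHKT=\tilde O(\epsilon^{-4})$.

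\textbf{Main obstacle.} The delicate step is the variance bound for the subgradient $\partial\hat h^+$. Its indicator $\tau(\hat h(y))$ is a biased, discontinuous function of the estimate, so Assumption \ref{as:4}'s unbiasedness (stated when $h(y)$ is known exactly) does not directly cover it. I would handle this as in Lemma \ref{lm:grad}: split on the margin event $|h(y)-c|\le m$. Hoeffding applied to the unbiased estimate bounds the probability of a sign flip outside the margin, and Assumption \ref{as:margin} with $m=B^{-1/\nu}$ bounds the probability of the margin event. This produces the $O(1/B)+O(\exp(-B^q))$ term. Everything else is a verbatim transcription of the RL proof with $\epsilon_{bias}=0$.
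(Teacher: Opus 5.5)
Your proposal is essentially the paper's proof. It uses Lemma \ref{lm:app-outer} for the outer descent, and bounds the inner tracking errors via KL $\Rightarrow$ PL (Lemma \ref{lm:app-plpl}) $\Rightarrow$ quadratic growth together with Lemma \ref{lm:app-inner}; it then substitutes Lemmas \ref{lm:h1optim}--\ref{lm:h2optim} for Lemmas \ref{lm:h1}--\ref{lm:h2} so that $\epsilon_{bias}$ disappears, and finishes with the same Abel-summation and $\sum_t\eta_t$ bounds and parameter choices. Your only departures are minor and harmless: you obtain the $O(1/B)$ $x$-gradient deviation directly from the bounded variance in Assumption \ref{as:4} rather than citing Lemma 3 of \cite{gaur2025sample}, and in the margin argument you apply Hoeffding directly to the unbiased $\hat h$, which avoids the constant offset $C_b$ that appears in the paper's Lemma \ref{lm:grad}.
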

\begin{proof}
 We will proceed in the same way as we did for Theorem \ref{thm:app:main}. We first use the outer level convergence result presented as \ref{lm:app-outer}.

\begin{equation}
\begin{split}
\eta_t(1-\rho\lambda - \frac{1}{2\alpha^2})\|\nabla \Phi_{\lambda}(x_t)\|^2  
\leq& \Phi_{\lambda}(x_t) - \Phi_{\lambda}(x_{t+1}) +  2\eta_t\alpha^2\Big(\frac{(C_1 + C_2/\sigma_1)^2 + C_2^2}{B}+ L_{x,1}^2\| y_t^{K} - y^*(x_t)\|^2  \\
&+ L_{x,2}^2\| z_t^{K} - z^*(x_t)\|^2\Big) + \frac{\eta_t^2L}{2}\|\partial \hat{\Phi}(x_t,y_t^{K},z_t^{K})\|^2 \\
\end{split}
\end{equation}

Using Moreau envelope for the inner level objective function and Lemma \ref{lm:app-inner}, we obtain the following bounds as in Theorem \ref{thm:app:main}.

Let $c_{\phi} = (1-\rho\lambda - \frac{1}{2\alpha^2})$, $\|\partial \hat{\Phi}(x_t,y_t^{K},z_t^{K})\| \leq G_{\phi}$

\begin{equation}\label{eq38optim}
\begin{split}
\sum_{t=0}^{T-1}\|\nabla \Phi_{\lambda}(x_t)\|^2  
\leq& \frac{1}{c_\phi}\sum_{t=0}^{T-1}\frac{1}{\eta_t}\Big(\Phi_{\lambda}(x_t) - \Phi_{\lambda}(x_{t+1})\Big) +  \frac{2\alpha^2}{c_\phi}\Big(\frac{2L_{x,1}^2A_1}{\mu_1} + \frac{2L_{x,2}^2A_2}{\mu_2}\Big)\sum_{t=0}^{T-1}\frac{1}{K+1} \\
& +\frac{2\alpha^2C_\sigma}{c_\phi}\sum_{t=0}^{T-1}\frac{1}{B}+ \frac{4\alpha^2L_{x,1}^2A_1}{2c_\phi c_1\mu_1}\sum_{t=0}^{T-1}\delta_1(n)^2 + \frac{4\alpha^2L_{x,2}^2A_2}{2c_\phi c_2\mu_2}\sum_{t=0}^{T-1}\delta_2(n)^2 \\
&+ \frac{LG_\phi^2}{2c_\phi}\sum_{t=0}^{T-1}\eta_t\\
\end{split}
\end{equation}   

Let $\eta_t = \frac{C_a}{(1+t)^a}$ for some $a>0$. We have:

\begin{equation}\label{eq39optim}
\sum_{t=0}^{T-1}\eta_t = \sum_{t=0}^{T-1} \frac{C_a}{(1+t)^a} \leq C_a\int_{0}^{T} \frac{1}{(t)^a} dt =  \frac{C_a}{1-a}T^{1-a}.    
\end{equation}

\begin{equation}\label{eq40optim}
\begin{split}    
\sum_{t=0}^{T-1}\frac{1}{\eta_t}\Big(\Phi_{\lambda}(x_t) - \Phi_{\lambda}(x_{t+1})\Big) 
=& \sum_{t=1}^{T-1}\Bigg(\Phi_{\lambda}(x_t)\Big(\frac{1}{\eta_t} - \frac{1}{\eta_{t-1}}\Big)\Bigg) + \frac{\Phi_{\lambda}(x_0)}{\eta_0} - \frac{\Phi_{\lambda}(x_T)}{\eta_{T-1}} \\ 
\leq& \sum_{t=1}^{T-1}\Bigg(\Phi_{\lambda}(x_t)\Big(\frac{1}{\eta_t} - \frac{1}{\eta_{t-1}}\Big)\Bigg) + \frac{\Phi_{\lambda}(x_0)}{\eta_0}\\  
\leq& \sum_{t=1}^{T-1}\Big(\frac{1}{\eta_t} - \frac{1}{\eta_{t-1}}\Big)K_{\phi} + \frac{K_{\phi}}{\eta_0} = \frac{K_{\phi}}{n_{T-1}}  
\end{split}
\end{equation}
    
Dividing Eq. \eqref{eq38} by T and using Eq. \eqref{eq39optim} and \eqref{eq40optim} and using Lemma \ref{lm:h1optim} and Lemma \ref{lm:h2optim} based on Assumption \ref{as:4}:

\begin{equation}\label{eq41optim}
\begin{split}
\frac{1}{T}\sum_{t=0}^{T-1}\|\nabla \Phi_{\lambda}(x_t)\|^2  
\leq& 
\frac{K_{\phi}}{C_ac_{\phi}}T^{a-1}
+ \frac{2\alpha^2}{c_\phi}\Big(\frac{2L_{x,1}^2A_1}{\mu_1} + \frac{2L_{x,2}^2A_2}{\mu_2}\Big)\frac{1}{K+1} + \frac{2\alpha^2C_\sigma}{c_\phi}\frac{1}{B}\\ 
&+\frac{4\alpha^2L_{x,1}^2A_1}{2c_\phi c_1\mu_1}\Big(\frac{5C_9^2}{B} + \frac{5C_7^2\gamma^{2H}}{B} + 5C_8^2\gamma^{2H} \Big) + O(\exp^{-B^q}) \\
&+\frac{4\alpha^2L_{x,2}^2A_2}{2c_\phi c_2\mu_2}\Big(\frac{5C_{11}^2}{B} + \frac{5C_7\gamma^{2H}}{B} + 5C_8^2\gamma^{2H}\Big) + O(\exp^{-B^q})\\
&+ \frac{LG_{\phi}^2C_a}{2c_{\phi}(1-a)}T^{-a} \\
\end{split}
\end{equation}

\begin{equation}\label{eq45optim}
\begin{split}
\frac{1}{T}\sum_{t=0}^{T-1}\|\nabla \Phi_{\lambda}(x_t)\|^2  
\leq& 
O(T^{a-1}) + O(\frac{1}{K}) + O(T^{-a}) + O(\frac{1}{B}) + O(\frac{\gamma^{2H}}{B}) + O(\gamma^{2H}) + O(\exp^{-B^q})\\
\end{split}
\end{equation}   

Let us choose, $H=\Theta(\log\epsilon)$  $a=1/2$, $T = \Theta(\epsilon^{-2})$, $K = \Theta(\epsilon^{-1})$ and B = $\Theta(\epsilon^{-1})$, we get: 

\begin{equation}
\begin{split}
\frac{1}{T}\sum_{t=0}^{T-1}\|\nabla \Phi_{\lambda}(x_t)\|^2 \leq \tilde{O}(\epsilon) 
\end{split}
\end{equation}   

The sample complexity of the algorithm is $K.T.B.H = O(\epsilon^{-4})$
\end{proof}

\section{Auxilary Lemmas}
\begin{lemma}\label{lm:bound}
Let $f$ be a $\rho$-weakly convex function and $L$-Lipchitz continuous. Let $f_{\lambda}$ be its Moreau envelope. The difference between the value of the function and its Moreau envelope is bounded.     
\end{lemma}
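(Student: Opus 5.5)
We aim for the explicit bound $0 \le f(x) - f_{\lambda}(x) \le \frac{\lambda L^2}{2}$ for every $x$, assuming $\lambda < 1/\rho$ so that the Moreau envelope and proximal map are well defined. This is the form used in the proof of Lemma \ref{lm:app-approx}, where the term $\mu^2L^2$ appears after identifying $\lambda$ with $\mu$.

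The lower bound $f_{\lambda}(x) \le f(x)$ is immediate from Definition \ref{def:moreau}. Choosing $y = x$ in the minimization gives $f_{\lambda}(x) \le f(x) + 0$.

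For the upper bound on $f(x) - f_{\lambda}(x)$, let $\hat{x} = prox_{\lambda f}(x)$. For $\lambda < 1/\rho$ the objective $f(y) + \frac{\|x-y\|^2}{2\lambda}$ is strongly convex, so $\hat{x}$ exists and is unique. Then
\begin{equation*}
f(x) - f_{\lambda}(x) = f(x) - f(\hat{x}) - \frac{\|x-\hat{x}\|^2}{2\lambda} \le L\|x-\hat{x}\| - \frac{\|x-\hat{x}\|^2}{2\lambda},
\end{equation*}
where the inequality uses $L$-Lipschitz continuity of $f$. Write $r = \|x-\hat{x}\| \ge 0$. The right-hand side is the scalar quadratic $Lr - r^2/(2\lambda)$, which is maximized at $r = \lambda L$ with maximum value $\frac{\lambda L^2}{2}$. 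This gives $0 \le f(x) - f_{\lambda}(x) \le \frac{\lambda L^2}{2}$, a bound uniform in $x$.

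As a consistency check, the optimality condition gives $(x-\hat{x})/\lambda \in \partial f(\hat{x})$. Lipschitz continuity bounds every Clarke subgradient by $L$, so $\|x - \hat{x}\| \le \lambda L$. This agrees with the maximizer above and with the identity $\|\nabla f_{\lambda}(x)\| = \|x-\hat{x}\|/\lambda \le L$ quoted in the Remark.

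The argument is elementary; the only point needing care is well-posedness of the proximal point. For this I would rely on weak convexity with $\lambda<1/\rho$ (Assumption \ref{as:weak} together with Definition \ref{def:pconvex}), which makes the inner problem strongly convex, so the minimum is attained. Lipschitz continuity itself is not needed for existence. Without weak convexity one could instead use lower semicontinuity and coercivity of the quadratic term, but the weakly convex route is cleaner and matches how the Moreau envelope is used throughout Section~C.
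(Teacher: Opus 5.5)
Your proof is correct and is essentially the paper's argument. The paper lower-bounds the minimand by $f(x)-L\|x-y\|+\|x-y\|^2/(2\lambda)$ for every $y$ and minimizes over $r=\|x-y\|$, giving $f(x)-f_\lambda(x)\le \lambda L^2/2$; you instead evaluate at $prox_{\lambda f}(x)$ and maximize the same scalar quadratic $Lr-r^2/(2\lambda)$. Two small differences: the paper's pointwise version does not need the minimum to be attained, so weak convexity is never actually used there, and your observation $f_\lambda\le f$ supplies the other side of the bound, which the paper leaves implicit.
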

\begin{proof}
 We know that $f$ is $L$-Lipschitz continuous. So we have:
\begin{equation}\label{eq:lip}
|f(x) - f(y)| \leq L\|x-y\| \implies -L\|x-y\| + f(x) \leq f(y)     
\end{equation}
$f_{\lambda}$ is the Moreau envelope, so we have:
\begin{equation}
\begin{split}
f_{\lambda}(x) 
&= \min_{y}\Big(f(y) + \frac{\|x-y\|^2}{2\lambda}\Big) \\
&\geq \min_{y}\Big(-L\|x-y\| + f(x) + \frac{\|x-y\|^2}{2\lambda}\Big) \quad(\text{Using Eq. \eqref{eq:lip}}) \\
\end{split}
\end{equation}

$-L\|x-y\| + \frac{\|x-y\|^2}{2\lambda}$ is minimized with $\|x-y\|=L\lambda$. Therefore, we have:

\begin{equation}
\begin{split}
&f_{\lambda}(x) \geq f(x) - \frac{L^2\lambda}{2}\\ 
\implies &f(x) - f_{\lambda}(x) \leq \frac{L^2\lambda}{2} 
\end{split}
\end{equation}

Hence, the difference between the value of the function and its Moreau envelope is bounded.

\end{proof}

\begin{lemma}\label{lm:app-plpl}
If a non-smooth function $f$ satisfies the KL condition (see Assumption \ref{as:2}) with $\theta =1/2$ and $c=\mu$, then its Moreau envelope satisfies the PL condition with coefficient $\mu/(1+\mu\lambda)$, where $\lambda$ is the coefficient used for Moreau envelope.    
\end{lemma}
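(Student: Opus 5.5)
The plan is to transfer the KL inequality from the proximal point to the envelope through the standard identities of the Moreau envelope of a $\rho$-weakly convex function. Fix $\lambda<1/\rho$, so that by Assumption \ref{as:weak} the problem in Definition \ref{def:moreau} is strongly convex in $y$. Then $p=prox_{\lambda f}(x)$ is unique, and $f_\lambda$ is differentiable with
\[
\nabla f_\lambda(x)=\frac{x-p}{\lambda}\in\partial f(p)
\]
(Lemma 3.1 of \cite{bohm2021variable}, together with the first-order optimality condition of the prox problem). Lemma \ref{lm:moreau} gives $\inf f_\lambda=\inf f=f(x^*)$, so both the PL condition for $f_\lambda$ and the KL condition for $f$ are measured against the same reference value $f(x^*)$.

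\textbf{Step 1 (normalize KL).} With $\theta=1/2$ we have $\psi'(s)=\tfrac{c}{2}s^{-1/2}$, so Assumption \ref{as:2} becomes an error bound of the form $f(y)-f(x^*)\le \kappa\, dist(0,\partial f(y))^2$. With the normalization $c=\mu$ used in the statement, I will write this as
\[
dist(0,\partial f(y))^2\ge 2\mu\big(f(y)-f(x^*)\big),
\]
absorbing the numerical factor into the constant so that it matches the PL form $\|\nabla f_\lambda\|^2\ge 2\mu'(f_\lambda-\inf f_\lambda)$.

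\textbf{Step 2 (decompose the envelope gap).} By definition of the prox,
\[
f_\lambda(x)=f(p)+\frac{\|x-p\|^2}{2\lambda}=f(p)+\frac{\lambda}{2}\|\nabla f_\lambda(x)\|^2 .
\]
Applying Step 1 at the point $y=p$, and using $\nabla f_\lambda(x)\in\partial f(p)$ so that $dist(0,\partial f(p))\le\|\nabla f_\lambda(x)\|$, gives
\[
f(p)-f(x^*)\le \frac{1}{2\mu}\|\nabla f_\lambda(x)\|^2 .
\]

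\textbf{Step 3 (combine).} Adding the two pieces,
\[
f_\lambda(x)-f(x^*)\le\Big(\frac{1}{2\mu}+\frac{\lambda}{2}\Big)\|\nabla f_\lambda(x)\|^2=\frac{1+\mu\lambda}{2\mu}\|\nabla f_\lambda(x)\|^2 .
\]
This is exactly $\|\nabla f_\lambda(x)\|^2\ge \frac{2\mu}{1+\mu\lambda}\big(f_\lambda(x)-\inf f_\lambda\big)$, i.e.\ the PL condition with coefficient $\mu/(1+\mu\lambda)$.

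\textbf{Main obstacle.} The delicate step is Step 2. It needs two things: the inclusion $\nabla f_\lambda(x)\in\partial f(p)$ with $\partial$ the Clarke subdifferential of Definition \ref{def:clarke}, and the KL inequality being valid at the proximal point $p$ rather than at $x$. For the first, I would note that for weakly convex $f$ the Clarke, limiting and Fréchet subdifferentials coincide. For the second, I will use that Assumption \ref{as:2} is posited globally, on the whole domain; if it only held near $x^*$, the result would be a local PL statement on a neighborhood mapped into the KL region by $prox_{\lambda f}$. Tracking the constant in front of the KL inequality is routine bookkeeping.
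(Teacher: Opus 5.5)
Your proof is correct and follows the paper's argument. Like the paper, you take $p=prox_{\lambda f}(x)$, split $f_\lambda(x)-f(x^*)=\big(f(p)-f(x^*)\big)+\frac{\|x-p\|^2}{2\lambda}$, apply the KL bound $2\mu\big(f(p)-f(x^*)\big)\le\|\partial f(p)\|^2$ at the proximal point, and use $\nabla f_\lambda(x)=(x-p)/\lambda\in\partial f(p)$ to get $2\mu\big(f_\lambda(x)-f(x^*)\big)\le(1+\mu\lambda)\|\nabla f_\lambda(x)\|^2$. Your version also makes explicit several points the paper leaves implicit. These are uniqueness of the prox for $\lambda<1/\rho$, the inclusion into the Clarke subdifferential, $\inf f_\lambda=\inf f$, and the fact that the factor $2$ in the KL normalization is a convention. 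It further requires Assumption~\ref{as:2} to be read in the standard direction $\psi'(\cdot)\,dist(0,\partial f)\ge 1$; the paper's display has $\le$, but its own Lemma~\ref{lm:app-klpl} uses the standard direction.
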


\begin{proof}
Let,
\begin{equation}
x = \arg \min_{\hat{x}} f(\hat{x}) + \frac{\|\hat{x}-x'\|^2}{2\lambda}    
\end{equation}

We have,
\begin{equation}
f_{\lambda}(x') = f(x) + \frac{\|x-x'\|^2}{2\lambda}    
\end{equation}

Using Lemma \ref{lm:app-klpl} for function f we get:

\begin{equation}
\begin{split}
2\mu(f(x) - f(x^*)) &\leq \|\partial f(x)\|^2 \\     
2\mu(f(x) - f_{\lambda}(x') + f_{\lambda}(x') - f(x^*)) &\leq \|\partial f(x)\|^2 \\    
2\mu(f_{\lambda}(x') - f(x^*)) &\leq \|\partial f(x)\|^2 + 2\mu(f_{\lambda}(x') - f(x))\\     
2\mu(f_{\lambda}(x') - f(x^*)) &\leq \|\partial f(x)\|^2 + \frac{\mu}{\lambda}\|x-x'\|^2 \\    
2\mu(f_{\lambda}(x') - f(x^*)) &\leq  (1+\mu\lambda)\|\nabla f_{\lambda}(x)\|^2 \\
2\mu'(f_{\lambda}(x') - f(x^*)) &\leq \|\nabla f_{\lambda}(x)\|^2 \quad (\text{Here, } \mu'= \frac{\mu}{1+\mu\lambda})     
\end{split}
\end{equation}
\end{proof}

\begin{lemma}\label{lm:app-klpl}
Let $f$ be a non-smooth function that satisfies KL condition with $\psi(s) = cs^{1-\theta}, \theta \in [0,1)$ and $c>0$. The function $f$ also satisfies the following condition:
\begin{equation}
dist(0,\partial f(x))^2 \geq c\big(f(x) -f(x^{*})\big)     
\end{equation}
\end{lemma}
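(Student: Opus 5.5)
The plan is to substitute $\theta = 1/2$ and $\psi(s)=cs^{1/2}$ into the KL inequality of Assumption \ref{as:2} and rearrange. The statement allows general $\theta\in[0,1)$, but the paper only ever uses $\theta=1/2$: Assumption \ref{as:2} fixes it, and Lemma \ref{lm:app-plpl} invokes it. I will therefore carry out the argument for $\theta=1/2$ and note what survives for other $\theta$.

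Fix $x$ with $f(x)>f(x^*)$. If $f(x)=f(x^*)$ the claimed inequality reads $dist(0,\partial f(x))^2\ge 0$, which is trivial. With $\psi(s)=cs^{1-\theta}$ we have $\psi'(s)=c(1-\theta)s^{-\theta}$, which is positive and finite for $s>0$. Setting $s=f(x)-f(x^*)$, the KL inequality becomes
\[
c(1-\theta)\,(f(x)-f(x^*))^{-\theta}\,dist(0,\partial f(x))\le 1 .
\]
This is the place where the direction of the inequality has to be handled carefully. As written in Assumption \ref{as:2}, the KL inequality with $\le 1$ gives an \emph{upper} bound on $dist(0,\partial f(x))$. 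The standard Kurdyka--{\L}ojasiewicz inequality is instead $\psi'(f(x)-f(x^*))\,dist(0,\partial f(x))\ge 1$, and that form is what makes the Lemma (a lower bound on the subgradient norm) true. I therefore read the assumption in the standard $\ge 1$ sense, consistent with its use as a PL-type condition in Lemma \ref{lm:app-plpl}. With this reading, $dist(0,\partial f(x))\ge \frac{1}{c(1-\theta)}(f(x)-f(x^*))^{\theta}$.

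For $\theta=1/2$, squaring gives
\[
dist(0,\partial f(x))^2\ \ge\ \frac{4}{c^2}\,(f(x)-f(x^*)).
\]
This is the claimed inequality with constant $4/c^2$. To match the statement's constant $c$ (and the $2\mu$ used in Lemma \ref{lm:app-plpl}), I relabel constants, i.e.\ absorb $4/c^2$ into a renamed $c$ or $2\mu$. For $\theta\neq 1/2$, squaring instead yields $(f(x)-f(x^*))^{2\theta}$. This converts to a linear bound only locally: where $f(x)-f(x^*)\le 1$ and $\theta\le 1/2$ we have $(f(x)-f(x^*))^{2\theta}\ge f(x)-f(x^*)$. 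I would state that caveat and otherwise restrict to $\theta=1/2$, which is all the paper needs.

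The main obstacle is this sign/direction issue, together with the normalization of the constant. The algebra itself is immediate once the standard KL direction is adopted.
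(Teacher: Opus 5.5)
Your argument follows the same route as the paper's proof: specialize to $\theta=1/2$, solve the KL inequality for $dist(0,\partial f(x))$, and square. It is the more careful of the two. The paper passes directly from $\psi(s)=cs^{1-\theta}$ to $dist(0,\partial f(x))\ge c\,(f(x)-f(x^*))^{\theta}$, which silently reverses the ``$\le 1$'' in Assumption~\ref{as:2} and mishandles $\psi'$, and it then squares this to get $c$ rather than $c^2$. You correctly obtain the constant $4/c^2$ and note that for $\theta\neq 1/2$ only a local linear bound survives.

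Your relabeling of the constant is not a gap but a necessity, because the lemma with the literal constant $c$ is false. Take $f(x)=x^2/4$ with $c=2$. Then $\psi'(f(x)-f(x^*))\,dist(0,\partial f(x))=1$ for all $x\neq 0$, so both readings of the KL condition hold. Yet $dist(0,\partial f(x))^2=x^2/4<2\,(f(x)-f(x^*))$ for every $x\neq 0$.
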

\begin{proof}
We know that $\psi(s) = cs^{1-\theta}$. Therefore, we have:
\begin{equation}
dist(0,\partial f(x))\geq c\big(f(x) -f(x^{*})\big)^{\theta}    
\end{equation}
Let $\theta = 1/2$. We get:
\begin{equation}
dist(0,\partial f(x))^2 \geq c\big(f(x) -f(x^{*})\big)    
\end{equation}
\end{proof}

\begin{lemma}\label{lm:app-h1L}
$\partial_x h^{1}(x,y)$ is Lipchitz continuous in y    
\end{lemma}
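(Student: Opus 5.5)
The claim is that $y\mapsto\partial_x h^{1}(x,y)$ is Lipschitz. The plan is to first note that the penalty term drops out of $\partial_x h^{1}$. By Eq. \eqref{eq:h1xy}, $h^{1}(x,y)=f(x,y)+\frac{1}{\sigma_1}\big(g(x,y)+h^{+}(y)/\sigma_3\big)$. The only non-smooth piece is $h^{+}(y)=\max(h(y)-c_0,0)$, and it depends on $y$ alone. Hence, for fixed $y$, the map $x\mapsto h^{1}(x,y)$ differs from $x\mapsto f(x,y)+\frac{1}{\sigma_1}g(x,y)$ only by a constant. Consequently
\begin{equation*}
\partial_x h^{1}(x,y)=\nabla_x f(x,y)+\frac{1}{\sigma_1}\nabla_x g(x,y),
\end{equation*}
and the Clarke subdifferential in $x$ (Definition \ref{def:clarke}) is this singleton, because $f$ and $g$ are differentiable in $x$. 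This matches the estimator used in Algorithm \ref{alg:ppsd}, which contains no $h^{+}$ term in the $x$-direction.

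With that reduction, the bound is direct. Assumption \ref{as:5} gives constants $L_f,L_g$ such that $\|\nabla_x f(x,y)-\nabla_x f(x,y')\|\le L_f\|y-y'\|$ and $\|\nabla_x g(x,y)-\nabla_x g(x,y')\|\le L_g\|y-y'\|$. The triangle inequality then yields
\begin{equation*}
\|\partial_x h^{1}(x,y)-\partial_x h^{1}(x,y')\|\le \Big(L_f+\frac{L_g}{\sigma_1}\Big)\|y-y'\|,
\end{equation*}
so the constant is $L_{x,1}=L_f+L_g/\sigma_1$. This is exactly the constant used in Eq. \eqref{eq:app-outer5}.

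The main obstacle is justifying that the partial Clarke subdifferential in $x$ coincides with the gradient of the smooth part. One might worry that joint non-smoothness of $h^{1}$ in $(x,y)$ leaks into $\partial_x$. To handle this, I would argue via the partial subdifferential of the function $x\mapsto h^{1}(x,y)$ for fixed $y$: the additive constant $h^{+}(y)/(\sigma_1\sigma_3)$ has zero derivative, so no limiting-gradient convex hull is needed. If instead the joint Clarke subdifferential projected onto $x$ were intended, I would use the sum rule with equality. That rule applies here because $h^{+}(y)$, viewed as a function of $(x,y)$, has subgradients of the form $(0,\tau\nabla h(y))$, whose $x$-component vanishes.

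Finally, if the RL instantiation is desired, $\nabla_x g$ from Eq. \eqref{eq:emp5} is an occupancy-weighted sum of $\nabla_x r_x$. Its Lipschitzness in $y$ would follow from Lipschitzness of the occupancy measure in the policy parameter, which is standard under bounded score functions. I would simply invoke Assumption \ref{as:5} rather than rederive it.
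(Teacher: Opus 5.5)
Your proposal is correct and matches the paper's proof. The paper likewise drops the $y$-only penalty term to get $\partial_x h^{1}(x,y)=\nabla_x f(x,y)+\nabla_x g(x,y)/\sigma_1$, then applies Assumption~\ref{as:5} and the triangle inequality to obtain $L_{x,1}=L_{f'}+L_{g'}/\sigma_1$. Your extra justification that $h^{+}$ contributes nothing to the $x$-part of the subdifferential fills in a step the paper takes silently; for the joint Clarke version, the inclusion $\partial(F+G)\subseteq\nabla F+\partial G$ already suffices, since its $x$-projection is a singleton.
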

\begin{proof}
 We have:
\begin{equation}
\begin{split}
\partial_x h^{1}(x,y) 
&= \partial_x( f(x,y) + \frac{g(x,y) + h^{+}(y)}{\sigma_1}) \\
&= \nabla_xf(x,y) + \frac{\nabla_xg(x,y)}{\sigma_1}) \\
\end{split}
\end{equation}

Now we have:
\begin{equation}
\begin{split}
\|\partial_x h^{1}(x,y_1) - \partial_x h^{1}(x,y_2)\|
&\leq \|\nabla_xf(x,y_1) - \nabla_xf(x,y_2)\| + \frac{1}{\sigma_1}\|\nabla_xg(x,y_1) - \nabla_xg(x,y)\|\\
&\leq L_{f'}\|y_1 - y_2)\| + \frac{L_{g'}}{\sigma_1}\|y_1 - y_2\|\\
&\leq \Big(L_{f'}+ \frac{L_{g'}}{\sigma_1}\Big)\|y_1 - y_2\|
\leq L_{x,1}\|y_1 - y_2\| \quad (\text{where} L_{x,1} = L_{f'}+ \frac{L_{g'}}{\sigma_1})
\end{split}
\end{equation}

Therefore, $\partial_x h^{1}(x,y)$ is Lipchitz continuous in y.
\end{proof}

\begin{lemma}\label{lm:app-h2L}
$\partial_xh^{2}(x,y)$ is Lipchitz continuous in y    
\end{lemma}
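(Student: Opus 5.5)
The plan mirrors the proof of Lemma \ref{lm:app-h1L}, after first checking which parts of $h^{2}$ actually depend on $x$. By Eq. \eqref{eq:h2xy}, $h^{2}(x,y) = g(x,y) + h^{+}(y)/\sigma_2$, and the penalty term $h^{+}(y)=\max(h(y)-c_0,0)$ depends only on the policy parameter $y$ and not on the reward parameter $x$. The non-smooth part therefore contributes nothing to the partial subdifferential in $x$, so
\begin{equation*}
\partial_x h^{2}(x,y) = \nabla_x g(x,y).
\end{equation*}
In particular, $\partial_x h^{2}(x,y)$ is single-valued even though $h^{2}$ is non-smooth in $y$. This matches Eq. \eqref{eq:emp5} and line 10 of Algorithm \ref{alg:ppsd}, where only $\partial_x\hat{g}$ appears in $\partial_x\hat{h}^{2}$.

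Next, take two points $y_1,y_2$ and write
\begin{equation*}
\|\partial_x h^{2}(x,y_1) - \partial_x h^{2}(x,y_2)\| = \|\nabla_x g(x,y_1) - \nabla_x g(x,y_2)\|.
\end{equation*}
By Assumption \ref{as:5}, $\nabla_x g(x,y)$ is Lipschitz in $y$ with some constant $L_{g'}$. This gives the bound $L_{g'}\|y_1-y_2\|$, so we can set $L_{x,2}=L_{g'}$, which is the constant used in Eq. \eqref{eq:app-outer6}.

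The only step needing care is the first one: justifying that the Clarke partial subdifferential in $x$ of a sum whose non-smooth term is independent of $x$ reduces to the gradient of the smooth term. I would argue this directly from Definition \ref{def:clarke}. Fix $y$; then $x\mapsto h^{2}(x,y)$ equals $g(x,y)$ plus a constant, which is differentiable in $x$, so its subdifferential is the singleton $\{\nabla_x g(x,y)\}$.
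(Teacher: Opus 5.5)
Your proposal is correct and follows the paper's proof: you reduce $\partial_x h^{2}(x,y)$ to $\nabla_x g(x,y)$ because the penalty $h^{+}(y)/\sigma_2$ does not depend on $x$, then apply Assumption \ref{as:5} to get $L_{x,2}=L_{g'}$. Your extra justification via Definition \ref{def:clarke} is a bit more careful than the paper's one-line computation. Strictly speaking, it needs $g$ to be continuously differentiable in $x$, so that the Clarke subdifferential is a singleton; the paper implicitly assumes this throughout.
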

\begin{proof}
 We have:
\begin{equation}
\begin{split}
\partial_x h^{2}(x,y) 
&= \partial_x(g(x,y) + \frac{h^{+}(y)}{\sigma_2}) \\
&= \nabla_xg(x,y) \\
\end{split}
\end{equation}
Now we have:
\begin{equation}
\begin{split}
\|\partial_x h^{2}(x,y_1) - \partial_x h^{2}(x,y_2)\|
&\leq \|\nabla_xg(x,y_1) - \nabla_xg(x,y_2)\| \\
&\leq L_{g'}\|y_1 - y_2)\| 
\leq L_{x,2}\|y_1 - y_2\| \quad (\text{where} L_{x,2} = L_{g'})
\end{split}
\end{equation}

Therefore, $\partial_xh^{2}(x,y)$ is Lipchitz continuous in y.
\end{proof}

\begin{lemma}\label{lm:app-lweak}
If a function is $ L$-smooth, then the function is also $L$-weakly convex.    
\end{lemma}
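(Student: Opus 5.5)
We show that for every $x$ the function $q(x)=f(x)+\frac{L}{2}\|x\|^2$ is convex. This is exactly $L$-weak convexity in the sense of Definition \ref{def:pconvex} with $\rho=L$. Throughout, $L$-smoothness means $f$ is differentiable and $\nabla f$ is $L$-Lipschitz, i.e. $\|\nabla f(x)-\nabla f(y)\|\le L\|x-y\|$.

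The cleanest route is through monotonicity of the gradient, since a differentiable function is convex if and only if its gradient is a monotone operator. We have $\nabla q(x)=\nabla f(x)+Lx$. For any $x,y$,
\begin{equation*}
\langle \nabla q(x)-\nabla q(y),x-y\rangle=\langle \nabla f(x)-\nabla f(y),x-y\rangle+L\|x-y\|^2 .
\end{equation*}
By Cauchy--Schwarz and the Lipschitz property of $\nabla f$,
\begin{equation*}
\langle \nabla f(x)-\nabla f(y),x-y\rangle\ge -\|\nabla f(x)-\nabla f(y)\|\,\|x-y\|\ge -L\|x-y\|^2 .
\end{equation*}
Adding the two displays gives $\langle \nabla q(x)-\nabla q(y),x-y\rangle\ge 0$. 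Hence $\nabla q$ is monotone and $q$ is convex.

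To make the argument self-contained, we verify the monotone-gradient characterization directly rather than citing it. Restrict $q$ to a segment by setting $\varphi(s)=q(y+s(x-y))$ for $s\in[0,1]$. Then $\varphi'(s)=\langle\nabla q(y+s(x-y)),x-y\rangle$. For $s>t$, applying the monotonicity above to the points $y+s(x-y)$ and $y+t(x-y)$, whose difference is $(s-t)(x-y)$, gives $\varphi'(s)-\varphi'(t)\ge0$. So $\varphi'$ is nondecreasing, which means $\varphi$ is convex on $[0,1]$. Convexity of $q$ along every segment is exactly convexity of $q$.

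An equivalent route uses the descent lemma. $L$-smoothness gives $f(y)\ge f(x)+\langle\nabla f(x),y-x\rangle-\frac{L}{2}\|y-x\|^2$. Expanding $\frac{L}{2}\|y\|^2=\frac{L}{2}\|x\|^2+L\langle x,y-x\rangle+\frac{L}{2}\|y-x\|^2$ and adding it to both sides yields $q(y)\ge q(x)+\langle\nabla q(x),y-x\rangle$. This is the first-order characterization of convexity.

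The argument is elementary and has no real obstacle. The only points needing care are that the lower bound uses $-L$, which holds because Cauchy--Schwarz and the Lipschitz bound apply in absolute value, and that the monotonicity-to-convexity step is justified along segments as done above.
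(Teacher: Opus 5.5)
Your proof is correct, and your main route differs from the paper's. The paper uses the two-sided quadratic bound. Since $-f$ is also $L$-smooth, the descent lemma applied to $-f$ gives $f(y)\ge f(x)+\langle\nabla f(x),y-x\rangle-\frac{L}{2}\|y-x\|^2$. The paper then declares this to be $L$-weak convexity, leaving implicit the standard equivalence with Definition~\ref{def:pconvex}, i.e.\ that $f+\frac{L}{2}\|\cdot\|^2$ is convex.

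Your secondary route is exactly this argument plus the completing-the-square step. That step turns the lower quadratic bound into the first-order convexity inequality $q(y)\ge q(x)+\langle\nabla q(x),y-x\rangle$, so it is the paper's proof with the final link made explicit.

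Your primary route works with gradient monotonicity instead. Cauchy--Schwarz and the Lipschitz bound give $\langle\nabla f(x)-\nabla f(y),x-y\rangle\ge -L\|x-y\|^2$ directly. This avoids the descent lemma, which itself needs an integral argument along segments. Your one-dimensional reduction then shows that monotone $\nabla q$ implies convex $q$.

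The paper's route is shorter and also records the two-sided quadratic sandwich. Yours has the advantage that its intermediate inequality is precisely $L$-hypomonotonicity of $\nabla f$ in the sense of Definition~\ref{def:mono}. That is the form in which weak convexity is actually used later in the paper, e.g.\ in Lemma~\ref{lm:app-hypo} and the descent arguments for the Moreau envelopes.
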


\begin{proof}
We know that $f$ is $L$-smooth. So we have:
\begin{equation}
 f(y) \leq f(x) + \langle \nabla f(x),y-x\rangle + \frac{L}{2}\|y-x\|^2     
\end{equation}

We know that $-f$ is also $L$-smooth:
\begin{equation}
\begin{split}
-f(y) &\leq -f(x) - \langle \nabla f(x),y-x\rangle + \frac{L}{2}\|y-x\|^2  \\
f(y) &\geq f(x) + \langle \nabla f(x),y-x\rangle - \frac{L}{2}\|y-x\|^2     
\end{split}
\end{equation}

Therefore, $f$ is $L$-weakly convex.
\end{proof}

\begin{lemma}\label{lm:app-hypo}
Let $f$ be a $\rho$-weakly convex function and $f_{\lambda}$ be its Moreau envelope with parameter $\lambda$. Then, using $\rho$-hypomonotonity, the following result holds:
\begin{equation}
\langle \nabla f_{\lambda}(x),\partial f(x)\rangle \geq (1-\rho\lambda)\|\nabla f_{\lambda}(x)\|^2    
\end{equation}
\end{lemma}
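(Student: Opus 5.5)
The plan is to express $\nabla f_{\lambda}(x)$ through the proximal point and then apply $\rho$-hypomonotonicity of $\partial f$ between $x$ and that point. Let $\hat{x} = prox_{\lambda f}(x)$. Since $\lambda < 1/\rho$, the function $u \mapsto f(u) + \frac{\|x-u\|^2}{2\lambda}$ is $(1/\lambda - \rho)$-strongly convex, so $\hat{x}$ exists and is unique. The standard envelope identity (as in Lemma 3.1 of \cite{bohm2021variable}, already used in Lemma \ref{lm:app-inner}) then gives
\begin{equation*}
\nabla f_{\lambda}(x) = \frac{x - \hat{x}}{\lambda}.
\end{equation*}
Writing the first-order optimality condition of the prox subproblem gives $0 \in \partial f(\hat{x}) + \frac{\hat{x} - x}{\lambda}$. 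Hence $v \coloneqq \nabla f_{\lambda}(x) \in \partial f(\hat{x})$. For weakly convex $f$, the Clarke subdifferential coincides with the limiting/convex-analytic one of $f + \frac{\rho}{2}\|\cdot\|^2$ shifted by $-\rho\,\cdot$, so this inclusion is legitimate.

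Next, take any $g \in \partial f(x)$ and apply Definition \ref{def:mono} to the pair $(x,\hat{x})$ with subgradients $g$ and $v$:
\begin{equation*}
\langle g - v, x - \hat{x}\rangle \geq -\rho\|x - \hat{x}\|^2 .
\end{equation*}
Substituting $x - \hat{x} = \lambda v$ and dividing by $\lambda > 0$ gives
\begin{equation*}
\langle g, v\rangle - \|v\|^2 \geq -\rho\lambda\|v\|^2,
\end{equation*}
that is, $\langle \nabla f_{\lambda}(x), g\rangle \geq (1-\rho\lambda)\|\nabla f_{\lambda}(x)\|^2$. The inequality holds for every $g \in \partial f(x)$, and in particular for the element selected by the algorithm. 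This is exactly the claimed statement, read with the set $\partial f(x)$ interpreted elementwise.

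The main obstacle is the first step, namely justifying $\nabla f_{\lambda}(x) \in \partial f(\hat{x})$ with the Clarke subdifferential of Definition \ref{def:clarke}. I would handle it by writing $f = \tilde f - \frac{\rho}{2}\|\cdot\|^2$ with $\tilde f$ convex. Then $\partial f = \partial \tilde f - \rho\,\mathrm{id}$, where $\partial \tilde f$ is the convex subdifferential (Clarke equals convex for convex locally Lipschitz functions, and the sum rule is exact with a smooth term). The prox optimality condition is then a convex one for the strongly convex subproblem, which needs $\lambda<1/\rho$. The hypomonotonicity constant $\rho$ follows from monotonicity of $\partial\tilde f$ in the same way. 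All remaining steps are algebra.
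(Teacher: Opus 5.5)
Your proposal is correct and follows essentially the same route as the paper. Both arguments set $\hat x = prox_{\lambda f}(x)$, use $\nabla f_{\lambda}(x) = (x-\hat x)/\lambda \in \partial f(\hat x)$, and apply $\rho$-hypomonotonicity to the pair $(x,\hat x)$ with $g_y = \nabla f_{\lambda}(x)$. Your explicit assumption $\lambda<1/\rho$ and the decomposition $f = \tilde f - \frac{\rho}{2}\|\cdot\|^2$, used to justify the prox optimality condition and the Clarke-subdifferential inclusion, fill in details that the paper's proof leaves implicit.
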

\begin{proof}
Using $\rho$-hypomonotonicity we have for $g_x \in \partial f(x)$ and $g_y \in \partial f(y)$ :

\begin{equation}
 \langle g_x - g_y , x - y \rangle \geq -\rho\|x-y\|^2   
\end{equation}

Let $y = prox_{\lambda f}(x)$ and using the fact that $\nabla f_{\lambda}(x) = (x-prox_{\lambda f}(x))/\lambda$ we get :

\begin{equation}
\begin{split}
&\langle g_x - g_y , x - y \rangle \geq -\rho\|x-y\|^2 \\
&\lambda\langle g_x - g_y , \nabla f_{\lambda}(x) \rangle \geq -\rho\lambda^2\|\nabla f_{\lambda}(x)\|^2 \\
&\langle g_x , \nabla f_{\lambda}(x) \rangle \geq \langle g_y,\nabla f_{\lambda}(x)\rangle -\rho\lambda\|\nabla f_{\lambda}(x)\|^2 \\
\end{split}
\end{equation}

We know that $\nabla f_{\lambda}(x) \in \partial f(prox_{\lambda f}(x))$. Therefore, we obtain:
\begin{equation}
\begin{split}
&\langle \partial f(x) , \nabla f_{\lambda}(x) \rangle \geq (1-\rho\lambda)\|\nabla f_{\lambda}(x)\|^2 \\
\end{split}
\end{equation}

\end{proof}


\end{document}